\documentclass{article} 
\usepackage[margin=1in]{geometry}
\usepackage{times}
\usepackage{natbib}
\setcitestyle{authoryear,round,citesep={;},aysep={,},yysep={;}}

\usepackage{amsmath,amsfonts,bm}

\def\eqref#1{equation~\ref{#1}}

\def\1{\bm{1}}

\DeclareMathAlphabet{\mathsfit}{\encodingdefault}{\sfdefault}{m}{sl}
\SetMathAlphabet{\mathsfit}{bold}{\encodingdefault}{\sfdefault}{bx}{n}

\usepackage{hyperref}
\hypersetup{hidelinks,
  pdftitle={What to Preserve in Recursive Computation: A Local Predictive Sufficiency Principle},
  pdfauthor={Peilin Wang, Feng Shiyang, Hongfu Gao, Cencheng Zhao, Di Yuan, Hui Chen, Guiguang Ding}}
\usepackage{url}
\usepackage{float}
\usepackage[most]{tcolorbox}
\newtcolorbox{propbox}{
    colback=white,
    colframe=blue!55!black,
    boxrule=0.7pt,
    arc=1.5pt,
    left=6pt,
    right=6pt,
    top=5pt,
    bottom=5pt
}
\usepackage{graphicx}
\usepackage{tikz}
\usepackage{caption}
\usepackage{placeins}
\usepackage{booktabs}
\usepackage{multirow}
\usepackage{makecell}
\usepackage{wrapfig}
\usepackage{needspace}
\usepackage{tikz}
\usepackage{tikzpagenodes}
\usepackage{capt-of}
\usepackage{needspace}
\usepackage{wrapstuff}
\usepackage{tabularx}
\usepackage{amsthm}
\usepackage{tabularx}
\usepackage{array}
\usepackage{wrapfig}
\usepackage{booktabs}
\usepackage{multirow}
\usepackage{placeins}
\usepackage{wrapfig}
\DeclareCaptionType{algorithm}
\usepackage{caption}
\theoremstyle{plain}
\newtheorem{prop}{Proposition}

\theoremstyle{definition}
\newtheorem{definition}{Definition}[section]

\newtcolorbox{algobox}{
    colback=white,
    colframe=black,
    boxrule=0.5pt,
    arc=0pt,
    outer arc=0pt,
    left=4pt,
    right=4pt,
    top=4pt,
    bottom=4pt,
    boxsep=0pt,
    before skip=0pt,
    after skip=0pt
}
\usetikzlibrary{positioning,arrows.meta}
\title{
What to Preserve in Recursive Computation:\\[-0.1em]
A Local Predictive Sufficiency Principle
}

\author{Peilin Wang$^{1,*}$, Feng Shiyang$^{1,*}$, Hongfu Gao$^{2}$,\\
Cencheng Zhao$^{1}$, Di Yuan$^{1}$, Hui Chen$^{3}$, Guiguang Ding$^{3}$\\[0.6em]
{\small $^{1}$Shandong University \quad $^{2}$National University of Singapore}\\
{\small $^{3}$Tsinghua University}\\
{\small $^{*}$Equal contribution.}}
\date{}

\begin{document}

\maketitle

\begin{abstract}
Recursive computation repeatedly compresses or reuses intermediate states, creating a simple tension: information that must remain useful across longer recursive paths is also exposed to more opportunities for loss before reaching the final prediction. Existing reconstruction or local-prediction objectives provide tractable supervision, but do not ensure that the retained information remains sufficient for subsequent recursive computation. We identify local predictive sufficiency with recursive predictive closure: controlling local predictive deficiencies at individual interfaces controls the resulting discrepancy at the root. We then turn this principle into a tractable training procedure. Starting from a variational characterization, we derive finite predictive tests and an empirical predictive deficiency that measures predictive value retained across compression. Its predictive sensitivities define margin-relaxed half-space constraints on parameter updates, and we project the host optimizer's proposed update onto their intersection only when predictive preservation would otherwise be violated. Across temporal graphs, language memory, vision-language-action control, and recursive self-improvement, the method matches or improves the corresponding host models under matched compression budgets, with larger gains under heavier recursive or memory demands, while better preserving predictive information across successive transformations. Crucially, the same task-agnostic predictive-preservation principle is instantiated across all four settings through host-compatible interventions while keeping the endpoint task, backbone, and evaluation protocol fixed. These results establish predictive preservation at recursive interfaces as a general training principle for recursive compression.

\end{abstract}

\section{Introduction}
\label{sec:introduction}

Recursive computation is widely used across modern learning systems, including 
graph and tree-structured propagation 
\citep{Goller1996,Scarselli2009,Tai2015,Wang2025TaskTrees,Feng2025GWM,Guo2025ToM}, 
recurrent context compression, embodied memory, and recursive self-improvement 
\citep{Kim2024CCM,Shi2026MemoryVLA,kim2026metan}. Despite their different 
implementations, these systems share a common structure: information is 
repeatedly summarized into an intermediate state, passed through a recursive 
interface, and reused by subsequent computation. Earlier information therefore 
undergoes more rounds of aggregation and compression, and long-range information 
must pass through more recursive interfaces before affecting the final prediction. 
Thus, the information that must remain useful over longer paths is 
exposed to more opportunities for loss along the way. Compression may attenuate information 
\citep{Alon2021,Topping2022,Banerjee2022Oversquashing}.
 
At each recursive interface, the model therefore faces a concrete question: 
what should the intermediate state retain? Reconstruction objectives encourage 
it to preserve information needed to recover its input \citep{Socher2011}, 
while predictive-state approaches represent state through predictions of future 
observations \citep{Littman2001,Kulesza2015Insufficient,Hefny2018}. 
Information-theoretic criteria characterize predictive information more directly 
\citep{Tishby1999,TishbyZaslavsky2015,EstellaAguerri2018DIB,Poole2019Variational}, 
but are difficult to estimate and optimize for high-dimensional intermediate 
states. Recursive computation therefore calls for a tractable criterion that 
specifies what each interface should preserve for subsequent prediction. 
 
\textbf{Local Predictive Sufficiency.} 
A natural approach is to ask every intermediate state to preserve information 
about the final prediction. Yet the states most exposed to repeated compression 
are also those for which the final target is separated from the 
recursive interface by more subsequent transformations, making this preservation 
signal increasingly indirect \citep{PolyanskiyWu2017}. 
We instead assess preservation where compression occurs. At each recursive 
interface, we ask whether the compressed state retains the predictive information 
available before compression for the adjacent predictive pair at that interface. 
We call this property \emph{local predictive sufficiency}. Under the recursive 
model in Sec.~\ref{sec:problem-formulation}, recursive predictive closure connects 
these local conditions to protection of the root target. We then derive an 
empirical predictive deficiency that approximates the predictive value lost 
across compression. In projection-based instantiations, the optimizer first 
produces a task-driven proposal, and predictive feasibility projection modifies 
that proposal only when it violates the margin-relaxed predictive constraints. 
The broader framework changes how information is retained at recursive interfaces 
while leaving the endpoint task and evaluation protocol unchanged.
 
We evaluate the framework in four settings where recursive state compression 
arises naturally: temporal graphs, language memory, vision-language-action 
control, and recursive self-improvement. Across these settings, preserving 
predictive structure at recursive interfaces consistently improves downstream 
performance, with larger gains when relevant information must survive longer 
recursive chains. On DailyDialog, our method consistently improves over the 
corresponding CCM-merge host across the tested memory depths. On LIBERO-Mem T3, 
it increases full-task success from $1.7\pm2.9\%$ to $11.7\pm2.9\%$ across 
three independently trained seeds. Similar depth-dependent gains appear in 
temporal graphs and across recursive self-improvement iterations. Mechanism 
analyses further show slower attenuation of the predictive statistic across 
successive recursive interfaces. 
 
Our contributions are threefold. \textit{(1) Theory.} We formulate local 
predictive sufficiency and establish recursive predictive closure and root 
predictive stability. \textit{(2) Learning.} We derive an empirical predictive 
deficiency and use its predictive sensitivities to constrain task proposals 
that would degrade predictive structure across recursive interfaces. 
\textit{(3) Evaluation.} We validate the principle in four recursive settings 
and show that its benefits become more pronounced as information must survive 
longer recursive chains. Together, these contributions provide a common view of 
what recursive intermediate states should preserve and a practical mechanism for 
protecting it during learning.

\section{Preliminaries and Problem Formulation}
\label{sec:problem-formulation}

\subsection{Recursive Compression Setup}

\noindent\textbf{Recursive setting.}
We unroll recursive computation over time into a finite rooted tree with node set \(V\) and root \(\rho\), where each \(v\in V\setminus\{\rho\}\) denotes a recursive interface and \(\rho\) denotes the root target site. We write \(\operatorname{pa}(v)\) for the next interface receiving information from \(v\). At each non-root interface \(v\), \(U_v\) denotes the state assembled from the information available in the corresponding subtree immediately before compression. The variable \(C_v\) denotes the observed interface information that is retained directly across this update without passing through the compression map, and \(C:=(C_v)_{v\in V}\) collects these retained contexts. A measurable compression map \(\Gamma\) produces the width-\(r\) representation \(Z_v^\Gamma:=\Gamma(U_v)\), which is then passed to the remaining recursive computation.

\medskip
\noindent\textbf{Predictive task.}
Each recursive interface $v$ is associated with an observed predictive event
$Y_v\in\mathcal{Y}$,\footnote{
Examples include interaction outcomes, subsequent responses, and recorded
actions. The same observed event may be associated with multiple recursive
interfaces, while intermediate transformations without an associated event are
included in the recursive transition between adjacent recursive interfaces.
}
with $Y_\rho$ denoting the root target. Our goal is to learn
$\Gamma$ such that repeated compression preserves the predictive information
required for $Y_\rho$. The Information Bottleneck frames compression relative
to a relevance variable \citep{Tishby1999,TishbyZaslavsky2015}, and future
outcomes provide a natural choice for predictive representations
\citep{Still2014,MarzenCrutchfield2016}. We formalize this requirement through
predictive sufficiency.

\begin{definition}[Predictive Sufficiency]
\label{def:predictive-sufficiency}
Following statistical sufficiency \citep{CoverThomas2006}, $Z_v^\Gamma$ is \emph{predictively sufficient} for target $Y$ given $C$ if
\[
\mathcal{L}(Y\mid U_v,C)
=
\mathcal{L}(Y\mid Z_v^\Gamma,C)
\qquad \text{a.s.},
\]
or equivalently,
\[
I(Y;U_v\mid Z_v^\Gamma,C)=0.
\]
\end{definition}

\subsection{Predictive Preservation under Recursive Structure}

For each \(v\in V\), let \(Y_v\) denote the predictive event associated with that recursive stage, with \(Y_\rho\) denoting the root target. The root target \(Y_\rho\) is the global relevance variable, since it is ultimately the prediction that compression should preserve. For a deeper state, however, its relation to \(Y_\rho\) is mediated by a longer sequence of recursive transformations. By the data-processing principle, mutual information cannot increase along such a chain and may contract under repeated transformations \citep{PolyanskiyWu2017}. Hence, as recursive depth grows, the root target can become an increasingly indirect guide for deciding what information an intermediate state should retain.

Recursive computation provides a way to exploit nearby predictive events. Once unrolled as a rooted tree, information from a subtree reaches the rest of the computation through successive parent--child interfaces, connecting nearby predictive events along the same rootward path. Proximity alone does not determine relevance to the root target. Conditional on \(C\), the tree structure induces a local separation: subtrees that branch apart interact only through their shared ancestral path \citep{Lauritzen1996}. At each recursive interface, incoming child representations are aggregated into \(U_v\) before recursive propagation. We require the pre-compression interface state to screen the adjacent predictive events from those incoming child states once \((U_v,C)\) is given. Appendix~\ref{app:recursive-closure} states this condition as (M3), together with the predictive-tree condition (M1) and recursive host construction (M2). When the complete interface record $U_v$ contains the incoming child tuple, (M3) holds automatically. This naturally motivates using adjacent predictive events along each rootward path. For every non-root recursive interface \(v\), define the adjacent predictive pair \(S_v:=(Y_v,Y_{\operatorname{pa}(v)})\). Applying Definition~\ref{def:predictive-sufficiency} to $S_v$ gives local predictive sufficiency. Consecutive pairs share an intermediate predictive event, allowing predictive sufficiency to compose recursively toward the root.

\begin{prop}[Recursive Predictive Closure]
\label{prop:recursive-closure}
Suppose the recursive computation satisfies (M1)--(M3) of Appendix~\ref{app:recursive-closure}.
Let
\(U := (U_v)_{v\neq\rho}\) and
\(Z^\Gamma := (Z_v^\Gamma)_{v\neq\rho}\).
Then

$$
\left[
I(S_v;U_v\mid Z_v^\Gamma,C)=0,
\quad
\forall\,v\in V\setminus\{\rho\}
\right]
\Longrightarrow
I(Y_\rho;U\mid Z^\Gamma,C)=0.
$$

\end{prop}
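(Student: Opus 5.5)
The plan is to reduce the global statement to the local hypotheses by a chain-rule decomposition of $I(Y_\rho;U\mid Z^\Gamma,C)$ along the rooted tree, processed from the leaves upward. Fix an ordering $v_1,\dots,v_m$ of $V\setminus\{\rho\}$ in which every child precedes its parent. By the chain rule,
\[
I(Y_\rho;U\mid Z^\Gamma,C)=\sum_{k=1}^m I\bigl(Y_\rho;U_{v_k}\mid U_{v_1},\dots,U_{v_{k-1}},Z^\Gamma,C\bigr).
\]
It therefore suffices to show that each summand vanishes. Equivalently, we show that replacing $U_{v_k}$ by $Z_{v_k}^\Gamma$ in the conditioning never changes the conditional law of $Y_\rho$.

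The first key step uses (M1), the predictive-tree condition, together with the local separation conditional on $C$. Together these should give a Markov structure along the rootward path $v=w_0,w_1=\operatorname{pa}(v),\dots,w_\ell=\rho$. Under this structure, $Y_\rho$ depends on $U_v$, given $(Z_v^\Gamma,C)$ and the other subtrees' states, only through the predictive events $Y_{w_1}$ and the remaining recursive computation. Concretely, we aim for the factorization
\[
Y_\rho \perp U_v \mid (S_v, Z_v^\Gamma, C, \text{states outside the subtree of } v).
\]
The second step uses (M2): the downstream computation receives information from the subtree of $v$ only through $Z_v^\Gamma$. The third step uses (M3): $U_v$ screens the adjacent events $S_v$ from the incoming child states, so the deeper states add nothing once $U_v$ is known.

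Given this factorization, we combine it with the local hypothesis $I(S_v;U_v\mid Z_v^\Gamma,C)=0$ through the contraction property of conditional independence:
\[
\bigl(A\perp B\mid D\bigr)\ \text{and}\ \bigl(A'\perp B\mid A,D\bigr)\ \Longrightarrow\ \bigl((A,A')\perp B\mid D\bigr).
\]
Here we take $A=S_v$, $B=U_v$, $D=(Z_v^\Gamma,C)$, and $A'=Y_\rho$. The result is $Y_\rho\perp U_v\mid Z_v^\Gamma,C$, and weak union then adds the remaining conditioning variables.

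Because the pairs $S_v$ and $S_{\operatorname{pa}(v)}$ share the event $Y_{\operatorname{pa}(v)}$, we induct on height. The induction hypothesis is that, for the subtree rooted at $w$, $Y_w$ and everything downstream of $w$ are conditionally independent of all $U$'s in that subtree given the compressed states and $C$. The inductive step chains the local sufficiency of $S_w$ with the hypothesis for the children of $w$. Applied at the root's children, this yields each summand equal to zero.

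The main obstacle is the first factorization step: justifying, from (M1)--(M3) as stated in the appendix, that conditioning on $U_v$ versus $Z_v^\Gamma$ does not open paths to $Y_\rho$ through sibling subtrees or through $C$. The risk is that conditioning on $C$, which includes retained contexts from all nodes, could act as a collider. I would first check this with d-separation on the unrolled graph. If d-separation fails, I would strengthen the induction so that it carries the joint law of all outside-subtree variables rather than $Y_\rho$ alone.
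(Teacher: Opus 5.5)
\textbf{Gap: the weak-union step.} Grant your factorization in the form your contraction actually needs, $Y_\rho\perp U_v\mid(S_v,Z_v^\Gamma,C)$; note that the version you wrote also conditions on the outside states, which contraction with $D=(Z_v^\Gamma,C)$ cannot use. Even then, contraction yields only $Y_\rho\perp U_v\mid(Z_v^\Gamma,C)$. Your next move, that weak union ``then adds the remaining conditioning variables,'' is invalid. Weak union only moves variables from the independent side into the conditioning set, and conditional independence is not monotone in the conditioning set. Each chain-rule summand conditions on all of $Z^\Gamma$, hence on the incoming child states $R_v^\Gamma=Z^\Gamma_{\operatorname{ch}(v)}$, and in your child-before-parent order also on every descendant $U$. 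These lie inside the subtree of $v$ and appear on neither side of what you derived.

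Conditioning on $R_v^\Gamma$ can create dependence. Take a path tree with $\operatorname{pa}(w)=v$ and $\operatorname{pa}(v)=\rho$, and set:
\begin{itemize}
\item $C$ constant, and $Y_v=Y_\rho$ a fair bit;
\item $w$ an uncompressed leaf with $U_w=Z_w^\Gamma=N$, where $N$ is an independent fair bit;
\item $U_v=Y_v\oplus N$, and $\Gamma$ constant.
\end{itemize}
This satisfies (M1)--(M2) and every local hypothesis, and your factorization and contraction both hold. Yet $I(Y_\rho;U_v\mid U_w,Z^\Gamma,C)=I(Y_\rho;Y_\rho\oplus N\mid N)=\log 2$, so the global claim fails. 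What is violated is (M3), since $I(S_v;R_v^\Gamma\mid U_v,C)=\log 2$. In your plan (M3) is only a verbal remark and never enters the derivation, which is why the derivation cannot close. Your induction on height has the same problem: chaining local sufficiency at $w$ with the children's hypothesis requires exactly the conditioning on $R_w^\Gamma$ that your contraction omits. By contrast, your worry about $C$ acting as a collider is not the real obstacle, since (M1)--(M2) define the whole law conditionally on $C=c$.

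\textbf{How to close it.} The missing idea is the paper's Step 1, augmented local closure. Because $Z_v^\Gamma$ is a function of $U_v$, (M3) gives $I(S_v;R_v^\Gamma\mid U_v,Z_v^\Gamma,C)=0$. The chain rule combined with the hypothesis then gives $I(S_v;U_v,R_v^\Gamma\mid Z_v^\Gamma,C)=0$, hence $U_v\perp S_v\mid(Z_v^\Gamma,R_v^\Gamma,C)$. Your local Markov statement must likewise include $R_v^\Gamma$. Since $u_v$ enters the joint law only through $\widetilde K_v(du_v,dz_v\mid s_v,r_v,c)$, $U_v$ is independent of all remaining variables given $(S_v,Z_v^\Gamma,R_v^\Gamma,C)$. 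Contraction with $D=(Z_v^\Gamma,R_v^\Gamma,C)$ and $A'$ equal to all remaining variables, followed by decomposition and weak union, then does make every summand vanish. So your chain-rule route can be repaired.

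The paper packages the same local fact differently. It builds a reconstruction kernel $A_v(du_v\mid z_v,r_v,c)$ that does not depend on $s_v$. It then proves by induction on subtree height that the law of $(U_{\mathcal T_v},Z_{\mathcal T_v})$ given $(Y_{\operatorname{pa}(v)},C)$ factorizes as $A_{\mathcal T_v}(du\mid z,c)\,\mathsf M_v^\Gamma(dz\mid y_{\operatorname{pa}(v)},c)$. At the root this gives $Y_\rho\perp U\mid(Z^\Gamma,C)$ directly, with no chain-rule decomposition needed.
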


Proposition~\ref{prop:recursive-closure} shows that root-target sufficiency follows from local predictive sufficiency at the recursive interfaces. If
\(Z_v^\Gamma\) preserves the information in \(U_v\) required for \(S_v\) at every recursive interface, the shared predictive events allow these conditions to compose toward
\(Y_\rho\). The proof is given in Appendix~\ref{app:recursive-closure}. This motivates the local predictive deficiency
\(I(S_v;U_v\mid Z_v^\Gamma,C)\) under a prescribed compression width. Driving
these local predictive deficiencies to zero recovers the root-level guarantee. Since the
criterion depends on unknown conditional laws, direct estimation is difficult
for finite, high-dimensional representations.
Section~\ref{sec:predictive-compression-learning} develops a tractable
empirical realization.

\section{Learning for Predictive Preservation}
\label{sec:predictive-compression-learning}

\subsection{Predictive Deficiency Estimation}

Local predictive sufficiency specifies what compression should preserve:
replacing the pre-compression state $U_v$ with $Z_v^\Gamma$ should retain
the information needed to predict the adjacent predictive pair $S_v$. The exact
variational criterion $\inf_{q_v}\mathcal V_v(\Gamma,q_v)$ captures this
requirement, but depends on an unknown conditional law and is unavailable
during training. We make it tractable by restricting prediction to finite
tests, profiling the predictive readout, and estimating the resulting
population quantity from finite samples. Figure~\ref{fig:lpse_construction}
summarizes this construction; formal derivations and approximation bounds are
given in Appendices~\ref{app:local-variational-characterization}
and~\ref{app:tractable-local-objective}.

\begin{figure}[H]
    \centering
    \includegraphics[width=\linewidth]{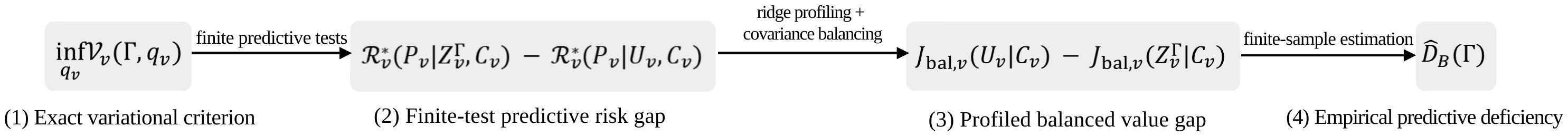}
    \caption{
    \textbf{From the exact predictive criterion to empirical predictive deficiency.}  
The construction uses finite tests, covariance-balanced readouts, and estimates the population quantity.
}
    \label{fig:lpse_construction}
\end{figure}
\vspace{0.4em}

\textbf{Restricting to finite predictive tests.}
Because the conditional law of $S_v$ is unavailable, we replace the full-law
comparison by the fixed test representation
$P_v := T_S\!\left(\chi(C_v)\otimes\phi(Y_v,Y_{\mathrm{pa}(v)})\right)$,
where $\chi$ and $\phi$ are fixed feature maps and $T_S$ is a bounded-width
sketch. For a candidate representation $X_v$, let
$\mathcal R_v^\star(P_v\mid X_v,C_v)
:= \inf_g \mathbb E[\|P_v-g(X_v,C_v)\|_2^2]$
denote its optimal predictive risk. The finite-test predictive risk gap
$\Delta_v^{\mathrm{test}}(\Gamma)
:= \mathcal R_v^\star(P_v\mid Z_v^\Gamma,C_v)
-\mathcal R_v^\star(P_v\mid U_v,C_v)$
measures the predictive loss induced by compression on this fixed test family.
The localization from the retained global context $C$ to the visible $C_v$ is
characterized in Appendix~\ref{app:tractable-local-objective}.

\textbf{Profiling the predictive readout.}
The finite-test risk still optimizes over unrestricted measurable predictors.
We therefore restrict the readout to ridge regression and profile out its
parameters analytically. Full-covariance balancing then yields the profiled
balanced predictive value $J_{\mathrm{bal},v}$, giving
$\Delta_v^{\mathrm{bal}}(\Gamma)
:= J_{\mathrm{bal},v}(U_v\mid C_v)
- J_{\mathrm{bal},v}(Z_v^\Gamma\mid C_v)$.
Here $\mathcal R_v^\star$ is a risk to be minimized, whereas
$J_{\mathrm{bal},v}$ is a predictive value to be maximized, hence the reversed difference.

\textbf{Estimating predictive deficiency from finite samples.} 
We replace the remaining population covariance operators by regularized 
empirical estimates over an optimization window $\mathcal B$, so that the
predictive geometry is estimated from a shared set of recursive realizations,
which then yields
$\widehat{\mathcal D}_{\mathcal B}(\Gamma) 
:= \widehat J_{\mathrm{bal},\mathcal B}(U) 
- \widehat J_{\mathrm{bal},\mathcal B}(Z^\Gamma)$.
The pre-compression term is treated as a stop-gradient reference, so the 
first-order sensitivities used for optimization are taken through 
$Z^\Gamma$ while the reference predictive value remains fixed within the
window. Full-covariance balancing is used in the main method, while the 
diagonal form is evaluated only as a covariance approximation in the 
ablation study. Section~\ref{sec:predictive-constraints} uses these 
predictive sensitivities to constrain task proposals.

\subsection{Margin-Relaxed Predictive Constraints}
\label{sec:predictive-constraints}

\begin{wrapstuff}[
    r,
    width=0.54\textwidth,
    hsep=1.0em,
    vsep=0.2pt,
    type=figure
]
    \centering
    \includegraphics[width=\linewidth]{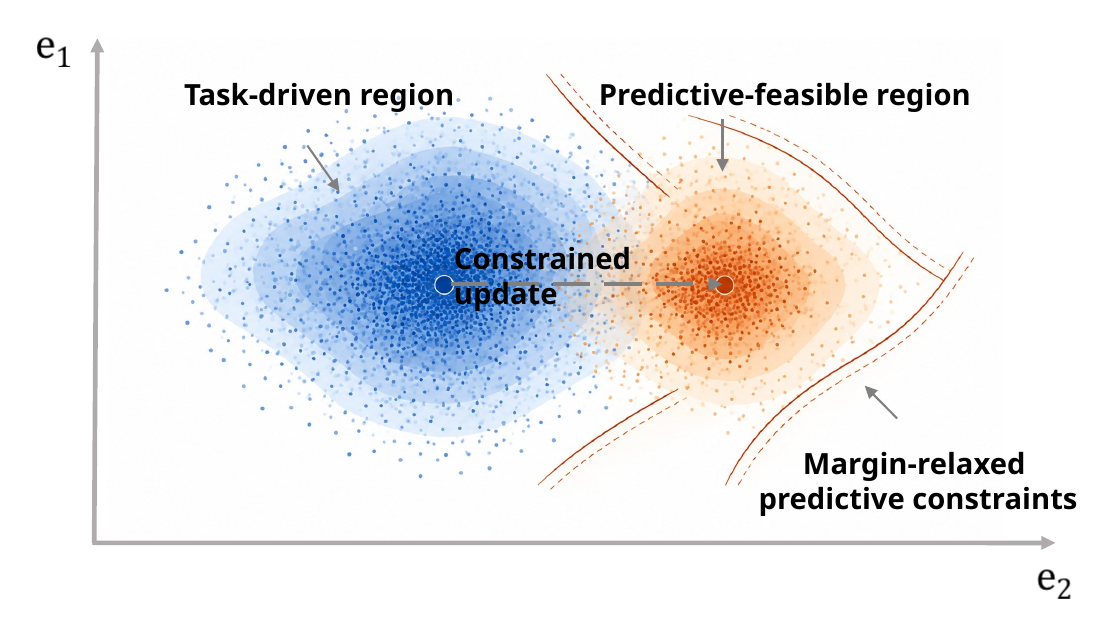}
    \captionsetup{skip=1pt}
    \caption{\textbf{Margin-relaxed predictive constraints.}
    Task proposal and constrained update in the predictive-feasible region.}
    \label{fig:predictive-feasibility}
\end{wrapstuff}

Task optimization may improve the root-target objective while increasing 
empirical predictive deficiency at intermediate recursive interfaces. 
We therefore constrain task proposals using these predictive 
sensitivities, with a margin for task adaptation. For a realized compressed 
state $z_i^\Gamma$, let 
$a_i:=\nabla_{z_i^\Gamma}\widehat{\mathcal D}_{\mathcal B}(\Gamma)$ 
denote its predictive sensitivity in representation space. It describes how 
the empirical predictive deficiency changes under a perturbation of 
that compressed state. Pulling this sensitivity back through the compressor 
Jacobian gives 
$q_i:=(\nabla_\Gamma z_i^\Gamma)^\top a_i$ 
in the trainable parameter space. For a small update $d$, 
$q_i^\top d$ measures first-order contribution to 
the change in empirical predictive deficiency. Thus, $a_i$ identifies the 
predictive sensitivity of the compressed representation, while $q_i$ is the 
corresponding predictive sensitivity in the trainable parameter space. Because 
all sensitivities are differentiated from the same window-level statistic 
$\widehat{\mathcal D}_{\mathcal B}$, they reflect a shared empirical surrogate, not distinct population $\delta_v$ values.

We aggregate these local directions at the granularity of the predictive 
constraints within the current optimization window. For group 
$\mathcal G_g$, let 
$\bar q_g:=\sum_{i\in\mathcal G_g}q_i$ and 
$h_g:=\bar q_g/\|\bar q_g\|_2$ when $\bar q_g\ne0$; zero groups impose no 
constraint. In temporal graphs, each group contains realizations from one 
computation tree and one interface type, so each constraint captures their 
joint first-order effect without introducing additional depth-dependent 
weighting. Stacking $h_g^\top$ gives 
$H_t\in\mathbb R^{m\times p}$. For the task proposal $t_t$, define 
$b_t:=\kappa\|t_t\|_2\mathbf 1_m$ and 
$\mathcal F_t:=\{d:H_td\le b_t\}$. Each row of $H_t$ limits motion along one 
predictive sensitivity, and their intersection defines the 
predictive-feasible region. The tolerance $\kappa\ge0$ controls the allowed 
first-order increase in predictive deficiency. 
Figure~\ref{fig:predictive-feasibility} shows how the margin-relaxed predictive constraints restrict the task proposal to yield a predictive-feasible update.

\Needspace{11\baselineskip}
\begin{prop}[Dual Characterization of Predictive Feasibility Projection]
\label{prop:predictive-feasibility}
The predictive feasibility projection of the task proposal onto $\mathcal F_t$ has a unique constrained update
$d_t^\star$. There exists a nonnegative dual optimal multiplier, and every
dual optimum yields the same $d_t^\star$. Moreover, the predictive correction
lies in the span of the predictive constraint directions. Specifically,
\[
\alpha_t^\star
\in
\underset{\alpha\ge0}{\operatorname{argmax}}
\left\{
-\frac12\alpha^\top(H_tH_t^\top)\alpha
+\alpha^\top(H_tt_t-b_t)
\right\},
\qquad
d_t^\star=t_t-H_t^\top\alpha_t^\star .
\]
\end{prop}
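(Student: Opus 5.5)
The plan is to treat the projection as the strictly convex quadratic program
\[
\min_{d\in\mathbb R^p}\ \tfrac12\|d-t_t\|_2^2\quad\text{s.t.}\quad H_td\le b_t ,
\]
and to read off all three claims from standard convex duality.

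\textbf{Existence and uniqueness.} First I show that $\mathcal F_t$ is nonempty. Since $b_t=\kappa\|t_t\|_2\mathbf 1_m\ge 0$, the point $d=0$ satisfies $H_t0=0\le b_t$. The set $\mathcal F_t$ is also closed and convex, being a finite intersection of half-spaces. Projecting onto a nonempty closed convex set in a Hilbert space gives a unique minimizer. Concretely, the objective is continuous and coercive, so a minimizer exists, and it is $1$-strongly convex, so the minimizer is unique. Call it $d_t^\star$.

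\textbf{Existence of a dual multiplier.} The constraints are affine and the feasible set is nonempty. The refined Slater condition for linear inequality constraints therefore holds, so strong duality holds and a dual optimum $\alpha_t^\star\ge0$ exists. Feasibility alone suffices here, so no strict feasibility is needed; this matters when $\kappa=0$ or $t_t=0$, where $b_t=0$ and $0$ is not strictly feasible.

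\textbf{Computing the dual.} Form the Lagrangian $L(d,\alpha)=\tfrac12\|d-t_t\|^2+\alpha^\top(H_td-b_t)$. Minimizing in $d$ gives $d=t_t-H_t^\top\alpha$. Substituting back gives the dual function
\[
-\tfrac12\alpha^\top H_tH_t^\top\alpha+\alpha^\top(H_tt_t-b_t),
\]
which is exactly the displayed dual objective.

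\textbf{Every dual optimum gives the same primal point.} Under strong duality, any primal optimum $d^\star$ and any dual optimum $\alpha^\star$ form a saddle point. Hence $d^\star$ minimizes $L(\cdot,\alpha^\star)$. Because $L(\cdot,\alpha^\star)$ is strictly convex, this minimizer is unique and equals $t_t-H_t^\top\alpha^\star$. It follows that $t_t-H_t^\top\alpha^\star=d_t^\star$ for every dual optimum, even though $\alpha^\star$ itself need not be unique when $H_tH_t^\top$ is singular.

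\textbf{Span of the correction.} The correction is $t_t-d_t^\star=H_t^\top\alpha_t^\star=\sum_g\alpha_{t,g}^\star h_g$. It therefore lies in the span (indeed the conic hull) of the constraint directions. Rows with $\bar q_g=0$ are zero rows, and these contribute nothing to the correction.

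\textbf{Main obstacle.} The step needing care is justifying strong duality and dual attainment without strict feasibility, and the degenerate case $b_t=0$ is where this bites. I will rely on the polyhedral case of the constraint qualification; equivalently, the KKT conditions are necessary for linearly constrained convex programs. An alternative route is to note that the dual is a concave quadratic bounded above on $\alpha\ge0$, since it is bounded above by the primal value by weak duality. By the Frank–Wolfe theorem, such a quadratic attains its supremum on a polyhedron.
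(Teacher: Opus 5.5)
Your proposal is correct and follows the same route as the paper. Feasibility of $d=0$ (from $b_t\ge 0$), together with strict convexity and coercivity, gives the unique projection $d_t^\star$, and Lagrangian stationarity $d=t_t-H_t^\top\alpha$ yields both the displayed dual and the span (indeed conic-hull) form of the correction. You are more careful than the paper, which only asserts the dual characterization and the primal uniqueness: your appeal to the polyhedral (refined Slater) constraint qualification for dual attainment without strict feasibility, and your saddle-point argument that $t_t-H_t^\top\alpha^\star=d_t^\star$ for every dual optimum, supply the steps the paper leaves implicit.
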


Proposition~\ref{prop:predictive-feasibility} gives a tractable
characterization of the constrained update. The
correction is restricted to the span of the predictive constraint directions,
while the dual optimization is carried out in the $m$-dimensional constraint
space rather than the full $p$-dimensional parameter space. The primal--dual
derivation, feasibility properties, and computational complexity are given
in Appendix~\ref{app:tractable-local-objective}; solver and feasibility
settings are reported in Appendix~\ref{app:implementation}.
The dual characterization permits any nonnegative $b_t$; $b_t=\kappa\|t_t\|_2\mathbf 1_m$ is our full-covariance choice.

\subsection{Predictive Feasibility Optimization}
\label{sec:predictive-optimization}

\begin{wrapfigure}{r}{0.54\textwidth} 
    \vspace{-0.7em} 
    \small 
 
    \begin{minipage}{\linewidth} 
 
    \hrule 
    \vspace{0.18em} 
 
    \captionof{algorithm}{Predictive Feasibility Optimization} 
    \label{alg:predictive-feasibility} 
 
    \vspace{-0.35em} 
    \hrule 
    \vspace{0.18em} 
 
    \noindent\textbf{Require:} 
 
        Window $\mathcal B$, compressor $\Gamma_t$, 
        optimizer $\mathcal O$ with state $s_t$, tolerance $\kappa$ 
 
        \par\noindent\textbf{Ensure:} Updated compressor $\Gamma_{t+1}$\par\begin{enumerate} 
 
        \item Run the host forward pass and cache $(P,U,Z^\Gamma)$ 
 
        \item Construct the grouped predictive constraints $(H_t,b_t)$ 
        from the window statistics and tolerance $\kappa$ 
 
        \item Obtain the task proposal 
        $t_t \gets 
        \Delta_{\mathcal O} 
        (\Gamma_t,s_t,\nabla_\Gamma\mathcal L_{\mathrm{task}})$ 
 
        \item Solve for $d_t^\star$ using 
        Proposition~\ref{prop:predictive-feasibility} 
 
        \item Advance $s_t$ once and set 
        $\Gamma_{t+1}\gets\Gamma_t+d_t^\star$ 
 
    \end{enumerate} 
 
    \vspace{0.15em} 
    \hrule 
 
    \end{minipage} 
 
    \vspace{-0.6em} 
\end{wrapfigure}

Algorithm~\ref{alg:predictive-feasibility} summarizes the training procedure.
Within each optimization window, the host forward pass provides the predictive
statistics, from which the interface sensitivities are pooled into the grouped
constraint matrix $H_t$ defined in Section~\ref{sec:predictive-constraints}.
The host optimizer produces the task proposal $t_t$ on the trainable
compressor parameters. Proposition~\ref{prop:predictive-feasibility} then
maps $(H_t,t_t)$ to the constrained update $d_t^\star$, which is written back
to $\Gamma$. The host forward computation and task objective remain unchanged.

\noindent\textbf{Optimization procedure.}
The optimizer preview constructs the task proposal $t_t$ without
advancing its state. Predictive feasibility projection then returns the closest constrained update
in the predictive-feasible region defined by $(H_t,b_t)$.
The optimizer state is advanced once per window, and the resulting constrained
update $d_t^\star$ is written back to the compressor parameters.

\begin{figure}[!t]
    \centering
    \includegraphics[width=\linewidth]{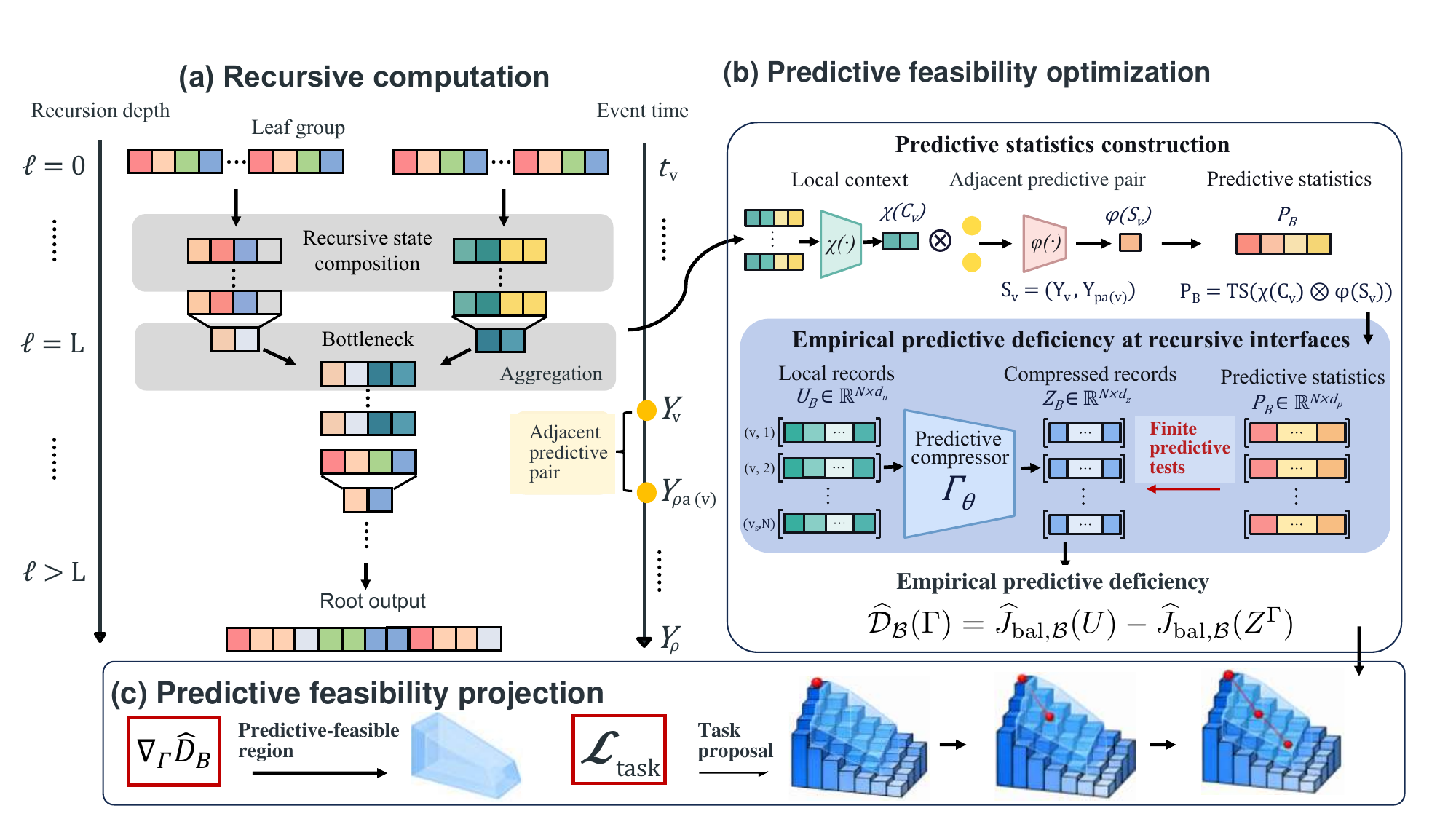}
    \caption{
    \textbf{Predictive feasibility optimization for recursive compression.}
    Predictive sensitivities constrain the task proposal before
    parameter write-back.
    }
    \label{fig:recursive-optimization}
\end{figure}

\subsection{Properties and Predictive Guarantees}

Local predictive deficiencies admit a quantitative global consequence.
We now relate accumulated local predictive deficiencies to the predictive law of the root target.

\begin{prop}[Root Predictive Stability]
\label{prop:root-predictive-stability}
Under (M1)--(M3) of Appendix~\ref{app:recursive-closure}, with $C$ the full
retained context and $Z_v^\Gamma=U_v$ at any uncompressed interface,
\[
\mathbb{E}\!\left[
\operatorname{TV}\!\left(
\mathcal{L}(Y_\rho\mid U,C),
\mathcal{L}(Y_\rho\mid Z^\Gamma,C)
\right)
\right]
\le
\left\{
\frac{1}{2}
\sum_{v\neq\rho}
I\!\left(
S_v;U_v
\mid
Z_v^\Gamma,C
\right)
\right\}^{1/2}.
\]
\end{prop}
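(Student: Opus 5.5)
The plan is to reduce the total-variation gap to a single conditional mutual information at the root, bound that quantity by the sum of the local deficiencies using the same chain-rule decomposition that underlies Proposition~\ref{prop:recursive-closure}, and then pass from mutual information to total variation with Pinsker's inequality and Jensen's inequality.

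\emph{Step 1: from total variation to root information.} Since $Z^\Gamma$ is a measurable function of $U$, we have
\[
\mathcal L(Y_\rho\mid Z^\Gamma,C)=\mathbb E\bigl[\mathcal L(Y_\rho\mid U,C)\,\big|\,Z^\Gamma,C\bigr].
\]
Hence
\[
\mathbb E\bigl[\mathrm{KL}\bigl(\mathcal L(Y_\rho\mid U,C)\,\|\,\mathcal L(Y_\rho\mid Z^\Gamma,C)\bigr)\bigr]=I(Y_\rho;U\mid Z^\Gamma,C).
\]
Pinsker's inequality gives pointwise $\mathrm{TV}\le\sqrt{\tfrac12\mathrm{KL}}$. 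Jensen's inequality for the concave square root then yields
\[
\mathbb E[\mathrm{TV}]\le\bigl\{\tfrac12 I(Y_\rho;U\mid Z^\Gamma,C)\bigr\}^{1/2}.
\]
At uncompressed interfaces, where $Z_v^\Gamma=U_v$, the corresponding local term vanishes identically, so these interfaces contribute nothing and need no separate treatment.

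\emph{Step 2: quantitative closure.} It remains to show
\[
I(Y_\rho;U\mid Z^\Gamma,C)\le\sum_{v\ne\rho}I(S_v;U_v\mid Z_v^\Gamma,C).
\]
This is the quantitative form of Proposition~\ref{prop:recursive-closure}. I would redo that proof while tracking the residuals instead of setting them to zero.

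Order the non-root interfaces $v_1,\dots,v_n$ so that each child precedes its parent, for example in post-order. Replace the $U_{v_k}$ by the $Z_{v_k}^\Gamma$ one at a time, forming the hybrid tuples $W_k:=(Z_{v_1}^\Gamma,\dots,Z_{v_k}^\Gamma,U_{v_{k+1}},\dots,U_{v_n})$. Since $W_n$ is a function of $U$ and each $W_k$ retains $Z_{v_k}^\Gamma$ alongside the remaining coordinates, the chain rule gives
\[
I(Y_\rho;U\mid Z^\Gamma,C)\le\sum_k I\bigl(Y_\rho;U_{v_k}\mid W_k,C\bigr)
\]
(or an equality with appropriately nested conditioning sets).

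\emph{Step 3: bound each term by its local deficiency.} For each term I would use (M1)--(M3). Condition (M1) is the tree-Markov structure, under which disjoint subtrees interact only through their ancestral path given $C$. Condition (M3) says that $(U_v,C)$ screens $S_v$ from the incoming child states. Together they should give the chain $U_{v_k}\to (S_{v_k},Z_{v_k}^\Gamma,\text{rest})\to Y_\rho$. Given the remaining variables, the dependence of $Y_\rho$ on $U_{v_k}$ should then pass through the adjacent pair $S_{v_k}=(Y_{v_k},Y_{\operatorname{pa}(v_k)})$. The data-processing inequality would give
\[
I(Y_\rho;U_{v_k}\mid W_k,C)\le I(S_{v_k};U_{v_k}\mid W_k,C).
\]
A further conditional-independence step should then remove the extra conditioning coordinates and reduce this to $I(S_{v_k};U_{v_k}\mid Z_{v_k}^\Gamma,C)$. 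Here one uses that, given $(Z_{v_k}^\Gamma,C)$, the other coordinates of $W_k$ are independent of $U_{v_k}$ given $S_{v_k}$, which is exactly the separation the tree provides. Summing over $k$ and substituting into Step 1 gives the claim.

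The main obstacle is Step 3. Conditioning can increase mutual information, so dropping the coordinates of $W_k$ other than $Z_{v_k}^\Gamma$ needs an exact Markov statement from (M1)--(M3), not just a monotonicity argument. I would first verify which conditional independences the appendix proof of Proposition~\ref{prop:recursive-closure} establishes at each induction step and reuse them verbatim. If the clean reduction fails, I would fall back to a weaker chain in which each term is bounded by $I(S_v;U_v\mid Z_v^\Gamma,\text{descendant }Z\text{'s},C)$. I would then use (M3), whereby $U_v$ already contains the child information, to show that the descendant $Z$'s are redundant given $(U_v,C)$, which makes the extra conditioning harmless.
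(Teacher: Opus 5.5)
Your Steps 1 and 2 are fine; Step 2 is even an equality, since $\sigma(W_k)\subseteq\sigma(W_{k-1})=\sigma(U_{v_k},W_k)$. The data-processing inequality in Step 3 also holds, but it follows from the child-before-parent composition in (M2), not from (M3). Given its Markov blanket $(S_v,R_v,Z_v^\Gamma,C)$, with $R_v=Z^\Gamma_{\operatorname{ch}(v)}$, the state $U_v$ is independent of every other variable, and $W_k$ contains both $R_{v_k}$ and $Z^\Gamma_{v_k}$.

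\textbf{The gap.} The problem is the final reduction to $I(S_{v_k};U_{v_k}\mid Z^\Gamma_{v_k},C)$. The independence you invoke there is false. The other coordinates of $W_k$ include $R_{v_k}$, because children precede parents in post-order. (M2) draws $U_{v_k}$ from $K_{v_k}(\cdot\mid S_{v_k},R_{v_k},C)$, so $U_{v_k}$ may depend on $R_{v_k}$ directly. This is not a technicality: your argument uses (M3) only where it is not needed, and without (M3) the proposition is false. Take $\rho\to v\to u$ with independent fair bits $A,B$, and set $Y_\rho=Y_v=A$, $Y_u=B$, $u$ uncompressed with $U_u=Z_u^\Gamma=B$, $U_v=A\oplus B$, and $\Gamma$ constant. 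Then (M1) and (M2) hold, and every local deficiency is $0$ because $A\oplus B\perp A$. Yet $\mathcal L(Y_\rho\mid U)$ is a point mass while $\mathcal L(Y_\rho\mid Z^\Gamma)$ is uniform, so the left side equals $1/2$. In your telescoping, the $v$-term is $I(A;A\oplus B\mid B)=\log 2$. Your claimed independence $B\perp A\oplus B\mid A$ fails, and so does (M3), since $I(A;B\mid A\oplus B)=\log 2$.

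\textbf{The repair.} You need to keep $R_v$ in the conditioning and discharge it with (M3), which is exactly the paper's key inequality. The local Markov property lets you drop every coordinate of $W_k$ except $(Z_{v_k}^\Gamma,R_{v_k})$, giving $I(S_{v_k};U_{v_k}\mid W_k,C)\le I(S_{v_k};U_{v_k}\mid Z_{v_k}^\Gamma,R_{v_k},C)$. Then (M3), together with the fact that $Z_v^\Gamma$ is a function of $U_v$ and the chain rule in both orders, gives
\[
I(S_v;U_v\mid Z_v^\Gamma,R_v,C)=I(S_v;U_v\mid Z_v^\Gamma,C)-I(S_v;R_v\mid Z_v^\Gamma,C)\le I(S_v;U_v\mid Z_v^\Gamma,C).
\]

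\textbf{The fallback.} Your fallback misuses (M3) in the same way. (M3) screens $S_v$ only from the direct child tuple $R_v$, not from deeper descendant compressed states. Consider $\rho\to v\to u\to w$ with $Y_v=Y_u=Y_w$, $U_w=Y_w$, $u$ and $w$ uncompressed, and $U_u,U_v$ constant. This satisfies (M1)--(M3), yet $Z_w^\Gamma$ is not independent of $S_v$ given $U_v$. Such coordinates must be removed on the $U_v$ side by the local Markov property, not by (M3).

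\textbf{Comparison with the paper.} With this repair, your node-by-node telescoping is a valid alternative route. The paper instead conditions on the whole predictive tree $Y_V$ and factorizes $P_{U\mid Y_V,Z,C}$ as $\bigotimes_v P_{U_v\mid S_v,Z_v,R_v,C}$. It then compares this with the reference $\bigotimes_v P_{U_v\mid Z_v,R_v,C}$ to obtain $I(Y_\rho;U\mid Z^\Gamma,C)\le I(Y_V;U\mid Z^\Gamma,C)\le\sum_v I(S_v;U_v\mid Z_v^\Gamma,R_v,C)$. Only after that does it apply the same (M3) inequality.
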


Proposition~\ref{prop:root-predictive-stability} provides a quantitative
local-to-global guarantee. It shows that the change in the root predictive
law is controlled by the local predictive deficiencies at the individual
recursive interfaces, so small local predictive deficiencies imply a small global predictive discrepancy. When every local predictive deficiency vanishes, the compressed computation preserves the root target
exactly, recovering the conclusion of
Proposition~\ref{prop:recursive-closure}.
This guarantee concerns the population-level local predictive deficiencies and
applies when its stated local predictive deficiencies are controlled. 
The local calibration in Corollary~\ref{cor:controlled-surrogate} requires 
additional test-coverage, readout, context-localization, and independent-sample 
conditions; the projection bounds only a first-order change in the pooled 
training statistic. Appendix~\ref{app:training-dynamics} further shows
that the projected updates bound the finite-horizon change of the pooled
empirical predictive deficiency, up to the relaxation margin and residual
higher-order terms. Combined with the local calibration above, this connects
the quantity controlled during training to the population deficiencies in
Proposition~\ref{prop:root-predictive-stability}; details are given in the
appendix.
The complete proof is given in 
Appendix~\ref{app:root-predictive-stability}.

\medskip
\noindent\textbf{Local supervision beyond the root target.}
Root-target supervision identifies intermediate states through their root-target
posterior. Appendix~\ref{app:endpoint-task-quotient} shows that this quotient
may merge states with distinct recursive posteriors: root-target sufficiency
can hold while
\(I(S_v;U_v\mid Z_v^\Gamma,C)\) remains positive. Hence root-target sufficiency
alone does not imply local predictive sufficiency. Our update constrains
the grouped first-order change of the predictive surrogate. At zero margin,
each grouped constraint has a non-positive linearized contribution, while
a feasible task proposal is unchanged.
Appendix~\ref{app:predictive-feasibility-proof} proves this
guarantee, Appendix~\ref{app:tractable-local-objective} quantifies the
approximation to the local predictive criterion, and
Appendix~\ref{app:root-predictive-stability} relates the
local predictive deficiencies to root predictive stability.

\section{Experiments}

\label{sec:experiments}
\paragraph{Datasets and tasks.}
We evaluate our method across four recursive settings. 
For temporal graph learning, we use Wikipedia~\citep{Kumar2019JODIE} for node classification and UCI~\citep{Panzarasa2009} for link prediction, with propagation depths controlling recursive neighborhood expansion. 
For language memory, we use DailyDialog~\citep{Li2017DailyDialog} for response prediction across increasing conversation histories and Multi-Session Chat (MSC)~\citep{Xu2022MSC} for long-term response prediction across increasing session histories. 
For vision-language-action control, we use LIBERO-Mem~\citep{chung2026eslotssm} to evaluate memory during multi-stage manipulation under the T3 setting. For each training run, checkpoint selection uses an internal 20-demonstration development split from the released refined training data; the selected checkpoint is then frozen and evaluated on a separate set of 20 official held-out evaluation initializations. 
For SRI, we use a six-task cohort from CO-Bench~\citep{Sun2026COBench} as a mechanism-level evaluation of predictive preservation, where optimization candidates are recursively generated and refined from earlier candidates and execution feedback, and retained history conditions subsequent strategy updates. 
Detailed dataset statistics and matched evaluation protocols across all four domains are provided in Appendix~\ref{app:experimental_details}.

\paragraph{Models and baselines.}
Across all four domains, we instantiate our method on a representative recursive model: TGN~\citep{Rossi2020TGN} for temporal graphs, CCM-merge~\citep{Kim2024CCM} for language memory, MemoryVLA~\citep{Shi2026MemoryVLA} for vision-language-action control, and Meta$^{n}$~\citep{kim2026metan} for SRI. For temporal graphs, we compare against TGN~\citep{Rossi2020TGN}, TGAT~\citep{Xu2020TGAT}, TCL~\citep{wang2021tcl}, CTAN~\citep{gravina2024ctan}, and DyGFormer~\citep{Yu2023DyGFormer}. For language memory, we compare directly against the corresponding CCM-merge host under the same backbone, memory depth, and evaluation protocol. For vision-language-action control, we compare against MemoryVLA~\citep{Shi2026MemoryVLA} and report prior-work results for $\pi_0$, SlotVLA, and E-SlotSSM on the same LIBERO-Mem T3 benchmark as contextual references. For SRI, we compare against Meta$^{n}$~\citep{kim2026metan} with the same backbone, task cohort, recursive search procedure, and optimization budget. Ours changes the historical-trace retention rule used for subsequent recursive updates. Unless otherwise specified, our method modifies only the compression, memory, or recursive-context retention mechanism of the corresponding host while keeping the underlying backbone and evaluation protocol fixed.

\begin{table*}[t] 
\caption{ 
\textbf{Main results across recursive systems.} 
Results span temporal graphs, language memory, VLA control, and SRI; best results appear bold. 
} 
\label{tab:main_results} 
 
\centering 
\footnotesize 
\renewcommand{\arraystretch}{1.08} 
 
\begin{tabular*}{\textwidth}{ 
@{\extracolsep{\fill}} 
llcccccc 
@{} 
} 
\toprule 
 
\multicolumn{8}{@{}l}{ 
\textbf{(a) Temporal Graph Learning} 
\hspace{0.8em} 
\textit{Prediction across propagation depths; AP/AUC in \%} 
} 
\\ 
\midrule 
 
\textbf{Dataset / Task} 
& 
\textbf{Setting} 
& 
\textbf{TGN} 
& 
\textbf{TGAT} 
& 
\textbf{TCL} 
& 
\textbf{CTAN} 
& 
\textbf{DyGFormer} 
& 
\textbf{Ours} 
\\ 
\midrule 
 
\multirow{2}{*}{ 
\makecell[l]{Wikipedia\\Node Classification} 
} 
& 
Depth 2 AUC $\uparrow$ 
& 83.61 
& 85.45 
& 85.64 
& 87.31 
& 82.90 
& \textbf{88.81} 
\\ 
 
& 
Depth 3 AUC $\uparrow$ 
& 87.47 
& 82.08 
& 81.26 
& 87.38 
& 87.00 
& \textbf{89.17} 
\\ 
 
\midrule 
 
\multirow{4}{*}{ 
\makecell[l]{UCI\\Link Prediction} 
} 
& 
Depth 3 AP $\uparrow$ 
& 93.04 
& 86.08 
& 86.35 
& -- 
& \textbf{95.47} 
& 94.14 
\\ 
 
& 
Depth 3 AUC $\uparrow$ 
& 92.79 
& 85.44 
& 85.95 
& -- 
& \textbf{94.52} 
& 93.73 
\\ 
 
& 
Depth 4 AP $\uparrow$ 
& 94.07 
& 84.14 
& 83.78 
& -- 
& 94.92 
& 94.89 
\\ 
 
& 
Depth 4 AUC $\uparrow$ 
& 93.73 
& 83.09 
& 83.20 
& -- 
& 94.14 
& \textbf{94.47} 
\\ 
 
\bottomrule 
\end{tabular*}

\vspace{0.65em}

\begin{tabular*}{\textwidth}{ 
@{\extracolsep{\fill}} 
llcccccc 
@{} 
} 
\toprule 
 
\multicolumn{8}{@{}l}{ 
\textbf{(b) Language Memory} 
\hspace{0.8em} 
\textit{Performance under increasing memory depth} 
} 
\\ 
\midrule 
 
\multirow{2}{*}{\textbf{Dataset / Metric}} 
& 
\multirow{2}{*}{\textbf{Depth}} 
& 
\multicolumn{3}{c}{\textbf{LLaMA-7B}} 
& 
\multicolumn{3}{c}{\textbf{Gemma-4-E4B-It}} 
\\ 
 
\cmidrule(lr){3-5} 
\cmidrule(lr){6-8} 
 
& 
& 
\textbf{CCM-M} 
& 
\textbf{Ours} 
& 
\textbf{$\Delta$ (\%)} 
& 
\textbf{CCM-M} 
& 
\textbf{Ours} 
& 
\textbf{$\Delta$ (\%)} 
\\ 
\midrule 
 
\multirow{3}{*}{ 
\makecell[l]{DailyDialog\\PPL $\downarrow$} 
} 
& 
4 
& 5.98 
& \textbf{5.86} 
& \textbf{-1.98} 
& 7.00 
& \textbf{6.97} 
& \textbf{-0.54} 
\\ 
 
& 
6 
& 5.85 
& \textbf{5.71} 
& \textbf{-2.51} 
& 6.84 
& \textbf{6.78} 
& \textbf{-0.80} 
\\ 
 
& 
8 
& 5.80 
& \textbf{5.65} 
& \textbf{-2.56} 
& 6.85 
& \textbf{6.77} 
& \textbf{-1.05} 
\\ 
 
& 
11 
& 5.72 
& \textbf{5.55} 
& \textbf{-2.91} 
& 6.77 
& \textbf{6.66} 
& \textbf{-1.64} 
\\ 
 
& 
13 
& 5.63 
& \textbf{5.38} 
& \textbf{-4.53} 
& 6.73 
& \textbf{6.58} 
& \textbf{-2.24} 
\\ 
 
\midrule 
 
\multirow{3}{*}{ 
\makecell[l]{MSC\\PPL $\downarrow$} 
} 
& 
3 
& 6.82 
& \textbf{6.77} 
& \textbf{-0.84} 
& 7.96 
& 7.96
& 0.00
\\ 
 
& 
4 
& 7.02 
& \textbf{6.95} 
& \textbf{-1.09} 
& 8.23 
& \textbf{8.22} 
& \textbf{-0.12} 
\\ 
 
& 
5 
& 7.03 
& \textbf{6.93} 
& \textbf{-1.39} 
& 8.16 
& \textbf{8.14} 
& \textbf{-0.25} 
\\ 
 
\bottomrule 
\end{tabular*}

\vspace{0.65em}

\begin{tabular*}{\textwidth}{ 
@{\extracolsep{\fill}} 
llcccccc 
@{} 
} 
\toprule 
 
\multicolumn{8}{@{}l}{ 
\textbf{(c) Vision-Language-Action} 
\hspace{0.8em} 
\textit{Persistent control under T3; SR/SGC in \%} 
} 
\\ 
\midrule 
 
\textbf{Benchmark} 
& 
\textbf{Setting} 
& 
$\boldsymbol{\pi_0}^{*}$ 
& 
\makecell{\textbf{SlotVLA}$^{*}$\\$h{=}1$} 
& 
\makecell{\textbf{SlotVLA}$^{*}$\\$h{=}8$} 
& 
\textbf{E-SlotSSM$^{*\dagger}$} 
& 
\textbf{MemoryVLA} 
& 
\textbf{Ours} 
\\ 
\midrule 
 
\multirow{2}{*}{LIBERO-Mem} 
& 
T3 SR $\uparrow$ 
& 
0.00 
& 
0.00 
& 
0.00 
& 
-- 
& 
1.67 
& 
\textbf{11.67} 
\\ 
 
& 
T3 SGC $\uparrow$ 
& 
0.00 
& 
0.00 
& 
0.00 
& 
\textbf{33.30} 
& 
20.00 
& 27.22 
\\ 
 
\bottomrule 
\end{tabular*}

\vspace{0.65em}

\begin{tabular*}{\textwidth}{ 
@{\extracolsep{\fill}} 
llccccc 
@{} 
} 
\toprule 
 
\multicolumn{7}{@{}l}{ 
\textbf{(d) Self-Recursive Improvement} 
\hspace{0.8em} 
\textit{Predictive preservation in recursive strategy refinement} 
} 
\\ 
\midrule 
 
\textbf{Benchmark} 
& 
\textbf{Backbone} 
& 
\multicolumn{2}{c}{\textbf{Dev Gain} $\uparrow$} 
& 
\multicolumn{3}{c}{\textbf{Held-out Score} $\uparrow$} 
\\ 
 
\cmidrule(lr){3-4} 
\cmidrule(lr){5-7} 
 
& 
& 
\textbf{Meta$^n$} 
& 
\textbf{Ours} 
& 
\textbf{Task} 
& 
\textbf{Meta$^n$} 
& 
\textbf{Ours} 
\\ 
\midrule 
 
\multirow{4}{*}{CO-Bench} 
 
& 
gpt-5.5 
& 
81.72 
& 
\textbf{82.79} 
& 
Assortment 
& 
45.98 
& 
\textbf{49.37} 
\\ 
 
\cmidrule(lr){2-7} 
 
& 
gemini-3.1-pro 
& 
90.01 
& 
\textbf{95.47} 
& 
Aircraft 
& 
83.89 
& 
\textbf{92.34} 
\\ 
 
\cmidrule(lr){2-7} 
 
& 
DeepSeek-V4.1-Flash 
& 
80.70 
& 
\textbf{86.00} 
& 
Assignment 
& 
82.92 
& 
\textbf{100.00} 
\\ 
 
& 
& 
& 
& 
Capacitated 
& 
68.73 
& 
\textbf{70.91} 
\\ 
 
\bottomrule 
\end{tabular*}

\vspace{2pt} 
 
\begin{minipage}{\textwidth} 
\footnotesize 
\textit{Note.} 
For temporal graphs, depth denotes propagation layers.  
DailyDialog reports dialogue-equal PPL under the matched protocol.  
For MSC, depth denotes preceding session history.  
$^{*}$ Results are from prior work.  
$^{\dagger}$ E-SlotSSM uses oracle object-level subgoal representations.  
For SRI, Meta$^n$ and Ours share the backbone, task cohort, recursive search procedure, and optimization budget; they differ only in the historical traces retained for recursive updates.  
We report Dev Gain from one paired run and three-evaluation mean held-out scores for tasks modified during development.  
The modified-task set is fixed before held-out evaluation.  
Complete six-task results and repeat-level variability are provided in Appendix~\ref{app:sri-results}. 
\end{minipage} 
 
\end{table*}
\FloatBarrier

\paragraph{Metrics.}
We use task-specific metrics across the four domains.
For temporal graphs, we report average precision (AP) and ROC-AUC.
For language memory, we report dialogue-equal perplexity (PPL) on DailyDialog and session-opening PPL on MSC.
For vision-language-action control, we report task success rate (SR) and sequential-goal completion (SGC) under T3.
For SRI, we report Dev Gain from one paired recursive-search run and, for each task modified during development-side recursive refinement, the mean held-out score over three matched evaluations of each frozen strategy.
The complete six-task task-wise results and cohort-level average from one fixed held-out evaluation are reported in Appendix~\ref{app:sri-results}.
Together, these metrics measure task-level predictive or control quality under each recursive setting and across different forms of recursion.

\paragraph{Main Findings.}
As summarized in Table~\ref{tab:main_results}, our method consistently improves
the corresponding host across four substantially different recursive systems
by modifying only how intermediate information is retained. A particularly
informative pattern emerges as recursive dependence becomes more demanding.
In language memory, the improvement over the compressed host is largest at
longer histories, where information must survive repeated compression. In
temporal graphs, although our TGN-based method trails the strong DyGFormer
baseline at shallower propagation on UCI, this gap closes as propagation
deepens, reaching comparable AP and higher AUC at the deepest setting.
The VLA and SRI results extend this observation beyond explicit depth axes:
the same preservation mechanism remains effective in long-horizon closed-loop
execution and recursive strategy refinement, where retained information is
reused in later computation. Complete depth- and history-resolved results are
provided in Appendices~\ref{app:tg_results} and~\ref{app:language_results},
with additional long-horizon VLA and recursive-strategy analyses in
Appendices~\ref{app:vla-details} and~\ref{app:sri-results}.

\textit{(1) Are the end-task gains consistent with the predictive-preservation mechanism?}
End-task improvements alone do not identify why the method works. We therefore
use Fig.~\ref{fig:mechanism-analysis} as a mechanism-consistency analysis.
In temporal graphs, Fig.~\ref{fig:mechanism-analysis}(a) shows that our method
retains stronger finite-test predictive value \(J\) across recursive interfaces
than TGN and task-only training, including predictive information originating
from deeper sources. In SRI, Fig.~\ref{fig:mechanism-analysis}(b) shows that
information retained from earlier executions remains more useful when reused
during subsequent strategy refinement. The same association also appears
during language-memory training: Appendix~\ref{app:language_results},
Fig.~\ref{fig:ccm_training_diagnostics}, shows that the predictive-preservation
signal \(\Delta J\) and validation PPL improvement evolve broadly in the same
direction over the training trajectory. These diagnostics do not directly estimate the population deficiency \(\delta_v\); together, they provide mechanism-level evidence linking task gains to stronger predictive preservation and downstream reuse.

\begin{figure}[t]
\centering
\includegraphics[width=\linewidth]{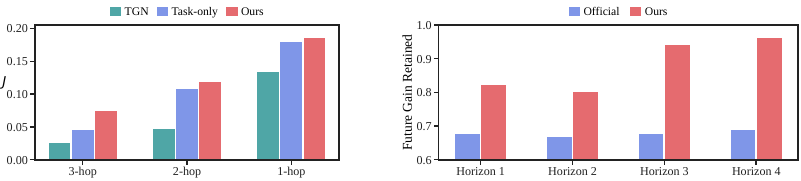}

\vspace{-0.5em}
\begin{minipage}[t]{0.48\linewidth}
    \centering
    \small (a) Predictive information preservation
\end{minipage}
\hfill
\begin{minipage}[t]{0.48\linewidth}
    \centering
    \small (b) Future utility of retained information
\end{minipage}

\vspace{-0.3em}
\caption{
\textbf{Mechanism analysis.}
(a) The finite-test predictive value \(J\) tracks predictive distinctions
retained across recursive interfaces.
(b) Future gain retained measures whether retained information remains useful
when reused in later recursive updates.
Together, the two diagnostics test whether end-task improvements are accompanied
by stronger predictive preservation and downstream reuse.
}
\label{fig:mechanism-analysis}
\end{figure}

\textit{(2) Does the optimization actually enforce the intended preservation constraint?}
The mechanism diagnostics above characterize the learned representations, but
do not by themselves show whether the constraint is active during optimization.
Appendix~\ref{app:tg_results}, Fig.~\ref{fig:tgn-constraint-conflict}, directly
examines this behavior: unconstrained task proposals continue to violate a
nonzero fraction of the grouped predictive constraints throughout training,
although the violation density decreases as optimization progresses. Thus, the
projection is not vacuous; task optimization continues to propose directions
that can interfere with predictive preservation. Moreover,
Table~\ref{tab:first_order_bridge} shows that the constrained first-order
quantity closely tracks the actual change in the corresponding window-level
predictive statistic, validating the local bridge used by the projection.
Table~\ref{tab:ablation} further separates this constrained update from
task-only training, an additive predictive-gradient update, and a single global
predictive constraint, while the complete controlled ablations are reported in
Appendix~\ref{app:ablation}. Their degradation relative to the full method
supports the role of interface-wise constrained optimization beyond merely
adding predictive supervision.

\begin{table}[t]
\centering
\caption{
\textbf{Single-seed diagnostic ablation on UCI at depth 3.}
We isolate the roles of the predictive target, the adjacent predictive pair,
and the way predictive supervision constrains updates.
}
\label{tab:ablation}

\vspace{-0.4em}
\small
\setlength{\tabcolsep}{7.5pt}
\renewcommand{\arraystretch}{1.08}

\begin{tabular}{lcc}
\toprule
\textbf{Variant}
&
\textbf{AP} $\uparrow$
&
\textbf{AUC} $\uparrow$
\\
\midrule

Task-Only Training
& 92.62
& 92.13
\\

Direct Future Prediction
& 92.76
& 92.24
\\

Single-Event Predictive Criterion
& 92.41
& 91.93
\\

Additive Predictive-Gradient Update
& 92.64
& 92.18
\\

Global Predictive Constraint
& 90.32
& 89.88
\\

\midrule

\textbf{Our Method}
& \textbf{94.52}
& \textbf{94.25}
\\

\bottomrule
\end{tabular}
\end{table}

\noindent\textbf{Ablation studies.}
Table~\ref{tab:ablation} isolates the main design choices under the same UCI depth-3 setting. Task-Only Training controls for the shared host, compressor, and trainable parameterization, testing whether additional learnability alone explains the improvement. Direct Future Prediction asks whether a conventional future-prediction objective can replace the proposed predictive criterion, while the Single-Event Predictive Criterion directly tests the adjacent predictive pair $S_v$ underlying our theoretical formulation. Additive Predictive-Gradient Update uses the same predictive supervision as an auxiliary gradient, separating the contribution of predictive supervision from its constrained parameter-space realization. Global Predictive Constraint further tests whether an aggregated constraint can replace the interface-wise structure used by our method. Across these controls, the observed performance degradations are substantial, supporting the respective roles of the predictive criterion, adjacent predictive pair, and constrained update. Additional ablations on statistical realization and supervision alignment are reported in Appendix~\ref{app:ablation}.

\noindent\textbf{Robustness and efficiency.}
We further examine the two choices that most directly control the constraint. The feasibility tolerance $\kappa$ exhibits a clear interior optimum: overly small values restrict the task proposal too strongly, whereas larger values weaken predictive preservation. Increasing the eligible-interface fraction improves test AP and then approaches saturation, indicating that the pooled statistic benefits from broader interface coverage once sufficient support is available. We therefore select $\kappa$ by a small validation sweep and retain broad eligible-interface coverage. The method acts only during training; its additional computation consists of pooled minibatch statistics, predictive sensitivities, and the predictive feasibility projection, while test-time computation remains that of the host. The complete sensitivity curves are reported in Appendix~\ref{app:robustness-efficiency}.

\section{Conclusion}
\label{sec:conclusion}

We study how to learn compact states for recursive computation without losing
information required by prediction. Our approach builds on local predictive deficiency and recursive predictive
closure, and uses predictive feasibility projection to constrain the task proposal. Across temporal graphs, language memory,
vision-language-action control, and recursive self-improvement, the method
improves host models under matched compression budgets. Mechanistic analyses
show preservation of predictive information from deeper sources, while
ablations confirm the roles of alignment, conditioning, and constrained
optimization. These results support predictive preservation at recursive
interfaces as a principle for compact states across domains.

\noindent\textbf{Limitations.}
Our formulation relies on statistical support for estimating local predictive constraints, which may become less stable in data-sparse regimes. The feasibility tolerance $\kappa$ also requires validation-based calibration and may vary across tasks and domains. Developing more data-efficient estimators and adaptive calibration strategies, including transfer across related tasks, remains a direction for future work. The VLA evidence is restricted to the held-out LIBERO-Mem T3 evaluation initializations and should therefore be interpreted as held-out closed-loop performance under this benchmark setting rather than as broad out-of-distribution generalization.

\bibliography{references}
\bibliographystyle{plainnat}

\appendix
\clearpage
\phantomsection
\pdfbookmark[1]{Appendix}{appendix-start}
{\LARGE\bfseries Appendix\par}
\vspace{1.35em}
{\Large\bfseries Table of Contents\par}
\vspace{0.25em}
\hrule
\vspace{0.75em}

\newcommand{\apptocmain}[2]{%
  \noindent
  \hyperref[#1]{\makebox[1.0cm][l]{\textbf{\ref*{#1}}}\textbf{#2}}%
  \nobreak\leaders\hbox{\normalfont\kern.12em.\kern.12em}\hfill\nobreak
  \hyperref[#1]{\pageref*{#1}}\par\vspace{0.32em}%
}
\newcommand{\apptocsub}[2]{%
  \noindent\hspace*{1.3em}%
  \hyperref[#1]{\makebox[1.15cm][l]{\ref*{#1}}#2}%
  \nobreak\leaders\hbox{\normalfont\kern.12em.\kern.12em}\hfill\nobreak
  \hyperref[#1]{\pageref*{#1}}\par\vspace{0.10em}%
}

\small
\apptocmain{app:ai-use}{AI Use Statement}
\apptocmain{app:related-work}{Related Work}

\apptocmain{app:experimental_details}{Experimental Protocols and Evaluation Details}
\apptocsub{app:temporal_graphs}{Temporal Graphs}
\apptocsub{app:language_memory}{Language Memory}
\apptocsub{app:vla}{Vision--Language--Action}
\apptocsub{app:sri-setting}{Self-Recursive Improvement}

\apptocmain{app:additional_results}{Extended Results and Diagnostic Analyses}
\apptocsub{app:tg_results}{Temporal Graph Results}
\apptocsub{app:language_results}{Language Memory Results}
\apptocsub{app:vla-details}{Vision--Language--Action Results}
\apptocsub{app:sri-results}{Recursive Self-Improvement Results}

\apptocmain{app:implementation}{Implementation, Ablations, and Reproducibility}
\apptocsub{app:impl_tg}{Temporal-Graph Implementation}
\apptocsub{app:impl_language}{Language-Memory Implementation}
\apptocsub{app:impl_vla}{Vision--Language--Action Implementation}
\apptocsub{app:impl_sri}{Self-Recursive Improvement Implementation}
\apptocsub{app:training-compute}{Training and Compute Configuration}
\apptocsub{app:efficiency}{Efficiency and Model Overhead}
\apptocsub{app:ablation}{Ablation Study}
\apptocsub{app:hyperparameters}{Method Hyperparameters and Sensitivity}

\apptocmain{app:theory}{Theoretical Results and Proofs}
\apptocsub{app:recursive-closure}{Recursive Predictive Closure}
\apptocsub{app:local-variational-characterization}{Local Variational Characterization}
\apptocsub{app:tractable-local-objective}{From Local Predictive Deficiency to Constrained Updates}
\apptocsub{app:training-dynamics}{Finite-Horizon Predictive Control}
\apptocsub{app:root-predictive-stability}{Root Predictive Stability}
\apptocsub{app:endpoint-task-quotient}{Why Root-Target Supervision Is Insufficient}
\normalsize

\vspace{0.6em}
\hrule
\vspace{0.8em}
\noindent
Appendices~\ref{app:experimental_details}--\ref{app:implementation} contain the full experimental protocol, extended analyses, and implementation details. Appendix~\ref{app:theory} collects all formal results and proofs in one place.

\clearpage

\section{AI Use Statement}
\label{app:ai-use}

Generative AI tools were used to assist with several supporting aspects of this work. They were used to polish and improve the clarity of the manuscript, while the technical content and final wording were reviewed and controlled by the authors. They were also used for literature and repository retrieval and discovery; all cited papers, repositories, and links were subsequently verified by the authors against the original sources.

Generative AI was additionally used to draft portions of standard descriptive text, such as preliminary summaries of experimental settings or results, which were subsequently revised by the authors based on the underlying experiments and technical details. For mathematical claims, the claims and main proof ideas were developed by the authors, while generative AI was used to assist with intermediate derivations, formalization, and proof checking during iterative development. All final mathematical statements and proofs were reviewed and verified by the authors.

For software development, the authors designed the code structure, implementation logic, and experimental pipeline, while generative AI was used to assist with code implementation. All AI-assisted code was subsequently reviewed, checked, and validated by the authors.

The authors take full responsibility for the final content of the paper.

\section{Related Work}
\label{app:related-work}

Recursive state compression appears across a broad range of modern learning systems. Temporal graph models repeatedly aggregate historical interactions into evolving node representations, as in TGN and TGAT \citep{Rossi2020TGN,Xu2020TGAT}; recurrent language-memory methods compress accumulated context into states that are reused across subsequent segments or interactions, as in CCM and AutoCompressor \citep{Kim2024CCM,Chevalier2023AutoCompressor}; embodied models such as MemoryVLA maintain compact perceptual--cognitive memories across multi-stage control \citep{Shi2026MemoryVLA}; and iterative self-improvement or program-search systems reuse selected candidates, execution feedback, or reflections across successive rounds, as in Reflexion, FunSearch, and ReEvo \citep{Shinn2023Reflexion,RomeraParedes2024FunSearch,Ye2024ReEvo}. Despite their architectural differences, these systems share a common structure: intermediate states are repeatedly aggregated, compressed, and reused. This makes the central question not only how to compress a state, but what information must survive compression so that subsequent computation remains predictive.

Existing representation-learning methods impose different criteria for deciding whether a compressed state is adequate. A classical approach uses reconstruction. Recursive autoencoders, for example, compose child representations into a parent state and train the parent representation to reconstruct its children \citep{Socher2011}. More generally, autoencoding and masked-reconstruction objectives encourage representations to retain enough information to recover designated inputs or local content. A second family instead uses local or future prediction. Contrastive Predictive Coding trains a context representation to predict future latent representations \citep{Oord2018CPC}, while joint-embedding predictive approaches such as I-JEPA predict representations of masked target regions from visible context without reconstructing raw inputs \citep{Assran2023IJEPA}. Such objectives place greater emphasis on predictive structure than direct reconstruction. However, both reconstruction and future-prediction objectives require a target to be specified in advance. In a recursively composed system, accurately reconstructing an immediate input, predicting an immediate parent, or predicting a particular future latent does not by itself imply that the target preserves all distinctions needed after subsequent aggregation and compression.

Information-theoretic and predictive-state formulations provide a more general perspective on what a representation should preserve. The Information Bottleneck (IB) formalizes compression through retained information about a designated relevance variable \citep{Tishby1999}; distributed and multivariate variants extend this principle to multiple observations and interacting variables \citep{EstellaAguerri2018DIB,Abdelaleem2025DVMIB}, while successive-refinement formulations study information preservation across multiple stages of compression \citep{Charvin2023Successive}. Predictive State Representations (PSRs) instead characterize state through conditional predictions of future observables rather than reconstruction of the complete history \citep{Littman2001,Singh2004PSR}. These perspectives establish task relevance and future predictability as principled criteria for representation quality. Our setting introduces an additional compositional requirement: the predictive distinctions retained at one interface must remain sufficient for the predictive requirements encountered at the next interface. We therefore do not introduce another reconstruction or fixed future-prediction target; we formulate a local predictive-sufficiency condition whose role is explicitly tied to recursive composition.

Directly enforcing such a condition through full conditional distributions or mutual information is generally impractical in high-dimensional learned systems. Mutual-information estimation from finite samples is statistically difficult, with variational estimators exhibiting substantial bias--variance trade-offs \citep{Poole2019Variational}. Predictive-state learning faces a related approximation problem because practical systems must restrict attention to a finite collection of predictive tests \citep{Kulesza2015Insufficient}. Spectral PSRs, conditional-distribution embeddings \citep{Song2009Hilbert}, and conditional mean or operator-regression methods \citep{Grunewalder2012ConditionalMean} therefore characterize predictive relationships through finite tests, regressions, or cross-covariance operators. Our empirical predictive deficiency follows this computational philosophy, but the direction of construction is important: the recursive predictive requirement is specified first, and finite tests, ridge readouts, and covariance balancing are introduced only as a tractable approximation to that requirement. Thus, empirical predictive deficiency approximates but does not define this preservation criterion.

A separate line of work concerns how preservation or compatibility constraints are enforced during optimization. GEM constrains a current gradient using losses on episodic memories from previous tasks and projects violating gradients through a quadratic program \citep{LopezPaz2017GEM}; A-GEM replaces the collection of per-task constraints with a reference-gradient constraint \citep{Chaudhry2019AGEM}; and PCGrad modifies multi-task gradients according to pairwise gradient conflicts \citep{Yu2020PCGrad}. More generally, projected and proximal optimization methods allow an optimizer step to be followed by a correction with respect to a prescribed feasible domain, including adaptive methods for constrained stochastic optimization \citep{Duchi2011AdaGrad,Reddi2018AMSGrad}. We therefore do not regard gradient projection, quadratic programming, or post-optimizer projection itself as a contribution.

Our distinction lies instead in what defines the constrained update and where that constraint is imposed. GEM-type methods derive admissible directions from task losses or gradient compatibility, while conventional projected optimization assumes a feasible parameter domain specified independently of the current predictive state. In our setting, the predictive-feasible region is generated dynamically from predictive sensitivities of empirical predictive deficiency at recursive interfaces. Moreover, because the first-order predictive constraint is defined over an actual parameter displacement, we apply it to the task proposal produced by the host optimizer rather than directly projecting the raw task gradient before Adam or AdamW is applied. Given the task proposal \(t_t\), predictive feasibility projection returns the closest constrained update \(d_t^\star\) in the predictive-feasible region \(\mathcal F_t\). The optimization contribution is therefore not a new projection operator, but the conversion of local predictive sufficiency into data-dependent margin-relaxed predictive constraints on the task proposal.

Taken together, prior work provides powerful mechanisms for reconstruction, future prediction, information-theoretic compression, predictive-state approximation, and constrained optimization. The remaining question addressed here is different: what local condition should an intermediate state satisfy when that state will itself be aggregated, compressed, and reused in later computation? We formulate this condition through local predictive sufficiency, approximate its local predictive deficiency with empirical predictive deficiency, and use predictive sensitivities to restrict updates that would erode the corresponding predictive distinctions.

\clearpage
\section{Experimental Protocols and Evaluation Details}
\label{app:experimental_details}

\subsection*{Reproduction Guide}

\paragraph{Organization.}
All four experimental domains follow the same high-level reproduction path:
data preparation, matched-host initialization, training of Ours,
validation- or development-only model selection, and frozen final evaluation.
Appendix~\ref{app:experimental_details} specifies the dataset construction,
split rules, selection criteria, metrics, and evaluation cohorts;
Appendix~\ref{app:additional_results} reports the complete results and
diagnostic analyses; and Appendix~\ref{app:implementation} documents the
host-specific integration, optimization settings, compute environment, and
method-specific frozen hyperparameters. Exact executable commands and frozen
configuration files are provided with the supplementary code release.

\begin{table}[t]
\centering
\caption{
\textbf{Reproduction map for the four experimental domains.}
Each row traces the reported results from data preparation through model
selection and frozen evaluation. Each row links its protocol, results, and implementation.
}
\label{tab:reproduction_map}
\small
\setlength{\tabcolsep}{3.5pt}
\renewcommand{\arraystretch}{1.10}

\begin{tabular}{p{0.105\linewidth}
                p{0.205\linewidth}
                p{0.205\linewidth}
                p{0.205\linewidth}
                p{0.205\linewidth}}
\toprule
\textbf{Domain}
&
\textbf{Data / split}
&
\textbf{Training / initialization}
&
\textbf{Selection}
&
\textbf{Frozen evaluation}
\\
\midrule

Temporal
graphs
&
Wikipedia and UCI;
chronological
70/15/15 split
&
TGN host with matched
task, depth, temporal
neighborhood, and seed;
five independent runs
&
Wikipedia:
validation ROC-AUC;
UCI:
full-validation AP
&
Selected checkpoint on
held-out test;
causal memory replay for
Wikipedia;
global AUC/AP.

\textit{Refs.:}
\ref{app:temporal_graphs},
\ref{app:tg_results},
\ref{app:impl_tg}
\\

\addlinespace[2pt]

Language
memory
&
DailyDialog original
splits with a fixed-target
depth cohort;
MSC chronological
five-session conversations
&
One fixed CCM-merge host
per backbone and dataset;
three independent Ours
runs initialized from the
same host
&
Pre-specified validation
criterion;
one checkpoint selected
per Ours run before
depth-wise evaluation
&
DailyDialog:
fixed pooled depth cohort;
MSC:
held-out test conversations;
dialogue-equal PPL.

\textit{Refs.:}
\ref{app:language_memory},
\ref{app:language_results},
\ref{app:impl_language}
\\

\addlinespace[2pt]

VLA
&
100 released refined T3
demonstrations:
80 training + 20 internal
development
&
MemoryVLA host trained
from the common OpenVLA
initialization;
Ours starts from the
matched trained host and
optimizes only $\Gamma$;
three independent runs
&
20-demonstration internal
development split only;
25k-step upper budget;
one checkpoint frozen
per run
&
20 separate official
held-out T3 evaluation
initializations;
matched MemoryVLA/Ours
rollouts;
SR and SGC.

\textit{Refs.:}
\ref{app:vla},
\ref{app:vla-details},
\ref{app:impl_vla}
\\

\addlinespace[2pt]

SRI
&
Fixed six-task CO-Bench
cohort with official
development and held-out
instances
&
Meta$^n$ and Ours begin
from the same realized
Layer-1 state and receive
the same recursive-search
opportunity and
retained-history capacity
&
Recursive refinement and
strategy selection use
development feedback only;
selected strategies are
then frozen
&
Official held-out instances;
matched evaluation of
frozen strategies;
three evaluation repeats
for the reported task-level
scores.

\textit{Refs.:}
\ref{app:sri-setting},
\ref{app:sri-results},
\ref{app:impl_sri}
\\

\bottomrule
\end{tabular}
\end{table}

\paragraph{Execution order.}
To reproduce a reported result, we first construct the corresponding dataset
and evaluation cohort using the domain-specific protocol in
Appendix~\ref{app:experimental_details}. We then initialize or train the
matched host model and train Ours from the prescribed host state using the
corresponding implementation in Appendix~\ref{app:implementation}.
Checkpoint or strategy selection uses only the validation or development data
specified for that domain. The selected state is subsequently frozen and
passed to the corresponding evaluation protocol without further parameter
updates or held-out-based selection. Metrics are aggregated according to the
rules specified in Appendix~\ref{app:experimental_details}, and the complete
run-level results are reported in Appendix~\ref{app:additional_results}.
Training budgets, repetitions, software, and compute configurations are
summarized in Appendix~\ref{app:training-compute}, while the method-specific
frozen hyperparameters are reported in Appendix~\ref{app:hyperparameters}.

\subsection{Temporal Graphs}
\label{app:temporal_graphs}

\paragraph{Datasets and provenance.}
We evaluate on Wikipedia \citep{Kumar2019JODIE} and UCI
\citep{Panzarasa2009}, two established continuous-time interaction
networks. Wikipedia contains timestamped interactions between users and
Wikipedia pages together with dynamic user-state labels, and is used for
node classification. UCI contains timestamped message exchanges between
users in an online community and is used for temporal link prediction.

We use the processed benchmark files released with DGB
\citep{Poursafaei2022}.%
\footnote{Dataset archive:
\url{https://zenodo.org/records/7213796};
DGB code:
\url{https://github.com/fpour/DGB}.}
Within each task, the same processed data are used for all compared
methods. Table~\ref{tab:tg_dataset_stats} summarizes the dataset statistics
and evaluation settings.

\begin{table}[t]
\centering
\caption{\textbf{Temporal-graph datasets and evaluation settings.}}
\label{tab:tg_dataset_stats}
\small
\setlength{\tabcolsep}{5.5pt}
\renewcommand{\arraystretch}{1.05}
\begin{tabular}{lrrlc}
\toprule
\textbf{Dataset} &
\textbf{Nodes} &
\textbf{Interactions} &
\textbf{Task} &
\textbf{Depth} \\
\midrule
Wikipedia & 9,227 & 157,474 & Node classification & 2, 3 \\
UCI       & 1,899 & 59,835  & Link prediction     & 3, 4 \\
\bottomrule
\end{tabular}
\end{table}

\paragraph{Task and host selection.}
The two datasets provide complementary prediction targets on
continuous-time interaction streams: Wikipedia evaluates node-level state
prediction, whereas UCI evaluates edge-level interaction prediction.
Together, they allow us to examine predictive preservation across both prediction targets and multiple temporal depths.

We use TGN \citep{Rossi2020TGN} as the temporal-graph host and as the
corresponding uncompressed baseline. TGN combines persistent node memory
with temporal-neighborhood aggregation and exposes intermediate
representations across successive propagation stages, providing natural
recursive interfaces at which our method can be applied. Comparing TGN
with our method keeps the host architecture, temporal encoding, and task
definition fixed, thereby isolating the effect of predictive preservation
at intermediate representations.

\paragraph{Baselines and common protocol.}
We compare against TGN \citep{Rossi2020TGN},
TGAT \citep{Xu2020TGAT},
TCL \citep{wang2021tcl},
CTAN \citep{gravina2024ctan},
and DyGFormer \citep{Yu2023DyGFormer}.
For reproducibility, baseline implementations follow their publicly
released reference codebases. We use the unified DyGLib implementations
\citep{Yu2023DyGFormer} for TGN, TGAT, TCL, and DyGFormer,\footnote{DyGLib:
\url{https://github.com/yule-BUAA/DyGLib}.}
and the official CTAN implementation for CTAN.\footnote{CTAN:
\url{https://github.com/gravins/non-dissipative-propagation-CTDGs}.}
Where applicable, we use the accompanying Temporal Graph Benchmark (TGB)
utilities for temporal-graph data handling and evaluation.\footnote{TGB:
\url{https://github.com/shenyangHuang/TGB}.}
Within each task, all methods use the same processed dataset,
chronological split, evaluation examples, and metric implementation.
Each baseline retains its native representation, propagation operator, and architecture.

Interactions are ordered by timestamp and split at the 70th and 85th
timestamp percentiles, yielding chronological 70\%/15\%/15\%
training, validation, and test partitions. No interaction from a later
split is used for parameter optimization in an earlier split.

Depth denotes the number of successive temporal representation or
propagation stages used by each model. We vary depth to examine how
predictive performance changes as historical information passes through
more successive representation transformations. The operation performed
at each stage remains architecture-specific; equal depth therefore denotes
a matched number of successive representation stages rather than identical
internal computation or an identical node-level receptive field across
different architectures.

For the TGN host, temporal neighborhoods are constructed causally using
only interactions preceding the query time. Wikipedia uses the 10 most
recent available interactions at depth 2 and 5 at depth 3, whereas UCI
uses 10 at depth 3 and 5 at depth 4. The smaller fanout in the deeper
settings limits the growth of recursively expanded temporal neighborhoods
as representation depth increases. Our method preserves the host temporal
encoding and temporal-neighbor selection rule and introduces no additional
source of future input information.

\paragraph{Wikipedia node classification.}
For Wikipedia, we follow the standard TGN/JODIE dynamic
node-classification setting. A temporal representation model is first
trained on the interaction stream, after which a node-classification head
is optimized using binary cross-entropy on the observed dynamic labels.
Training, validation, and test examples follow disjoint chronological
splits.

Wikipedia is evaluated at depths 2 and 3. Checkpoint selection is based
on ROC-AUC on the held-out validation split. Because TGN maintains
persistent node memory, the temporal state required at the beginning of
test evaluation must reflect the interaction history preceding the test
period. After loading the selected checkpoint, we therefore clear the
memory and replay the training and validation interactions in chronological
order without parameter updates before evaluating on the test stream.
This procedure reconstructs only the causal memory state; parameters remain fixed.

We report ROC-AUC on the complete held-out test split. Predictions and
labels from all evaluation mini-batches are first concatenated, and
ROC-AUC is then computed once over the full split. We do not average
independently computed mini-batch ROC-AUC values.

\paragraph{UCI temporal link prediction.}
For UCI, we evaluate transductive temporal link prediction at depths 3
and 4. The main experiment uses the chronological transductive setting;
new-node subsets are not included in the reported results.

Training negatives are sampled by drawing destination nodes from the
unique destination-node pool observed in the training stream. Validation
and test evaluation use fixed random negative samplers over the full-stream
node pools, with sampler seeds 0 and 2, respectively. Each positive
interaction is paired with one negative interaction during evaluation.
The same positive examples and negative-sampling protocol are used for all
compared methods.

Model selection uses only the held-out validation split. The best
checkpoint is selected by full-validation average precision (AP), with
ROC-AUC recorded as an additional validation metric. The selected
checkpoint is then evaluated on the held-out test split.

We report AP and ROC-AUC for UCI. Predictions and labels from all
evaluation mini-batches are concatenated before either metric is computed,
so the reported values are global split-level metrics rather than averages
of mini-batch AP or ROC-AUC values.

All formal temporal-graph comparisons are repeated over five independent
random seeds. Appendix~\ref{app:tg_results} reports mean $\pm$ standard
deviation across these runs. Evaluation examples and the validation/test
negative-sampling protocol are matched across methods within each run.

\begin{table}[t]
\centering
\caption{\textbf{Temporal-graph evaluation protocol.}}
\label{tab:tg_protocol}
\small
\setlength{\tabcolsep}{5.0pt}
\renewcommand{\arraystretch}{1.05}
\begin{tabular}{lll}
\toprule
\textbf{Setting} & \textbf{Wikipedia} & \textbf{UCI} \\
\midrule
Task
& Node classification
& Temporal link prediction \\
Evaluation setting
& Chronological
& Transductive \\
Split
& 70/15/15
& 70/15/15 \\
Depth
& 2, 3
& 3, 4 \\
TGN temporal neighbors
& 10 (depth 2); 5 (depth 3)
& 10 (depth 3); 5 (depth 4) \\
Evaluation negatives
& --
& 1 per positive \\
Checkpoint criterion
& Validation ROC-AUC
& Full-validation AP \\
Reported metrics
& ROC-AUC
& AP / ROC-AUC \\
Metric aggregation
& Global
& Global \\
\bottomrule
\end{tabular}
\end{table}
\FloatBarrier

\subsection{Language Memory}
\label{app:language_memory}

\paragraph{Datasets and tasks.}
We evaluate language-memory compression on DailyDialog
\citep{Li2017DailyDialog} and Multi-Session Chat (MSC)
\citep{Xu2022MSC}.
DailyDialog is a multi-turn dialogue corpus and is used for
next-utterance language modeling. MSC is a multi-session dialogue
corpus and is used for long-term response prediction across sessions.

For both datasets, we preserve their original dialogue structure and
chronological ordering. DailyDialog follows the data representation and
preprocessing of the released CCM implementation.%
\footnote{Official CCM implementation:
\url{https://github.com/snu-mllab/Context-Memory}.}
DailyDialog is obtained from the original release linked by the dataset
paper, and MSC is obtained through the public ParlAI task release.%
\footnote{DailyDialog: \url{https://aclanthology.org/I17-1099/};
MSC: \url{https://parl.ai/projects/msc/}.}
For MSC, conversation sessions are processed in chronological order so
that information accumulated from earlier sessions can be retained for
prediction in later sessions.

The two tasks expose complementary prediction interfaces over accumulated
context. DailyDialog varies the amount of preceding dialogue history while
holding the current context and prediction target fixed. MSC evaluates
response prediction across successive session depths as the amount of
available cross-session history increases.

\paragraph{Backbones and memory host.}
We instantiate the language-memory experiments with two language-model
backbones: LLaMA-7B and \texttt{google/gemma-4-E4B-it}.
CCM \citep{Kim2024CCM} serves as the common recursive memory-compression host
across both backbones.

For a dialogue with recursively compressed history
$(u_1,\ldots,u_L)$, CCM updates its memory through the sequence of
interfaces
\[
(M_{t-1},u_t)\longrightarrow M_t,
\qquad t=1,\ldots,L,
\]
and we identify the post-compression state with
\[
Z_t^\Gamma=M_t.
\]
We use $Z_t^\Gamma$ to denote the post-update memory state at this recursive interface; the host-specific realization of $\Gamma$ is described in Appendix~\ref{app:impl_language}.
Across both backbones, the base language-model parameters remain frozen
during RPBE adaptation, while the native compression parameters consist
of the conditional LoRA parameters in the attention projections together
with the trainable COMP/SUM embedding rows.

All matched comparisons within a backbone use the same raw task examples,
native tokenizer and serialization rule, recursive CCM memory layout, and
evaluation construction. Full-context and no-history variants, when
reported, use the same query and target and differ only in the treatment
of the preceding history. Detailed trainable-parameter, predictive-statistic, and optimization
configurations are reported in Appendix~\ref{app:implementation}.

\paragraph{DailyDialog protocol.}
We preserve the original DailyDialog training, validation, and test
partitions and the text normalization used by CCM. Following the CCM
preprocessing, dialogues shorter than three utterances are removed,
individual utterances are limited to 128 tokens, with total token limits of 400 in training and 600 in held-out evaluation.

To isolate the effect of recursive memory depth, we use a fixed-target
controlled-depth protocol. The evaluation cohort is restricted to
dialogues containing at least 15 utterances. For every evaluated depth,
the 15th utterance is used as the prediction target and the 14th
utterance as the current context. We vary only the number of immediately
preceding history utterances,
\[
L\in\{4,6,8,11,13\}.
\]
Thus, for a dialogue $(u_1,\ldots,u_{15})$, the input at depth $L$
contains the $L$ utterances immediately preceding $u_{14}$ as the
recursively compressed history, followed by the fixed context $u_{14}$,
while the prediction target remains $u_{15}$.

The dialogue cohort, current context, prediction target, and scored
target positions are held fixed across depths. Consequently, variation
along the depth axis changes only the amount of history processed through
the recursive memory-compression mechanism. All matched methods are evaluated
on exactly the same dialogue instances, history depths, context--target
pairs, and scored target positions. Within each backbone, the tokenizer,
serialization rule, and recursive CCM memory construction are also held fixed.

For the reported fixed-target depth analysis, the validation and test
partitions are combined after preprocessing to form a larger fixed
evaluation cohort. This increases the number of eligible long dialogues
and reduces sampling variability in the depth-wise comparison. Checkpoint
and hyperparameter selection are completed beforehand using the
pre-specified validation criterion; the pooled fixed-target scores are
never used for model selection. The same frozen cohort is used at every
depth and for every compared method.

We report dialogue-equal target perplexity. Prompt, history, compression,
padding, and terminal EOS positions are excluded from scoring. For
dialogue $i$ at depth $L$, let $Y_i$ denote its scored target tokens.
We first compute the mean target negative log-likelihood
\[
\ell_{i,L}
=
\frac{1}{|Y_i|}
\sum_{t\in Y_i}
-\log p(y_{i,t}\mid \operatorname{input}_{i,L}),
\]
and then aggregate dialogues with equal weight,
\[
\operatorname{PPL}^{\mathrm{dlg}}_{L}
=
\exp\!\left(
\frac{1}{N}
\sum_{i=1}^{N}
\ell_{i,L}
\right).
\]
This aggregation treats each dialogue as one evaluation unit and prevents
longer target utterances from receiving greater weight solely because
they contain more scored tokens. The same scoring and aggregation rule
is applied to every method in the matched comparison.

Because LLaMA-7B and Gemma-4 use different native tokenizers, absolute
perplexity values are compared within each backbone rather than across
backbone families.

\paragraph{MSC protocol.}
We preserve the original MSC training, validation, and test partitions
\citep{Xu2022MSC} and maintain the chronological ordering of sessions
and utterances. Each conversation spans five sessions, allowing
information accumulated from earlier interactions to remain available
during later sessions.

For the depth-controlled evaluation, the same set of complete
five-session conversations is evaluated at the session openings
\[
S\in\{3,4,5\}.
\]
For session $S$, the preceding sessions $(1,\ldots,S-1)$ constitute
the available conversation history. The opening user utterance of
session $S$ is used as the current context, and the corresponding
response is used as the prediction target. Increasing $S$ therefore
increases the amount of accumulated cross-session history available
before prediction while retaining the same conversation cohort.

For the recursive comparison, CCM and Ours process the same ordered
conversation history through the CCM recursive merge host. The session
ordering, current context, prediction target, and evaluation examples
are matched within each setting, and no future-session information is
provided as input.

Checkpoint and hyperparameter selection are performed using the
pre-specified validation criterion before final evaluation. The selected
checkpoint is then frozen and evaluated on the same held-out test conversations for all $S$.

We report dialogue-equal session-opening perplexity. Prompt, preceding
history, compression, padding, and terminal EOS positions are excluded
from scoring. For conversation $i$ at session depth $S$, let $Y_{i,S}$
denote the scored target tokens of the opening response. We first compute
the mean target negative log-likelihood
\[
\ell_{i,S}
=
\frac{1}{|Y_{i,S}|}
\sum_{t\in Y_{i,S}}
-\log p(y_{i,S,t}\mid \operatorname{input}_{i,S}),
\]
and then aggregate conversations with equal weight,
\[
\operatorname{PPL}^{\mathrm{dlg}}_{S}
=
\exp\!\left(
\frac{1}{N}
\sum_{i=1}^{N}
\ell_{i,S}
\right).
\]
The same set of complete conversations is evaluated at every session
depth, so $N$ is constant across $S$. This aggregation treats each
session-opening prediction as one evaluation unit and prevents longer
target responses from receiving greater weight solely because they
contain more scored tokens. All methods use identical scoring and aggregation.

As with DailyDialog, absolute perplexity is compared within each backbone, not across backbones.

\paragraph{Model selection and reporting.}
For each backbone and dataset, we first select one CCM-merge host checkpoint
using the pre-specified validation criterion and then keep this checkpoint
fixed. The same checkpoint provides the reported CCM-merge baseline and the
common initialization for three independent Ours training runs. Ours uses the
same pre-specified training and checkpoint-selection rule across the three
seeds, and the selected checkpoint from each run is evaluated unchanged over
the complete depth grid; no checkpoint or hyperparameter is selected per depth.

CCM-merge therefore contributes one fixed baseline result at each depth,
whereas Ours is reported as mean $\pm$ standard deviation over the three
independent training seeds. The relative PPL change is computed against the
same fixed CCM-merge baseline, with its reported variability induced by the
three Ours runs. Training seeds, checkpoint-selection rules, optimization
budgets, and trainable parameter counts are reported in
Appendix~\ref{app:implementation}.

For DailyDialog, the pooled fixed-target depth scores are computed only
after checkpoint selection and are used for the controlled depth analysis.
For MSC, the fixed CCM-merge checkpoint and the selected checkpoint from each
Ours run are evaluated on the held-out test conversations over the complete
session-depth grid $S\in\{3,4,5\}$.

\begin{table}[t]
\centering
\caption{\textbf{Language-memory evaluation protocol.}}
\label{tab:language_memory_protocol}
\small
\setlength{\tabcolsep}{4.2pt}
\renewcommand{\arraystretch}{1.06}
\begin{tabular}{lll}
\toprule
\textbf{Setting}
& \textbf{DailyDialog}
& \textbf{MSC} \\
\midrule

Task
& Next-utterance prediction
& Session-opening prediction \\

Backbones
& LLaMA-7B / Gemma-4
& LLaMA-7B / Gemma-4 \\

Host
& CCM-merge
& CCM-merge \\

Depth
& $L=4,6,8,11,13$
& $S=3,4,5$ \\

Evaluation cohort
& Fixed pooled $\geq15$-utterance dialogues
& Fixed five-session test conversations \\

History
& $L$ preceding utterances
& Sessions $1,\ldots,S-1$ \\

Current context
& Fixed $u_{14}$
& Opening user utterance of session $S$ \\

Prediction target
& Fixed $u_{15}$
& Opening response of session $S$ \\

Metric
& Dialogue-equal target PPL $\downarrow$
& Dialogue-equal session-opening PPL $\downarrow$ \\

Checkpoint selection
& \makecell[l]{CCM-M fixed once;\\ Ours validation-selected}
& \makecell[l]{CCM-M fixed once;\\ Ours validation-selected} \\

Depth evaluation
& \makecell[l]{Fixed CCM-M + each Ours checkpoint\\ over full $L$ grid}
& \makecell[l]{Fixed CCM-M + each Ours checkpoint\\ over full $S$ grid} \\

\bottomrule
\end{tabular}
\end{table}

\FloatBarrier

\subsection{Vision--Language--Action} 
\label{app:vla} 
 
\paragraph{Benchmark and data protocol.} 
We evaluate vision--language--action memory on the T3 task of 
LIBERO-Mem \citep{chung2026eslotssm}. T3 requires the robot to lift 
the same bowl and return it to the plate three times in sequence, so 
successful execution requires maintaining task progress across repeated 
interactions. 
 
LIBERO-Mem provides 100 refined T3 demonstrations for model development in 
the released training data. We partition these into 80 demonstrations for 
gradient-based training and 20 demonstrations used only as an internal 
development split for checkpoint selection. No development-split sample enters gradient updates. 
 
After checkpoint selection, the selected checkpoint is frozen and evaluated on 
the 20 official LIBERO-Mem validation initializations, which are held out from 
both gradient training and checkpoint selection and are used only for our final 
evaluation. These validation initializations are disjoint from both the 80 
training demonstrations and the 20-demonstration internal checkpoint-selection 
split.  
 
\paragraph{Pretrained initialization.} 
All runs start from the same OpenVLA-7B Prismatic checkpoint,\linebreak
\path{step-295000-epoch-40-loss=0.2200.pt}, downloaded from 
\texttt{openvla/openvla-7b-prismatic}. This checkpoint provides the 
pretrained visual encoders, Llama-2 backbone, and multimodal projector. 
It does not contain the LIBERO-Mem action model or memory modules used 
in our experiments. 
 
The parameters inherited from this pretrained checkpoint remain frozen. 
MemoryVLA-specific components are initialized separately, including 
the DiT-L action model (\texttt{action\_model}), cognitive memory bank 
(\texttt{cog\_mem\_bank}), perceptual memory bank 
(\texttt{per\_mem\_bank}), and perceptual compressor 
(\texttt{per\_compr}). The DiT-L action model predicts 
7-dimensional actions over a 16-step action horizon. 
 
\paragraph{MemoryVLA host training.} 
We first train the MemoryVLA host using AVG as its memory aggregation 
rule. During this stage, the trainable components are the LoRA 
adapters, DiT action model, \texttt{cog\_mem\_bank}, 
\texttt{per\_mem\_bank}, and \texttt{per\_compr}. The original 
pretrained Llama-2, DINOv2, SigLIP, and multimodal-projector parameters 
remain frozen. 
 
We use \texttt{--reg-head 0}; therefore, no auxiliary regression head is instantiated or trained. 
 
\paragraph{Training Ours.} 
Ours is initialized from the trained MemoryVLA host of the same 
training run. The complete host is then frozen, including its trained 
LoRA adapters, DiT action model, cognitive memory bank, perceptual 
memory bank, and perceptual compressor. 
 
We introduce our compressor \(\Gamma\), implemented as 
\texttt{GammaMerger}, and optimize only its parameters using the 
predictive-preservation objective. No existing host parameter is 
updated during this stage. We additionally verified that the parameters 
shared by MemoryVLA and Ours remain unchanged during Gamma-only 
training. 
 
Thus, Ours does not receive additional fine-tuning of the underlying 
VLA policy. The method-specific training stage only fits the newly 
introduced memory-compression module on top of an otherwise fixed 
MemoryVLA host. 
 
\paragraph{Independent training runs.} 
We report results over three independently trained runs. For each run, the 
MemoryVLA host is trained from the same pretrained OpenVLA initialization using 
the procedure above, and Ours is then trained from the corresponding host 
checkpoint with the complete host frozen and only $\Gamma$ optimized. The reported mean and standard deviation use the three independent training runs, irrespective of seed identifiers. 
 
Within each run, Ours is initialized from the corresponding trained MemoryVLA 
host, and the matched MemoryVLA baseline is evaluated from that same host 
training run. The intervention is the memory-compression mechanism and its 
predictive-preservation training. 
 
\paragraph{Checkpoint selection.} 
All formal VLA runs use a maximum budget of 25,000 optimizer steps and save 
checkpoints every 1,000 steps. Checkpoint selection is performed exclusively on 
the 20-demonstration internal development split drawn from the released refined 
training data. This split is used only for selection, with no gradient updates. 
 
Checkpoints are ranked by full-task success on the internal development split, 
with SGC used only to break ties. Once selected, a checkpoint is fixed before 
any rollout on the 20 official LIBERO-Mem validation initializations. 
 
The same checkpoint-selection rule and a maximum budget of 25,000 optimizer 
steps are applied to each method-specific training stage. After one checkpoint 
has been selected for each training run, that checkpoint is frozen before the 
official held-out evaluation is started. The checkpoint-selection traces and 
selected checkpoints are reported in Appendix~\ref{app:vla-details}; the 
training configuration is summarized in Appendix~\ref{app:training-compute}. 
 
\paragraph{Held-out matched MemoryVLA evaluation.} 
Within each training run, the selected MemoryVLA and Ours checkpoints are 
frozen and evaluated on the same fixed set of 20 official LIBERO-Mem validation 
initializations, with matched observation and action interfaces, rollout budget, 
rollout settings, and LIBERO-Mem evaluator. These validation rollouts are 
distinct from the internal checkpoint-selection demonstrations. 
 
For MemoryVLA, the host uses AVG aggregation. For Ours, the same host 
parameters are kept fixed and AVG is replaced by the trained 
\(\Gamma\) memory-compression module. This matched construction isolates the 
effect of the proposed memory-compression method from additional 
adaptation of the underlying policy. 
 
The same fixed held-out validation initializations are used for matched 
MemoryVLA--Ours comparisons across training runs. Reusing the benchmark 
evaluation cases does not create additional training runs; the reported 
uncertainty is computed across independently trained runs. 
 
\paragraph{SR and SGC.} 
We report full-task success rate (SR) and sequential-goal completion 
(SGC), both in percent. Each T3 rollout contains three sequential 
lift-and-return repetitions. 
 
Let \(N\) denote the number of evaluation rollouts. For rollout \(i\), 
let \(s_i\in\{0,1\}\) indicate whether all three repetitions are 
successfully completed according to the LIBERO-Mem evaluator, and let 
\(k_i\in\{0,1,2,3\}\) denote the number of repetitions successfully 
credited during that rollout. We compute 
\[ 
    \mathrm{SR} 
    = 
    \frac{100}{N} 
    \sum_{i=1}^{N} s_i, 
\] 
and 
\[ 
    \mathrm{SGC} 
    = 
    \frac{100}{3N} 
    \sum_{i=1}^{N} k_i. 
\] 
 
In our held-out evaluation, \(N=20\), corresponding to one closed-loop T3 
rollout from each official LIBERO-Mem validation initialization. Therefore, SR 
is the percentage of the 20 rollouts that successfully complete all three 
repetitions, whereas SGC is the total number of successfully completed 
repetitions divided 
by 
\[ 
    3 \times 20 = 60. 
\] 
For example, a rollout completing one, two, or all three repetitions 
contributes one, two, or three counts, respectively, to the SGC 
numerator. 
 
All repetition counting, stability checks, and overshoot handling use 
the original LIBERO-Mem evaluator unchanged. We introduce no 
method-specific success or partial-success criterion. 
 
\paragraph{Prior-work references and comparison policy.} 
For broader context, Table~\ref{tab:main_results} additionally includes 
the LIBERO-Mem results reported by Chung et al.~ 
\citep{chung2026eslotssm} for \(\pi_0\), SlotVLA with \(h=1\), 
SlotVLA with \(h=8\), and E-SlotSSM. These values are reproduced from 
the published study and are not re-evaluated under our local MemoryVLA 
pipeline. We therefore treat them as contextual reference points rather 
than protocol-matched baselines. 
 
All claims concerning the effect of our method are based on the matched 
MemoryVLA--Ours comparison described above. E-SlotSSM is marked with \(\dagger\) for using oracle subgoal information; it is an oracle-assisted reference. 
 
We preserve the quantities reported by the source wherever possible. 
If a prior method reports zero sequential-goal completion, we also 
report zero full-task success because completing none of the three 
required repetitions cannot constitute successful completion of T3. 
Conversely, partial sequential-goal completion does not determine 
full-task success, so SR is left unreported when it cannot be obtained 
from the source. 
 
Exact optimization hyperparameters, training budgets, checkpoint 
intervals, software versions, and hardware configuration are provided 
in Appendix~\ref{app:implementation}. 
\FloatBarrier

\subsection{Self-Recursive Improvement}
\label{app:sri-setting}

\paragraph{Purpose and evaluation setting.}
We use self-recursive improvement (SRI) as a mechanism-level test of
predictive preservation under explicit recursive strategy refinement.
In this setting, previously generated execution traces are repeatedly
reused to construct later strategies, making the retained recursive
history directly relevant to subsequent computation. Our comparison is
therefore designed to isolate the effect of predictive preservation while holding the realized initialization, capacity, recursive-search opportunity, backbone, and evaluator fixed.

\paragraph{Benchmark, data, and released host.}
We evaluate on a fixed six-task cohort from CO-Bench
\citep{Sun2026COBench}. CO-Bench provides task descriptions,
development instances, held-out test instances, and a common evaluator
that maps solver outputs to normalized task scores. We obtain the benchmark
data from the official CO-Bench release on Hugging Face,\footnote{
\url{https://huggingface.co/datasets/CO-Bench/CO-Bench}}
using the format and evaluator provided by the public CO-Bench
repository.\footnote{
\url{https://github.com/sunnweiwei/CO-Bench}}
No task instance is regenerated or custom development/test split introduced; the official evaluator and scoring rule remain unchanged.

The recursive host is the public Meta$^n$ implementation.\footnote{
\url{https://github.com/minnesotanlp/meta-n}}
We use its released CO-Bench integration
(\texttt{--benchmark co\_bench}) and evolutionary archive
orchestrator (\texttt{--use-archive}). The six-task cohort is fixed
before the paired comparison and is identical for Meta$^n$ and Ours.
The exact task identifiers, repository revisions, and run-level
configurations are retained in the experiment manifest.

\paragraph{Backbones and API execution.}
We evaluate three API backbones:
\texttt{gpt-5.5}, \texttt{gemini-3.1-pro}, and\linebreak
\path{DeepSeek-V4.1-Flash}. Within each backbone, Meta$^n$ and Ours
use the same provider-side model identifier and the same API route.
Following the released host protocol, the same backbone is used for the
Layer-1 solver and for every invocation of the recursive meta-operation
$\Omega$.

For each paired comparison, the provider route, decoding configuration,
maximum output length, retry policy, prompt templates, solver mode, and
execution timeout are frozen before recursive expansion and shared by the
two methods. Provider fallback is disabled, so failed requests follow the
same retry policy without silently switching to another model or endpoint.
Exact provider/model strings and decoding settings are recorded in the run configuration.

\paragraph{Paired realized initialization.}
Each Meta$^n$/Ours comparison begins from the same realized Layer-1
state. For a given backbone, the initial task programs, execution traces,
development scores, archive state, and associated metadata are generated
once and copied before the two recursive arms are expanded. All locally
controlled sources of randomness, including archive sampling and task
ordering, use paired settings across the two methods.

This design controls the realized API output at the point where the
methods begin to differ. Remote model calls need not be bitwise
deterministic, and independently regenerating the root would simultaneously
alter the initial programs, execution traces, and downstream recursive
environment. After the intervention begins, the two recursive trajectories
are allowed to diverge as a consequence of their different historical-trace retention
decisions; this downstream divergence is part of the mechanism being
evaluated.

\paragraph{Controlled predictive-preservation intervention.}
Meta$^n$ recursively refines candidate strategies using execution
information produced by earlier stages. In the paired comparison, both
methods use the same host procedure for constructing the available
execution-history pool, the same fixed retained-history capacity, the same
native solver stack and ordering convention, and the same surrounding
context construction.

Meta$^n$ uses its fixed historical-trace retention rule. Ours applies predictive
preservation to determine which already available execution traces are
retained for the subsequent recursive update. The
predictive-preservation configuration is fixed before the paired recursive
search and receives no held-out information.

All remaining components are shared between the two arms: the CO-Bench
tasks, Layer-1 solver, $\Omega$ prompt and output schema, archive data
structure, outer parent-selection procedure, code executor, task evaluator,
and recursive-search machinery. Predictive preservation operates only on
information already produced within the host run and creates no additional
candidate proposal, evaluator query, retrieval call, or external LLM
request. The comparison therefore changes the information retained at the
recursive interface while preserving the amount of retained history and
the external opportunities available for recursive search.

\paragraph{Matched recursive-search budget.}
We use equal recursive-search opportunity as the primary resource control.
Before each paired run, a common archive-expansion budget is fixed. Both
methods receive the same number of candidate-generation opportunities and
the same evaluation allocation associated with each candidate.

A candidate-generation slot is consumed once its generation has been
attempted. Parsing failures, invalid programs, execution errors, and
timeouts therefore remain part of the realized search cost and do not
receive free replacement proposals. The two arms also use the same
execution timeout, retry policy, maximum generation length, and maximum
number of host solver or repair calls associated with a candidate.

Method-dependent convergence statistics are retained as run diagnostics
but do not alter the number of recursive expansions available to either
method. Likewise, any difference in realized prompt length or token usage
is not converted into additional candidate generations, evaluator calls,
or model requests.

For resource accounting, we record candidate-generation attempts,
evaluator invocations, external LLM requests, input and output tokens,
and wall-clock time. Computation required internally by predictive
preservation is accounted for separately and does not enlarge the external
recursive-search budget.

\paragraph{Development-only recursive selection.}
The recursive search uses only the development instances and development
feedback supplied by the official CO-Bench evaluator. Held-out scores are
not exposed to the Layer-1 solver, $\Omega$, historical-trace retention, archive evolution, stopping logic, or predictive-preservation configuration.

After the common recursive-search budget is exhausted, the development-side
archive, recursive lineage, and selected strategies are frozen. We report
the paper-facing \emph{Dev Gain} under the same frozen development-based
selection protocol for Meta$^n$ and Ours. The same six-task development
cohort, aggregation rule, evaluator, and score scale are used in both arms.
Appendix~\ref{app:sri-results} reports cohort-level development results and search diagnostics.

Held-out results play no role in development-side candidate selection or in
the computation of Dev Gain.

\paragraph{Held-out evaluation.}
Held-out evaluation begins only after the recursive search and
development-side selection have been frozen. The main table reports
held-out scores for tasks whose recursive strategy is modified during the
frozen development-side refinement process. The modified-task set is
identified from the recursive run artifacts before held-out scores are
examined.

For every task included in this set, the frozen Meta$^n$ and Ours
solutions are evaluated on the same official held-out instances, and their
task scores are reported side by side. If several tasks are modified for a
backbone, all such tasks are retained in the comparison; held-out
performance is not used to choose among them.

The modified-task presentation in Table~\ref{tab:main_results} highlights
where the recursive intervention produces an actual downstream strategy
change. For completeness, Appendix~\ref{app:sri-results} reports the
task-wise held-out results for the full six-task cohort together with the
unconditional cohort-level averages for Meta$^n$ and Ours. Consequently,
the complete held-out record does not depend on the direction or magnitude
of the main-table differences.

\paragraph{Repeated evaluation of frozen strategies.}
For each backbone, we use one paired recursive-search realization beginning
from the shared realized Layer-1 state described above. Once
development-side selection is complete, the resulting strategies, recursive
configuration, prompt construction, model route, and held-out task set are
frozen.

Each frozen solution is evaluated three times on the same held-out
instances. Meta$^n$ and Ours use the same backbone, provider route,
decoding configuration, instance set, instance ordering, retry policy, and
official evaluator in every repetition. The three-evaluation protocol is fixed beforehand and recorded in the experiment manifest.

These repetitions quantify execution-time variability of an already
selected strategy under API stochasticity. They are repeated evaluations
of one frozen recursive-search outcome and are not treated as independent
recursive-search seeds. The main table reports the mean held-out score over
the three matched evaluations. Appendix~\ref{app:sri-results} reports one
fixed held-out evaluation for the complete six-task cohort and a compact
repeat-level variability summary.

\paragraph{Reporting and bias control.}
The SRI protocol is fixed before held-out evaluation. The task cohort,
realized Layer-1 initialization, history-pool construction, retained-history
capacity, recursive-search budget, failure accounting, development-side
selection procedure, modified-task identification rule, and repeated
held-out evaluation protocol are all determined without access to
held-out outcomes.

Dev Gain and Held-out Score serve separate roles. Dev Gain measures the
result of recursive refinement under the feedback available during search.
Held-out Score evaluates the frozen resulting strategies on unseen
instances. Both are computed from the common CO-Bench evaluator and neither
is defined from the internal predictive statistic.

The paper-facing SRI comparison is limited to these task-performance
quantities and resource-accounting statistics. We do not introduce
additional post-hoc thresholds or method-specific evaluation scores into
the comparison. Main-table modified-task results are accompanied by the
complete six-task results and unconditional cohort-level averages in the
appendix.

\begin{table}[t]
\centering
\caption{
\textbf{Controlled SRI protocol on CO-Bench.}
Meta$^n$ and Ours share the realized initialization, retained-history
capacity, recursive-search opportunity, and evaluation protocol; the
method-specific intervention changes which execution traces are retained
for subsequent recursive refinement.
}
\label{tab:sri_protocol}
\small
\setlength{\tabcolsep}{4.3pt}
\renewcommand{\arraystretch}{1.05}
\begin{tabular}{ll}
\toprule
\textbf{Component} & \textbf{Protocol} \\
\midrule
Data source
& Official CO-Bench release \\
Task cohort
& Same fixed six tasks \\
Host
& Public Meta$^n$ implementation \\
Backbones
& gpt-5.5 / gemini-3.1-pro / DeepSeek-V4.1-Flash \\
Model assignment
& Same backbone at Layer 1 and all $\Omega$ calls \\
Initialization
& Same realized Layer-1 state within each pair \\
History-pool construction
& Same host procedure \\
Retained-history capacity
& Same fixed capacity in both arms \\
Native stack
& Same construction and ordering \\
Method-specific change
& Historical-trace retention rule only \\
Recursive-search procedure
& Same host procedure \\
Search budget
& Same fixed expansion budget \\
Candidate failures
& Consume the assigned search slot \\
Evaluator
& Official CO-Bench evaluator, unchanged \\
Search feedback
& Development only \\
Held-out access during search
& None \\
Additional API calls for Ours
& None \\
Search realizations
& One paired realization per backbone \\
Development result
& Dev Gain $\uparrow$ \\
Held-out evaluation
& Three matched evaluations of each frozen solution \\
Main-table held-out rows
& Three-evaluation mean for all development-defined modified tasks \\
Appendix held-out results
& One fixed evaluation for all six tasks + unconditional average \\
Resource accounting
& Candidates / evals / calls / tokens / wall-clock \\
\bottomrule
\end{tabular}
\end{table}

\paragraph{Run artifacts and reproducibility.}
We retain the native Meta$^n$ artifacts for both arms, including
\texttt{config.json}, \texttt{summary.json}, convergence records,
\texttt{checkpoint.json}, \texttt{archive/index.json}, per-candidate
execution traces and injected-code records, and the JSONL log of external
LLM requests and responses.

The manifest also records the six CO-Bench task identifiers, CO-Bench and Meta$^n$ repository revisions, exact
provider/model identifiers, decoding and retry settings, paired local
randomness configuration, common recursive-search budget, evaluator
timeout, history-pool configuration, retained-history capacity,
historical-trace retention settings, development-selected strategies, modified-task set, and repeated held-out evaluation settings.

The minimal public data and software setup is:

\begin{verbatim}
# CO-Bench data
huggingface-cli download CO-Bench/CO-Bench \
    --repo-type dataset --local-dir data/co_bench

# Meta^n host
git clone https://github.com/minnesotanlp/meta-n
cd meta-n
python -m venv .venv
source .venv/bin/activate
pip install -e ".[bench]"
\end{verbatim}

API-backed runs use the exact provider/model identifiers stored in the
frozen run configuration. Authentication is supplied through environment
variables and is not serialized into released experiment artifacts.
Paper-facing commands are generated from the frozen configuration rather
than edited separately between Meta$^n$ and Ours.

Together, these artifacts reconstruct the common realized initialization, method-specific recursive lineages, development-side selection, and repeated held-out evaluation.

\FloatBarrier

\clearpage
\section{Extended Results and Diagnostic Analyses}
\label{app:additional_results}

This appendix reports complete results and domain-specific diagnostics for the main comparisons.
Temporal graphs include five-seed results and predictive-constraint diagnostics;
language memory includes the full depth-wise perplexity comparisons and
training-time predictive diagnostics; VLA includes internal checkpoint-selection,
held-out initialization-level, and rollout analyses together with the three-run
aggregate; and SRI
includes the complete six-task results, recursive candidate and strategy
evolution, and held-out variability.

\subsection{Temporal Graph Results}
\label{app:tg_results}

Table~\ref{tab:app_tg_full} reports the complete temporal-graph results.
All entries are averaged over five random seeds and are reported as
mean $\pm$ standard deviation.

\begin{table*}[t]
\caption{
\textbf{Complete temporal-graph results.}
Results are reported as mean $\pm$ standard deviation over five random
seeds. AP and AUC are reported in \%.
}
\label{tab:app_tg_full}

\centering
\footnotesize
\setlength{\tabcolsep}{3.2pt}
\renewcommand{\arraystretch}{1.08}

\resizebox{\textwidth}{!}{%
\begin{tabular}{
@{}llcccccc@{}
}
\toprule

\textbf{Dataset / Task}
&
\textbf{Setting}
&
\textbf{TGN}
&
\textbf{TGAT}
&
\textbf{TCL}
&
\textbf{CTAN}
&
\textbf{DyGFormer}
&
\textbf{Ours}
\\
\midrule

\multirow{2}{*}{
\makecell[l]{Wikipedia\\Node Classification}
}
&
Depth 2 AUC $\uparrow$
& $83.61 \pm 1.00$
& $85.45 \pm 0.26$
& $85.64 \pm 0.32$
& $87.31 \pm 0.29$
& $82.90 \pm 1.59$
& $\mathbf{88.81 \pm 0.77}$
\\

&
Depth 3 AUC $\uparrow$
& $87.47 \pm 1.05$
& $82.08 \pm 1.73$
& $81.26 \pm 1.77$
& $87.38 \pm 0.14$
& $87.00 \pm 0.71$
& $\mathbf{89.17 \pm 0.67}$
\\

\midrule

\multirow{4}{*}{
\makecell[l]{UCI\\Link Prediction}
}
&
Depth 3 AP $\uparrow$
& $93.04 \pm 1.73$
& $86.08 \pm 0.66$
& $86.35 \pm 0.82$
& --
& $\mathbf{95.47 \pm 0.27}$
& $94.14 \pm 0.33$
\\

&
Depth 3 AUC $\uparrow$
& $92.79 \pm 1.68$
& $85.44 \pm 0.61$
& $85.95 \pm 0.60$
& --
& $\mathbf{94.52 \pm 0.23}$
& $93.73 \pm 0.56$
\\

&
Depth 4 AP $\uparrow$
& $94.07 \pm 0.13$
& $84.14 \pm 1.70$
& $83.78 \pm 0.62$
& --
& $94.92 \pm 0.25$
& $94.89 \pm 0.09$
\\

&
Depth 4 AUC $\uparrow$
& $93.73 \pm 0.12$
& $83.09 \pm 1.79$
& $83.20 \pm 0.59$
& --
& $94.14 \pm 0.21$
& $\mathbf{94.47 \pm 0.12}$
\\

\bottomrule
\end{tabular}
}
\end{table*}
\begin{figure}[t]
    \centering
    \includegraphics[width=0.92\linewidth]{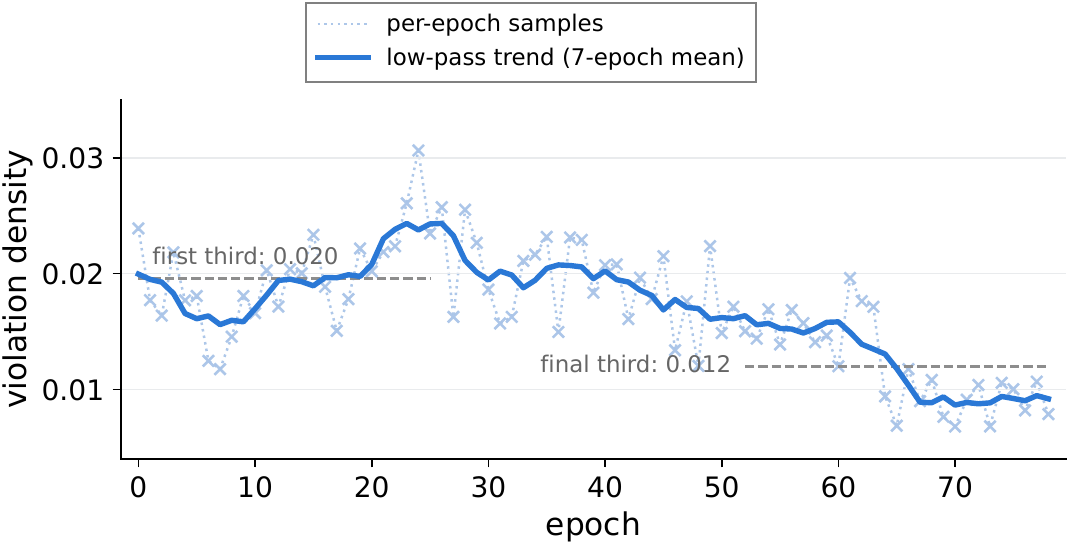}
    \caption{
    \textbf{Treewise predictive-constraint conflict during training.}
    Violation density is the fraction of grouped predictive constraints violated
    by the unconstrained task proposal. On UCI 3-hop (R0, $\kappa=0.05$), the
    mean violation density decreases from $0.020$ in the first third of training
    to $0.012$ in the final third.
    }
    \label{fig:tgn-constraint-conflict}
\end{figure}
Figure~\ref{fig:tgn-constraint-conflict} shows that predictive-constraint
conflicts remain present throughout training but become less frequent in later
epochs. The trajectory is not monotonic: conflict density rises temporarily in
the middle of training before declining, indicating that task optimization
continues to propose directions that can interfere with predictive preservation
as the representation evolves. The lower density in the final third suggests
that the task proposal becomes progressively more aligned with the
predictive-feasible region, reducing the amount of correction required from
the projection. This trend is consistent with the compressor and task objective
becoming better coordinated over training, while the remaining violations
motivate retaining the predictive constraints rather than assuming that such
conflicts disappear entirely.
\begin{table}[t]
\centering
\caption{
\textbf{Real-data first-order bridge audit on UCI.}
We compare the first-order change predicted by the constraint surrogate
with the measured finite change of the exact window statistic used by
\texttt{PairKFWindow}. The perturbation scale $\alpha$ multiplies the same
parameter-space update direction.
}
\label{tab:first_order_bridge}
\small
\setlength{\tabcolsep}{6.0pt}
\renewcommand{\arraystretch}{1.08}
\begin{tabular}{ccccc}
\toprule
$\alpha$
&
$\alpha\, q^\top \delta$
&
$\Delta \widehat J_{\mathrm{win}}$
&
$\Delta \widehat J_{\mathrm{win}} / (\alpha q^\top \delta)$
&
Relative scaling
\\
\midrule
1
&
$3.639\times10^{-7}$
&
$4.044\times10^{-7}$
&
$1.111$
&
$1.0\times$
\\

16
&
$5.822\times10^{-6}$
&
$6.386\times10^{-6}$
&
$1.097$
&
$15.8\times$
\\
\bottomrule
\end{tabular}
\end{table}

\paragraph{Interpretation.}
Table~\ref{tab:first_order_bridge} directly tests the local approximation
used to construct our predictive constraints. Here,
$q=\nabla_\theta \widehat J_{\mathrm{win}}$ denotes the local predictive
sensitivity represented by the constraint surrogate, $\delta$ is the
parameter-space perturbation direction, and
$\Delta \widehat J_{\mathrm{win}}$ uses the same \texttt{PairKFWindow} tree-weighted window aggregation.

The predicted and measured changes have the same sign and closely matched
magnitudes. At the base perturbation scale, the measured change is only
$1.111\times$ the first-order prediction; after increasing the perturbation
by a factor of $16$, this ratio remains nearly unchanged at $1.097$.
More importantly, the predicted change grows by exactly $16.0\times$,
while the measured change grows by $15.8\times$. Thus, over this
resolvable local range, the empirical window statistic responds almost
linearly to the perturbation and is well tracked by the directional
quantity used in the constraint construction.

This audit is important because the optimization procedure does not
directly constrain the finite post-update value of
$\widehat J_{\mathrm{win}}$. Instead, it constrains its local first-order
surrogate through terms of the form $q^\top\delta$. The close agreement
above therefore verifies, on a real UCI training window, the practical
bridge between the predictive statistic used in the implementation and
the linearized half-space constraints imposed on parameter updates.
The remaining approximately $10\%$ scale difference is stable across
perturbation sizes rather than increasing with $\alpha$, and therefore
does not exhibit the growth expected from a dominant nonlinear
higher-order error.
\FloatBarrier

\subsection{Language Memory Results}
\label{app:language_results}

We report the matched CCM-merge--Ours comparison on DailyDialog and
Multi-Session Chat (MSC) across LLaMA-7B and Gemma-4-E4B-It.
DailyDialog uses the fixed-target controlled-depth protocol described in
Appendix~\ref{app:language_memory}, while MSC evaluates session-opening
prediction as the amount of preceding cross-session history increases.

Within each backbone and dataset, CCM-merge and Ours use the same
evaluation examples, memory-depth construction, tokenizer, prediction
targets, and perplexity aggregation. The CCM-merge host checkpoint is
selected once by the pre-specified validation criterion and then kept fixed.
Ours is initialized from this same host checkpoint and trained under three
independent seeds. The selected checkpoint from each Ours run is evaluated
unchanged across the reported depth grid. Absolute perplexity values are
interpreted only within the same backbone because LLaMA-7B and
Gemma-4-E4B-It use different native tokenizers.

Table~\ref{tab:language_memory_complete} reports the complete matched
comparison at the evaluated depths. We abbreviate CCM-merge as CCM-M. For
Ours seed $r$, the relative PPL change with respect to the fixed CCM-M host is
\[
\Delta_r(\%)
=
100\,
\frac{
\operatorname{PPL}_{\mathrm{Ours},r}
-
\operatorname{PPL}_{\mathrm{CCM\text{-}M}}
}{
\operatorname{PPL}_{\mathrm{CCM\text{-}M}}
}.
\]
We report $\Delta_r$ mean and standard deviation across three Ours seeds. Negative values mean lower Ours PPL.

\begin{table*}[t] 
\centering 
\caption{ 
\textbf{Language-memory results across recursive depth.} 
DailyDialog reports dialogue-equal target PPL under the fixed-target 
controlled-depth protocol, while MSC reports dialogue-equal 
session-opening PPL. Lower PPL is better. CCM-merge reports one fixed 
validation-selected host checkpoint. Ours is initialized from that checkpoint 
and reports mean $\pm$ standard deviation over three independent training 
seeds. $\Delta(\%)$ reports the corresponding mean $\pm$ standard deviation 
of the seed-wise relative PPL change with respect to the fixed CCM-merge 
baseline. 
} 
\label{tab:language_memory_complete} 
 
\small 
\setlength{\tabcolsep}{5.0pt} 
\renewcommand{\arraystretch}{1.08} 
 
\resizebox{\textwidth}{!}{%
\begin{tabular}{ 
@{}ll 
ccc 
ccc 
@{} 
} 
\toprule 
 
\multirow{2}{*}{\textbf{Dataset / Metric}} 
& 
\multirow{2}{*}{\textbf{Depth}} 
& 
\multicolumn{3}{c}{\textbf{LLaMA-7B}} 
& 
\multicolumn{3}{c}{\textbf{Gemma-4-E4B-It}} 
\\ 
 
\cmidrule(lr){3-5} 
\cmidrule(lr){6-8} 
 
& 
& 
\textbf{CCM-M} 
& 
\textbf{Ours} 
& 
$\boldsymbol{\Delta}$ \textbf{(\%)} 
& 
\textbf{CCM-M} 
& 
\textbf{Ours} 
& 
$\boldsymbol{\Delta}$ \textbf{(\%)} 
\\ 
\midrule 
 
 
\multirow{6}{*}{ 
\makecell[l]{\textbf{DailyDialog}\\PPL $\downarrow$} 
} 
& 
\textit{Selected ckpt.} 
& \textit{s650} 
& \textit{s70} 
& -- 
& \textit{s600} 
& \textit{s40} 
& -- 
\\ 
\cmidrule(lr){2-8} 
 
& 
$L=4$ 
& 5.98 
& \textbf{$5.86\pm0.03$} 
& $-1.98\pm0.52$ 
& 7.00 
& \textbf{$6.97\pm0.01$} 
& $-0.54\pm0.19$ 
\\ 
 
& 
$L=6$ 
& 5.85 
& \textbf{$5.71\pm0.05$} 
& $-2.51\pm0.80$ 
& 6.84 
& \textbf{$6.78\pm0.01$} 
& $-0.80\pm0.11$ 
\\ 
 
& 
$L=8$ 
& 5.80 
& \textbf{$5.65\pm0.07$} 
& $-2.56\pm1.13$ 
& 6.85 
& \textbf{$6.77\pm0.05$} 
& $-1.05\pm0.75$ 
\\ 
 
& 
$L=11$ 
& 5.72 
& \textbf{$5.55\pm0.12$} 
& $-2.91\pm2.06$ 
& 6.77 
& \textbf{$6.66\pm0.07$} 
& $-1.64\pm0.98$ 
\\ 
 
& 
$L=13$ 
& 5.63 
& \textbf{$5.38\pm0.12$} 
& $-4.53\pm2.03$ 
& 6.73 
& \textbf{$6.58\pm0.07$} 
& $-2.24\pm0.97$ 
\\ 
 
\midrule 
 
 
\multirow{4}{*}{ 
\makecell[l]{\textbf{MSC}\\PPL $\downarrow$} 
} 
& 
\textit{Selected ckpt.} 
& \textit{s1000} 
& \textit{s40} 
& -- 
& \textit{s600} 
& \textit{s50} 
& -- 
\\ 
\cmidrule(lr){2-8} 
 
& 
$S=3$ 
& 6.82 
& \textbf{$6.77\pm0.00$} 
& $-0.84\pm0.03$ 
& 7.96 
& \textbf{$7.96\pm0.00$} 
& $0.00\pm0.00$ 
\\ 
 
& 
$S=4$ 
& 7.02 
& \textbf{$6.95\pm0.01$} 
& $-1.09\pm0.10$ 
& 8.23 
& \textbf{$8.22\pm0.00$} 
& $-0.12\pm0.00$ 
\\ 
 
& 
$S=5$ 
& 7.03 
& \textbf{$6.93\pm0.01$} 
& $-1.39\pm0.10$ 
& 8.16 
& \textbf{$8.14\pm0.00$} 
& $-0.25\pm0.00$ 
\\ 
 
\bottomrule 
\end{tabular} 
} 
\end{table*}

\paragraph{Training-time predictive diagnostics.}
Beyond the final depth-wise results, we examine whether the predictive-preservation
signal retained across the recursive memory interfaces evolves consistently with
downstream validation performance during training.
For the internal diagnostic, $J$ denotes the finite-test predictive value used
in the main mechanism analysis, and we compute
$\Delta J = J_{\mathrm{real}}-J_{\mathrm{shuffled}}$ at every optimization
window, where $J_{\mathrm{real}}$ uses the correctly paired future
observations and $J_{\mathrm{shuffled}}$ breaks this correspondence.
We additionally report $\Delta J_{\mathrm{within}\text{-}L}$, where
shuffling is restricted to examples at the same recursive depth, thereby
controlling for depth-dependent effects. Because the per-optimization-window statistic
is noisy, we report raw measurements and a $w=9$ smoothed trajectory.

Figure~\ref{fig:ccm_training_diagnostics} compares these internal predictive-preservation diagnostics
with the validation trajectory. The right panel reports the negative mean
relative PPL change of Ours with respect to the fixed CCM-merge host
baseline, averaged over all evaluated depths, so both panels follow
the same convention that
higher values indicate better behavior. The predictive-preservation
signals remain positive throughout training, while the validation
improvement also remains above the CCM-merge baseline. More importantly,
their training dynamics are broadly aligned: the smoothed $\Delta J$
signals strengthen through the intermediate stage and reach their strongest
region around the later-middle windows, while the validation gain similarly
increases toward s40; both subsequently weaken toward s50. This shared
trajectory links the internal predictive-preservation signal to the
observed downstream improvement and provides supporting evidence for the
validation-selected checkpoint.

\begin{figure*}[t]
    \centering
    \includegraphics[
        width=\textwidth
    ]{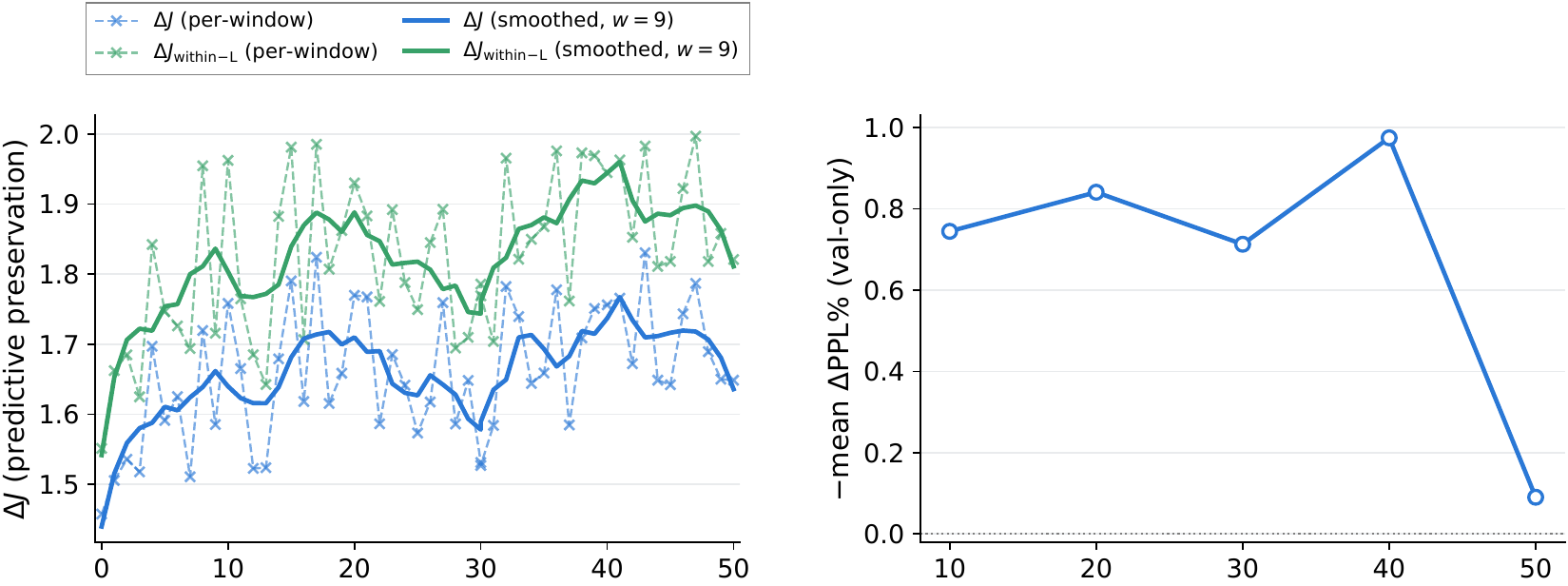}
    \vspace{-0.5em}

    \begin{minipage}[t]{0.48\linewidth}
        \centering
        \small (a) $\Delta J$ vs.\ shuffled controls
    \end{minipage}
    \hfill
    \begin{minipage}[t]{0.48\linewidth}
        \centering
        \small (b) Validation $\Delta$PPL (\%, inverted)
    \end{minipage}

    \vspace{-0.3em}
    \caption{
    \textbf{Training-time predictive diagnostics for CCM.}
    Internal predictive-preservation signals and downstream validation
    gains are shown over the same training trajectory, where higher values are better in both panels.
    }
    \label{fig:ccm_training_diagnostics}
\end{figure*}
\FloatBarrier

\subsection{Vision--Language--Action Results}
\label{app:vla-details}

Under the matched protocol in Appendix~\ref{app:vla}, we evaluate three
independently trained runs on the same 20 official held-out evaluation
initializations. Each evaluation initialization produces one closed-loop T3
rollout, within which up to three ordered lift-and-return repetitions can be
completed sequentially. For each training run, the checkpoint is selected on
the internal development split using strict T3 success, with T3 SGC used as a
tie-breaker; the resulting checkpoint step is reported below.

\begin{table*}[t]
\centering
\caption{
\textbf{Held-out initialization-level comparison for training run 1.}
Ours at 12k and MemoryVLA at 20k are evaluated on the same 20 official
held-out evaluation initializations. One closed-loop T3 rollout is run from
each initialization. ``Success'' reports the number of ordered T3 repetitions
completed sequentially within that rollout, from 0/3 to 3/3.
``Completion step'' gives the rollout step at which each credited repetition
is completed.
}
\label{tab:vla_seed1_demo}

\small
\setlength{\tabcolsep}{8.0pt}
\renewcommand{\arraystretch}{1.05}

\begin{tabular}{c cc cc}
\toprule
&
\multicolumn{2}{c}{\textbf{Ours @ 12k}}
&
\multicolumn{2}{c}{\textbf{MemoryVLA @ 20k}}
\\
\cmidrule(lr){2-3}
\cmidrule(lr){4-5}

\textbf{Eval. init.}
&
\textbf{Success}
&
\textbf{Completion step}
&
\textbf{Success}
&
\textbf{Completion step}
\\
\midrule

1  & 0/3 & --            & 0/3 & -- \\
2  & 1/3 & 105           & 1/3 & 105 \\
3  & 2/3 & 230, 259      & 3/3 & 230, 255, 293 \\
4  & 0/3 & --            & 0/3 & -- \\
5  & 1/3 & 135           & 1/3 & 135 \\
6  & 3/3 & 206, 256, 329 & 2/3 & 240, 452 \\
7  & 1/3 & 408           & 0/3 & -- \\
8  & 0/3 & --            & 0/3 & -- \\
9  & 0/3 & --            & 0/3 & -- \\
10 & 0/3 & --            & 0/3 & -- \\
11 & 0/3 & --            & 0/3 & -- \\
12 & 1/3 & 421           & 0/3 & -- \\
13 & 2/3 & 495, 543      & 0/3 & -- \\
14 & 1/3 & 483           & 0/3 & -- \\
15 & 2/3 & 202, 293      & 1/3 & 200 \\
16 & 3/3 & 222, 251, 326 & 2/3 & 242, 343 \\
17 & 0/3 & --            & 0/3 & -- \\
18 & 0/3 & --            & 1/3 & 231 \\
19 & 0/3 & --            & 0/3 & -- \\
20 & 0/3 & --            & 0/3 & -- \\

\midrule

\textbf{Completed repetitions}
&
\textbf{17/60}
&
--
&
11/60
&
--
\\

\textbf{Strict T3 SR}
&
\textbf{2/20}
&
--
&
1/20
&
--
\\

\bottomrule
\end{tabular}
\end{table*}

\begin{table*}[t] 
\centering 
\caption{ 
\textbf{Complete held-out T3 results on LIBERO-Mem across training runs.} 
Each training run is evaluated on the same 20 official held-out evaluation 
initializations, with one closed-loop T3 rollout from each initialization. 
Within each rollout, the task requires three ordered lift-and-return 
repetitions to be completed sequentially. Columns $0/3$--$3/3$ report the 
number of evaluation initializations whose rollout completes exactly zero, 
one, two, or three ordered repetitions, respectively. T3 SR denotes strict 
task success, i.e., the fraction of the 20 rollouts that complete all three 
ordered repetitions. T3 SGC is the total number of credited ordered 
repetitions completed within the 20 rollouts, divided by the maximum possible 
$20\times3=60$ ordered repetitions. The aggregate rows pool the corresponding 
initialization--training-run instances, and report the three-run mean $\pm$ standard deviation. 
} 
\label{tab:vla_t3_complete} 
 
\small 
\setlength{\tabcolsep}{5.0pt} 
\renewcommand{\arraystretch}{1.08} 
 
\begin{tabular}{lllrrrrcc} 
\toprule 
\textbf{Training run} 
& 
\textbf{Method} 
& 
\textbf{Ckpt. step} 
& 
$\mathbf{0/3}$ 
& 
$\mathbf{1/3}$ 
& 
$\mathbf{2/3}$ 
& 
$\mathbf{3/3}$ 
& 
\textbf{T3 SR $\uparrow$} 
& 
\textbf{T3 SGC $\uparrow$} 
\\ 
\midrule 
 
\multirow{2}{*}{Run 1} 
& MemoryVLA 
& 20k 
& 13 
& 4 
& 2 
& 1 
& 5.0\% 
& 18.3\% 
\\ 
 
& \textbf{Ours} 
& 12k 
& 10 
& 5 
& 3 
& 2 
& \textbf{10.0\%} 
& \textbf{28.3\%} 
\\ 
 
\cmidrule(lr){1-9} 
 
\multirow{2}{*}{Run 2} 
& MemoryVLA 
& 18k 
& 11 
& 6 
& 3 
& 0 
& 0.0\% 
& 20.0\% 
\\ 
 
& \textbf{Ours} 
& 17k 
& 12 
& 3 
& 3 
& 2 
& \textbf{10.0\%} 
& \textbf{25.0\%} 
\\ 
 
\cmidrule(lr){1-9} 
 
\multirow{2}{*}{Run 3} 
& MemoryVLA 
& 20k 
& 10 
& 7 
& 3 
& 0 
& 0.0\% 
& 21.7\% 
\\ 
 
& \textbf{Ours} 
& 18k 
& 11 
& 4 
& 2 
& 3 
& \textbf{15.0\%} 
& \textbf{28.3\%} 
\\ 
 
\midrule 
 
\multirow{2}{*}{\textbf{Aggregate}} 
& MemoryVLA 
& -- 
& 34 
& 17 
& 8 
& 1 
& $1.67 \pm 2.89$\% 
& $20.00 \pm 1.67$\% 
\\ 
 
& \textbf{Ours} 
& -- 
& 33 
& 12 
& 8 
& 7 
& $\mathbf{11.67 \pm 2.89}$\% 
& $\mathbf{27.22 \pm 1.92}$\% 
\\ 
 
\bottomrule 
\end{tabular} 
\end{table*}

\paragraph{Qualitative rollout comparison.}
To complement the aggregate and initialization-level results above, we
visualize two paired held-out rollouts with different sequential-completion
counts. On evaluation initialization 15, Ours completes 2/3 ordered
repetitions within the rollout, compared with 1/3 for MemoryVLA. On evaluation
initialization 16, Ours completes all 3/3 ordered repetitions, compared with
2/3 for MemoryVLA. The snapshots show the corresponding rollout evolution at
regularly sampled time steps under the same task setting.

\begin{figure*}[t]
    \centering

    \begin{minipage}[t]{0.49\textwidth}
        \centering
        \includegraphics[width=\linewidth]{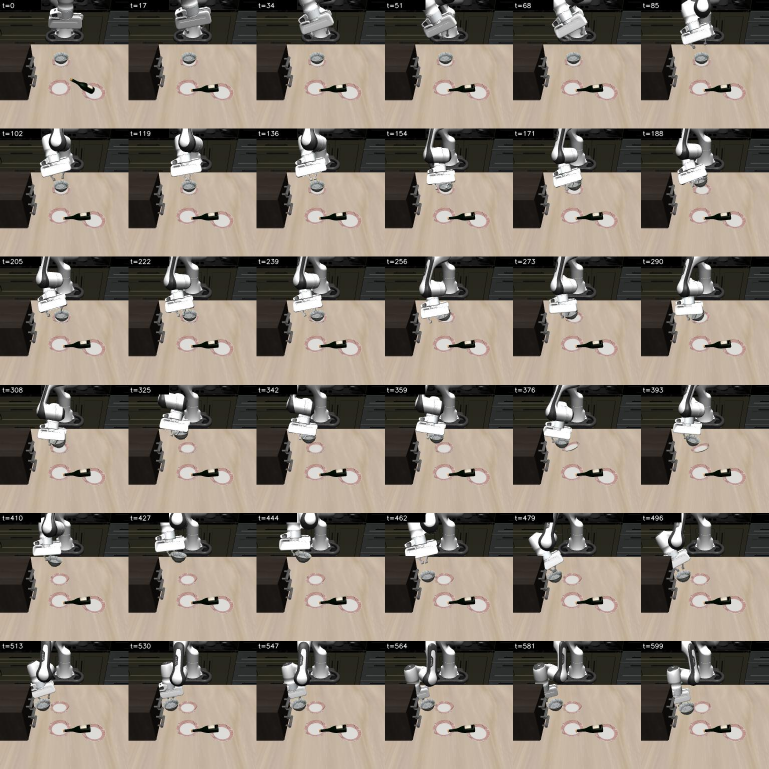}

        \vspace{0.25em}
        \textbf{(a) Ours @ 12k}
    \end{minipage}
    \hfill
    \begin{minipage}[t]{0.49\textwidth}
        \centering
        \includegraphics[width=\linewidth]{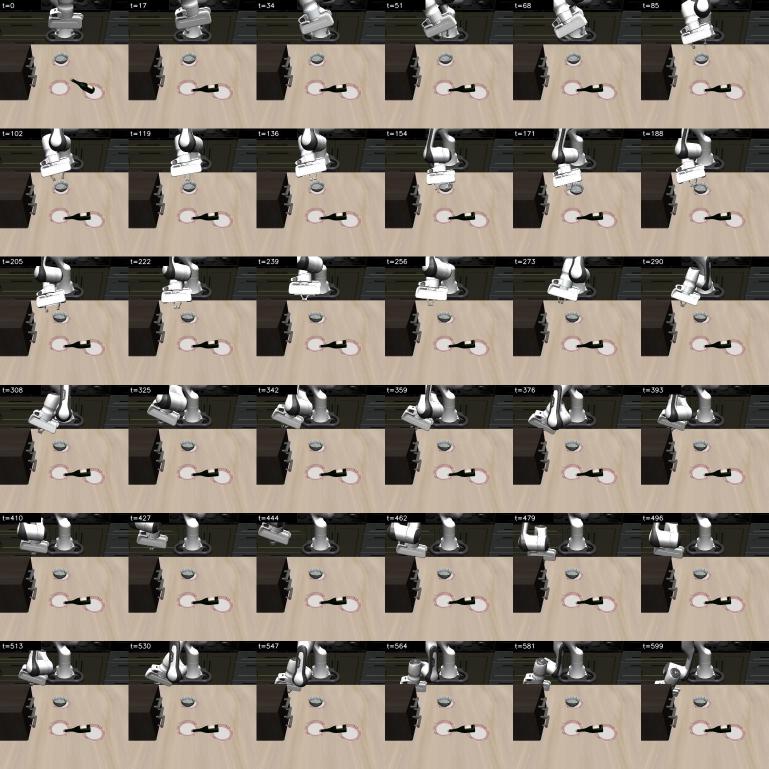}

        \vspace{0.25em}
        \textbf{(b) MemoryVLA @ 20k}
    \end{minipage}

    \caption{
    \textbf{Qualitative rollout comparison on LIBERO-Mem T3, held-out evaluation initialization 15.}
    Snapshots are sampled at regular intervals from paired rollouts of
    training run 1. Within this held-out T3 rollout, Ours completes 2/3
    ordered repetitions, while MemoryVLA completes 1/3
    (Table~\ref{tab:vla_seed1_demo}).
    }
    \label{fig:vla_demo15}
\end{figure*}

\begin{figure*}[t]
    \centering

    \begin{minipage}[t]{0.49\textwidth}
        \centering
        \includegraphics[width=\linewidth]{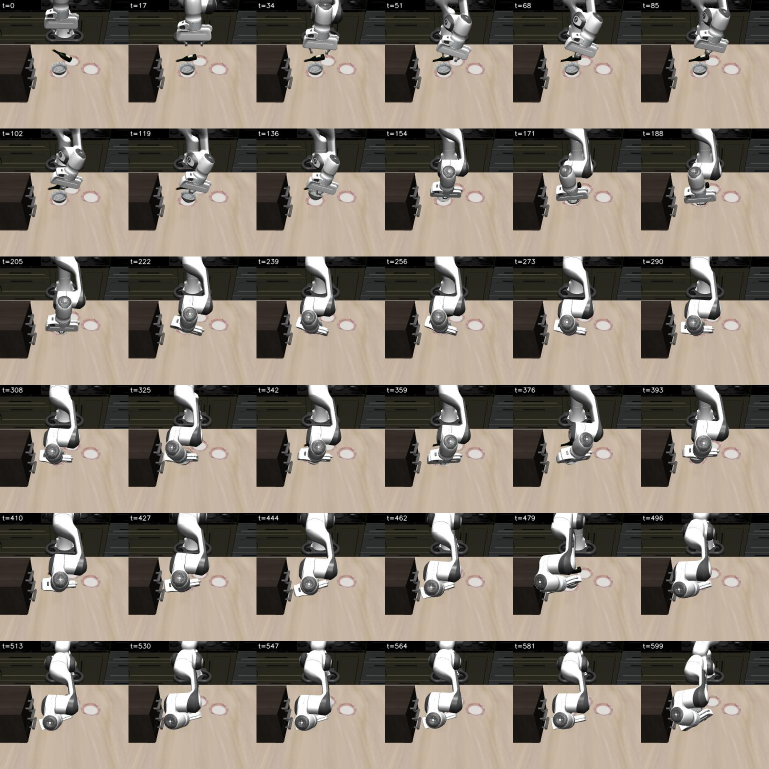}

        \vspace{0.25em}
        \textbf{(a) Ours @ 12k}
    \end{minipage}
    \hfill
    \begin{minipage}[t]{0.49\textwidth}
        \centering
        \includegraphics[width=\linewidth]{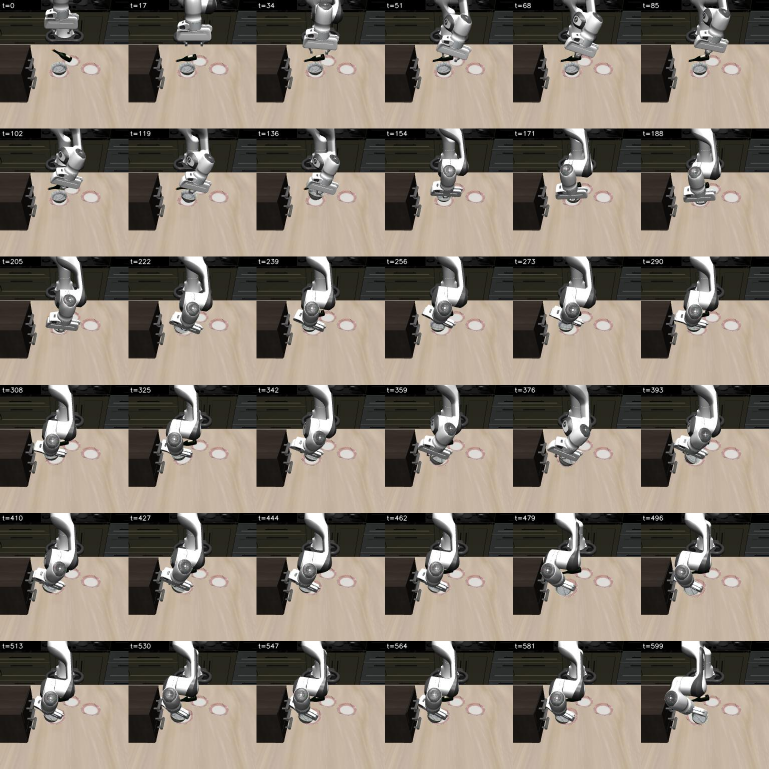}

        \vspace{0.25em}
        \textbf{(b) MemoryVLA @ 20k}
    \end{minipage}

    \caption{
    \textbf{Qualitative rollout comparison on LIBERO-Mem T3, held-out evaluation initialization 16.}
    Snapshots are sampled at regular intervals from paired rollouts of
    training run 1. Within this held-out T3 rollout, Ours completes all 3/3
    ordered repetitions, while MemoryVLA completes 2/3
    (Table~\ref{tab:vla_seed1_demo}).
    }
    \label{fig:vla_demo16}
\end{figure*}

\subsection{Recursive Self-Improvement Results}
\label{app:sri-results}

We provide additional results for recursive self-improvement (SRI),
including task-level held-out performance, candidate evolution during
recursive search, and a qualitative trace of the solver strategies
generated across successive rounds.
The experiments test whether predictive preservation changes the information
propagated through recursive improvement and, consequently, the downstream
solver trajectory. All three backbones follow the same SRI protocol described
in Appendix~\ref{app:sri-setting}.

\begin{table*}[t]
\centering
\caption{
\textbf{Summary of SRI results.}
We report development scores from the paired recursive-search run and one
fixed held-out evaluation of the frozen Meta$^n$ and Ours strategies.
The main table reports the mean over three matched held-out evaluations of
the same frozen strategies.
}
\label{tab:sri_summary}

\small
\setlength{\tabcolsep}{6.5pt}
\renewcommand{\arraystretch}{1.08}

\begin{tabular}{lccc ccc}
\toprule
&
\multicolumn{3}{c}{\textbf{Development}}
&
\multicolumn{3}{c}{\textbf{Held-out}}
\\
\cmidrule(lr){2-4}
\cmidrule(lr){5-7}

\textbf{Model}
&
\textbf{Meta$^n$}
&
\textbf{Ours}
&
$\boldsymbol{\Delta}$
&
\textbf{Meta$^n$}
&
\textbf{Ours}
&
$\boldsymbol{\Delta}$
\\
\midrule

gemini-3.1-pro
& 0.9001
& \textbf{0.9547}
& \textbf{+0.0546}
& 0.8338
& \textbf{0.8479}
& \textbf{+0.0141}
\\

gpt-5.5
& 0.8172
& \textbf{0.8279}
& \textbf{+0.0107}
& 0.7527
& \textbf{0.7579}
& \textbf{+0.0051}
\\

DeepSeek-V4.1-Flash
& 0.8070
& \textbf{0.8600}
& \textbf{+0.0530}
& 0.7924
& \textbf{0.8222}
& \textbf{+0.0298}
\\

\bottomrule
\end{tabular}
\end{table*}

\begin{table*}[t]
\centering
\caption{
\textbf{Task-level held-out SRI results.}
$\Delta$ denotes Ours minus Meta$^n$ in percentage points; scores are shown in percent.
Differences are computed before rounding, so displayed values may differ by 0.01 percentage points.
Each panel reports one fixed held-out evaluation of the frozen strategies
from the single paired recursive-search run for that backbone. The main table reports the mean over
three matched held-out evaluations of the same frozen strategies.
}
\label{tab:sri_taskwise}

\small
\setlength{\tabcolsep}{8pt}
\renewcommand{\arraystretch}{1.06}

\begin{tabular}{lrrr}
\toprule
\multicolumn{4}{l}{
\textbf{(a) gemini-3.1-pro}
}
\\
\midrule

\textbf{Task}
&
\textbf{Meta$^n$}
&
\textbf{Ours}
&
$\boldsymbol{\Delta}$
\\
\midrule

\texttt{aircraft\_landing}
& 83.89
& \textbf{92.82}
& \textbf{+8.93}
\\

\texttt{assignment\_problem}
& 100.00
& 100.00
& 0.00
\\

\texttt{assortment\_problem}
& 55.27
& 55.27
& 0.00
\\

\texttt{bin\_packing}
& \textbf{94.17}
& 93.57
& -0.60
\\

\texttt{capacitated\_warehouse\_location}
& 69.28
& \textbf{69.40}
& +0.12
\\

\texttt{common\_due\_date\_scheduling}
& 97.67
& \textbf{97.68}
& +0.01
\\

\midrule
\textbf{Mean}
& 83.38
& \textbf{84.79}
& \textbf{+1.41}
\\
\bottomrule
\end{tabular}

\vspace{0.8em}

\begin{tabular}{lrrr}
\toprule
\multicolumn{4}{l}{
\textbf{(b) gpt-5.5}
}
\\
\midrule

\textbf{Task}
&
\textbf{Meta$^n$}
&
\textbf{Ours}
&
$\boldsymbol{\Delta}$
\\
\midrule

\texttt{aircraft\_landing}
& 91.26
& 91.26
& 0.00
\\

\texttt{assignment\_problem}
& \textbf{53.13}
& 52.86
& -0.28
\\

\texttt{assortment\_problem}
& 45.98
& \textbf{49.33}
& \textbf{+3.35}
\\

\texttt{bin\_packing}
& 94.03
& 94.03
& 0.00
\\

\texttt{capacitated\_warehouse\_location}
& 69.85
& 69.85
& 0.00
\\

\texttt{common\_due\_date\_scheduling}
& 97.40
& 97.40
& 0.00
\\

\midrule
\textbf{Mean}
& 75.27
& \textbf{75.79}
& \textbf{+0.51}
\\
\bottomrule
\end{tabular}

\vspace{0.8em}

\begin{tabular}{lrrr}
\toprule
\multicolumn{4}{l}{
\textbf{(c) DeepSeek-V4.1-Flash}
}
\\
\midrule

\textbf{Task}
&
\textbf{Meta$^n$}
&
\textbf{Ours}
&
$\boldsymbol{\Delta}$
\\
\midrule

\texttt{aircraft\_landing}
& 87.04
& 87.04
& 0.00
\\

\texttt{assignment\_problem}
& 82.92
& \textbf{100.00}
& \textbf{+17.08}
\\

\texttt{assortment\_problem}
& \textbf{44.82}
& 44.51
& -0.30
\\

\texttt{bin\_packing}
& \textbf{95.69}
& 95.68
& -0.02
\\

\texttt{capacitated\_warehouse}
& 68.73
& \textbf{69.76}
& \textbf{+1.03}
\\

\texttt{common\_due\_date}
& 96.25
& \textbf{96.33}
& +0.08
\\

\midrule
\textbf{Mean}
& 79.24
& \textbf{82.22}
& \textbf{+2.98}
\\
\bottomrule
\end{tabular}

\end{table*}

\paragraph{Candidate evolution.}
The final task scores alone do not show whether predictive preservation
actually changes the recursive search trajectory.
We therefore inspect the development-score trajectory for DeepSeek-V4.1-Flash.
The two procedures begin from the same depth-1 state but retain different numbers of candidates and reach different scores.

\begin{table}[t]
\centering
\caption{
\textbf{Candidate evolution for DeepSeek-V4.1-Flash.}
\#Cand. denotes the number of retained candidates at each recursive
depth. Ours changes both the retained candidate set and the resulting
development trajectory.
}
\label{tab:sri_deepseek_candidates}

\small
\setlength{\tabcolsep}{4.8pt}
\renewcommand{\arraystretch}{1.08}

\begin{tabular}{lrrrrr}
\toprule
&
\multicolumn{2}{c}{\textbf{Meta$^n$}}
&
\multicolumn{2}{c}{\textbf{Ours}}
&
\\
\cmidrule(lr){2-3}
\cmidrule(lr){4-5}

\textbf{Depth}
&
\textbf{\#Cand.}
&
\textbf{Dev}
&
\textbf{\#Cand.}
&
\textbf{Dev}
&
$\boldsymbol{\Delta}$
\\
\midrule

d1
& 1 & 0.6845
& 1 & 0.6845
& 0.0000
\\

d2 (merge)
& 5 & 0.8073
& 5 & \textbf{0.8605}
& \textbf{+0.0532}
\\

d3
& 6 & 0.8073
& 14 & \textbf{0.8546}
& \textbf{+0.0473}
\\

d4
& 14 & 0.8037
& 4 & \textbf{0.8605}
& \textbf{+0.0568}
\\

d5
& -- & --
& 2 & 0.8546
& --
\\

\bottomrule
\end{tabular}
\end{table}

\paragraph{Qualitative evolution of solver strategies.}
To examine how the recursive context changes subsequent model responses,
we trace the gemini-3.1-pro recursive-search run across generations.
Table~\ref{tab:sri_strategy_evolution} summarizes the main solver
strategies proposed by the model.
The clearest trajectory occurs on \texttt{aircraft\_landing}, where
the model progressively separates discrete sequence search from
continuous landing-time optimization.

\begin{table*}[t]
\centering
\caption{
\textbf{Evolution of model-generated solver strategies in
gemini-3.1-pro.}
Each row summarizes the main strategy proposed at one generation and
the model's rationale for the subsequent change.
The deployed aircraft solver ultimately retains the Gen.~5 strategy:
simulated annealing over landing sequences followed by exact LP
refinement of the top candidate sequences.
}
\label{tab:sri_strategy_evolution}

\scriptsize
\setlength{\tabcolsep}{3.7pt}
\renewcommand{\arraystretch}{1.12}

\begin{tabular}{
p{0.043\textwidth}
p{0.245\textwidth}
p{0.315\textwidth}
p{0.315\textwidth}
}
\toprule

\textbf{Gen.}
&
\textbf{Meta$^n$ response}
&
\textbf{Ours response}
&
\textbf{Reason for the change}
\\
\midrule

1
&
Accelerate assignment solving and refine assortment packing
through restarts and stock selection.
&
Use \texttt{linear\_sum\_assignment} for assignment;
diversify assortment ordering and stock selection.
&
Runtime is the main limitation for assignment, while assortment
is sensitive to fixed ordering and stock choices.
\\

2
&
Continue assortment refinement using stronger packing heuristics
and stock combinations; one branch explores warehouse search.
&
Remove restrictive assortment choices, improve optional-piece
handling, and introduce MaxRects-style packing.
&
Earlier generated solvers do not fully realize the intended packing
strategy, leaving implementation-level restrictions.
\\

3
&
Continue assortment improvement with randomized ordering,
GRASP-style search, and richer placement heuristics.
&
Continue assortment refinement, introduce local search for warehouse,
and apply simulated annealing (SA) to aircraft landing sequences.
&
After assortment improves, the model identifies additional headroom
in warehouse and aircraft optimization.
\\

4
&
Further refine assortment using SA/hill climbing and additional
placement and stock-selection heuristics.
&
Strengthen warehouse subset search and add exact transportation-LP
evaluation; continue improving assortment packing helpers.
&
Heuristic warehouse evaluation can mis-rank candidate subsets,
motivating exact final evaluation.
\\

5
&
Continue assortment-oriented search and packing refinement.
&
For \texttt{aircraft\_landing}, retain SA for sequence search,
keep the top 10 candidate sequences, and optimize landing times
with \texttt{scipy.optimize.linprog}.
&
The model identifies greedy landing-time assignment as the main
remaining bottleneck: promising sequences can be mis-scored because
early/late penalties are not optimized globally.
\\

6
&
Continue assortment refinement and implementation cleanup.
&
Explore wider aircraft candidate funnels and exact-LP local
refinement, while continuing assortment helper integration.
&
After the Gen.~5 aircraft improvement, the model attributes the
remaining limitation to candidate pre-selection before exact
evaluation.
\\

\bottomrule
\end{tabular}
\end{table*}

The aircraft trajectory illustrates a progressive solver refinement.
The model first strengthens discrete search over landing sequences,
then identifies greedy landing-time assignment as the remaining
bottleneck, and finally introduces exact LP refinement over the best
SA candidates.
Although later responses explore wider candidate funnels, the selected
deployed solver retains the Gen.~5 SA--plus--LP strategy.

\paragraph{Held-out evaluation variability.}
Each main-table SRI held-out score is the mean over three matched evaluations
of the same frozen strategy. As a representative example, the three
held-out evaluations of the frozen Ours gemini-3.1-pro aircraft strategy score
$0.9065$, $0.9259$, and $0.9378$, yielding a mean of \textbf{0.9234}.
The solver code, prompt, task instances, and evaluation settings are fixed
across repeats; these repetitions quantify evaluation variability of one
recursive-search outcome rather than independent search seeds.

\FloatBarrier

\clearpage
\section{Implementation, Ablations, and Reproducibility}
\label{app:implementation}

This appendix describes how the common predictive-preservation framework is
instantiated in each host system, followed by controlled ablations and
hyperparameter sensitivity. Dataset construction, split rules, and evaluation
protocols are kept in Appendix~\ref{app:experimental_details}; complete
results and diagnostic analyses are kept in
Appendix~\ref{app:additional_results}.

\subsection{Temporal-Graph Implementation}
\label{app:impl_tg}

\paragraph{Host recursion and recursive interfaces.}
We use TGN as the common host for UCI temporal link prediction and
Wikipedia node classification. For a temporal query with propagation depth
$L$, TGN recursively constructs node representations from temporally sampled
neighborhoods. At each propagation layer, information collected from the
current temporal subtree is aggregated into an intermediate representation
and then reused by the parent recursion. Consequently, signals originating
deeper in the temporal neighborhood must pass through more successive
recursive interfaces before reaching the task prediction. These reused
intermediate representations are the recursive interfaces at which predictive
preservation is applied: information needed by later prediction must survive
the current aggregation before it can propagate further toward the root.

\paragraph{Method insertion.}
We instantiate the recursive interface at each internal propagation layer
$0<\ell<L$. TGN first computes its original temporal aggregation at layer
$\ell$. The compression map $\Gamma$ then receives the layer-specific own
state together with this vanilla aggregate and produces the representation
passed to the parent recursion. The leaf layer ($\ell=0$) retains the raw
host state, and the task-facing root ($\ell=L$) retains the original TGN
aggregation operator. The method therefore acts precisely on intermediate
representations that will subsequently be reused at another recursive interface after a further round of temporal
aggregation.

This construction is shared by both temporal-graph tasks. The host temporal
memory, neighborhood construction, temporal aggregation procedure, and task
pipeline are retained; the distinction between UCI and Wikipedia remains
the original benchmark task and evaluation protocol.

\paragraph{Predictive records and predictive feasibility optimization.}
During the same recursive query, the host adapter traces the eligible
internal interfaces together with the child--parent relationships actually
consumed by the TGN computation. These traces provide the aligned local predictive records used to estimate
the finite-test predictive value and empirical predictive deficiency introduced
in Section~\ref{sec:predictive-compression-learning}. The resulting predictive
sensitivities are used by the common constrained-update procedure, with the
predictive correction restricted to the compression parameter block
$\Gamma$. All remaining trainable TGN parameters continue to follow the host task
optimization.

The statistical configuration, hyperparameter selection, and optimization
settings associated with the predictive-preservation procedure are reported
jointly with the corresponding method-level configurations below.

\subsection{Language-Memory Implementation} 
\label{app:impl_language} 
 
\paragraph{Host and recursive memory.} 
We use CCM as the host for the language-memory experiments. Under its 
\texttt{merge\_recur} construction, dialogue history is compressed 
progressively rather than presented in full at every prediction step. 
After processing the first $t$ history turns, CCM aggregates the 
corresponding compression-token states into a compact memory $M_t$, 
which is then reused when the next turn is incorporated. The resulting 
computation therefore forms a chain 
\[ 
M_1 \rightarrow M_2 \rightarrow \cdots \rightarrow M_L 
\rightarrow \text{response prediction}. 
\] 
The language-model backbone, dialogue objective, and decoding procedure 
remain those of the CCM host. 
 
\paragraph{Recursive interface and method insertion.} 
This recursive memory update instantiates the recursive interface used by 
our method. Information from an early dialogue turn can influence the 
final response only if it survives the intermediate memory states through 
which the history is repeatedly propagated. In the notation of the main 
method, the accumulated information available before the current memory 
update plays the role of the pre-compression state $U_t$, while the memory 
reused by subsequent CCM computation is the compressed representation 
associated with $Z_t^\Gamma$. 
 
Let $\bar M_t$ denote the memory obtained from the native CCM merge at 
turn $t$. We retain this host computation and realize the method-specific compression 
map through the residual $R_\Gamma$ at the recursive merge interface: 
\[ 
M_t^\Gamma 
= 
\bar M_t 
+ 
R_\Gamma(M_{t-1}^\Gamma,h_t,t), 
\] 
where $h_t$ is the current compressed-turn representation. The residual 
map is initialized with zero output, so the initial computation coincides 
with the corresponding CCM host. The resulting $M_t^\Gamma$ is then 
consumed by the next CCM update in the same position as the original 
memory state. 
 
\paragraph{Predictive observations.} 
For a dialogue 
\[ 
[u_1,\ldots,u_L,c,y], 
\] 
we construct predictive observations at each eligible recursive memory 
interface. For an intermediate cut, the predictive observations are drawn 
from utterances that remain downstream of that memory state in the same 
dialogue trajectory, combining a nearby future event with the eventual 
response target. At the terminal cut, the remaining context $c$ and 
response $y$ provide the final predictive observation. Thus, every predictive 
observation is formed from information that is genuinely unavailable before 
the corresponding compression event and subsequently contributes to the same 
dialogue prediction. 
 
A fixed readout of $M_t^\Gamma$ provides the compressed representation used 
to estimate the finite-test predictive value. The readout carries no trainable 
parameters and serves only to evaluate predictive preservation at the recursive 
interface. 
 
\paragraph{Training integration.} 
The same predictive-preservation criterion is evaluated at these recursive 
interfaces, while language-memory optimization uses the aggregate predictive 
update. This is a specialization of the same training principle in which 
predictive preservation is enforced directly on $\Gamma$, rather than through 
conflict-triggered projection of the host update. The task gradient updates the 
trainable CCM compression parameters together with $\Gamma$, whereas the 
predictive gradient is restricted to $\Gamma$; the base language-model parameters 
remain frozen. Consequently, the intervention changes how predictive information 
is preserved across recursive memory updates without changing the dialogue task 
or evaluation protocol. At inference, the learned $\Gamma$ residual remains part 
of the recursive memory update, while the training-time observer and predictive 
statistic are absent. Optimization coefficients and formal training configurations 
are reported separately with the shared hyperparameter settings.
\subsection{Vision--Language--Action Implementation}
\label{app:impl_vla}

\paragraph{Host and recursive memory.}
We instantiate the method on MemoryVLA for the LIBERO-Mem experiments.
MemoryVLA maintains a compact perceptual--cognitive memory that is updated
and reused throughout sequential visuomotor execution. At each stage, the
current visual observation, language instruction, and previously retained
memory are integrated to form the working state used for subsequent memory
updates and action generation. The resulting computation therefore contains
the same recursive structure as the preceding domains: information acquired
at an earlier stage must survive a sequence of memory updates before it can
influence later actions.

\paragraph{Recursive interface and method insertion.}
The recursive working-memory update instantiates the recursive interface used
by our method. Let $U_v$ denote the working representation available before
the memory compression at stage $v$. We apply the compression map at this
interface,
\[
Z_v^\Gamma=\Gamma(U_v),
\]
and feed the resulting compact memory back into the original MemoryVLA
computation. This memory is subsequently reused by later perception,
memory-update, and action-generation stages. Hence, information discarded
at one interface cannot contribute to downstream control decisions, making
this recursive memory state the recursive interface at which predictive
preservation is applied.

The intervention is restricted to this internal recursive interface. The
original visual and language encoders, action-generation pathway, task
objective, and sequential execution procedure are otherwise retained.

\paragraph{Predictive observations.}
Predictive observations are constructed from future events along the same
demonstration trajectory. For each eligible recursive memory interface, the compressed
state is paired with recorded downstream actions that occur after that
memory state has been formed. These future actions instantiate the local
predictive events used by the empirical predictive criterion:
a memory state is favored when it retains information useful for later control.

The future trajectory is used only to construct training-time predictive
supervision. It is never provided to the working memory, compression map,
or policy input, and test-time rollouts follow causal MemoryVLA execution.

\paragraph{Training integration.}
The resulting interface-wise predictive records instantiate the same
predictive-preservation criterion used for temporal graphs and language memory. The
task loss continues to optimize the original VLA objective, while the
predictive-preservation signal constrains updates to the designated memory
compression block. Thus, the method changes how information is retained
across successive memory updates while leaving the host task and downstream
control computation intact.

The reported LIBERO-Mem experiments use the T3 setting throughout. The
predictive-preservation construction is applied unchanged across the repeated
stages of this multi-stage manipulation trajectory. Formal runs use dense
frame-level training with batch size 4 and gradient accumulation 4 (effective
batch 16), a constant schedule with zero warmup, a task-LR argument of
$10^{-5}$ and a Gamma learning rate of $2\times10^{-5}$. The host is frozen in
the Gamma-only stage, so only the Gamma optimizer updates method-specific
parameters. All formal VLA runs are capped at 25k optimizer steps and save a
checkpoint every 1k steps. Checkpoint selection and final held-out rollout
evaluation use disjoint data as specified in Appendix~\ref{app:vla}; the
official evaluation rollouts are not reused for model selection.


\subsection{Self-Recursive Improvement Implementation} 
\label{app:impl_sri} 
 
\paragraph{Host recursion and recursive interfaces.} 
We instantiate the method on Meta$^n$, where recursive improvement proceeds 
by repeatedly reusing execution experience from earlier search rounds. 
Layer-1 candidates produce task-solving programs and execution traces; at 
subsequent depths, the operator $\Omega$ consumes retained traces together 
with previously injected code to generate or refine the next candidate. 
The resulting candidate is evaluated and may contribute further experience 
to later recursive rounds. 
 
Before every \texttt{generate} or \texttt{refine} call, Meta$^n$ reduces the 
available historical context to satisfy a fixed context budget. This 
context-reduction step instantiates the recursive interface. Earlier 
heuristics, execution outcomes, failure modes, and injected repair routines 
can influence deeper search only if they survive this reduction. Repeated selection therefore creates the same preservation problem as repeated state compression in the preceding domains. 
 
\paragraph{Offline lineage construction.} 
Unlike the other hosts, Meta$^n$ does not expose a differentiable recursive 
state together with its future observations during a single forward pass. 
We therefore construct the predictive supervision from completed host 
trajectories. We first execute Meta$^n$ with its native context reduction 
and freeze the resulting archives. From each archive, we recover only 
lineages that actually occur in the recursive search, 
\[ 
c_v \longrightarrow c_{v+1} \longrightarrow c_{v+2}. 
\] 
Each candidate $c_v$ defines one recursive interface. Its pre-compression 
state is 
\[ 
U_v = 
\begin{cases} 
(\text{traces},\,\text{context stack}), 
    & \texttt{generate},\\[2mm] 
(\text{child traces},\,\text{context stack} 
    + \text{previous injection}), 
    & \texttt{refine}. 
\end{cases} 
\] 
The remaining task-level prompt information is retained as $C_v$ and is 
passed through unchanged. 
 
Only real child and grandchild states are used to construct future 
observations. For an eligible lineage, we define 
\[ 
Y_{v+1}=\Phi(T_{v+1}), 
\qquad 
Y_{v+2}=\Phi(T_{v+2}), 
\] 
where $T_{v+1}$ and $T_{v+2}$ are the execution traces produced by the 
actual child and grandchild candidates. Interfaces without an observed 
grandchild are excluded from the predictive record. 
 
\paragraph{Historical-trace compression and retention.} 
The items contained in $U_v$ are encoded by a frozen encoder $E_0$, 
\[ 
X_v= 
\bigl[ 
E_0(x_1),\ldots,E_0(x_{n_v}) 
\bigr], 
\qquad 
q_v=E_0(\text{current traces}), 
\] 
and passed to the learned compression map $\Gamma_\theta$. In the 
implementation, $\Gamma_\theta$ uses a query-conditioned slotted fusion: 
\[ 
Q_v = Q_0 + B_cA_c q_v, 
\qquad 
K=X_vW_K,\quad 
V=X_vW_V,\quad 
g=\sigma(X_vw_g), 
\] 
\[ 
A_v 
= 
\operatorname{softmax} 
\left( 
\frac{Q_vK^\top}{\sqrt{d_e}} 
+ 
\mathbf 1\log(g+\epsilon)^\top 
+ 
M_v 
\right), 
\] 
\[ 
Z_v 
= 
\operatorname{LN}(Q_v+A_vV). 
\] 
Here $Z_v$ is the compact representation used by the predictive statistic, 
while $A_v$ determines which host-produced historical items are retained. 
This is the discrete-retention specialization of the same predictive-
preservation principle used for continuous recursive states: rather than 
controlling which components survive in a continuous latent representation, 
$\Gamma_\theta$ controls which history items survive the capacity-limited 
recursive interface. The reducer returns the original \texttt{Trace} and 
\texttt{InjectedCode} objects selected under the same context budget as 
Meta$^n$; it does not synthesize additional experience or modify the 
downstream prompt construction. 
 
\paragraph{Predictive supervision and offline training.} 
The future side of each training record is obtained from the same recovered 
lineage. We encode the real child and grandchild observations as 
\[ 
f_v^{(1)} 
= 
E_0(\operatorname{serialize}(Y_{v+1})), 
\qquad 
f_v^{(2)} 
= 
E_0(\operatorname{serialize}(Y_{v+2})), 
\] 
and form fixed predictive tests 
\[ 
p_v^{(r)} 
= 
R_S^{(r)} 
\begin{bmatrix} 
f_v^{(1)}\\ 
f_v^{(2)} 
\end{bmatrix}. 
\] 
We additionally record the realized two-step improvement 
\[ 
R_v 
= 
\operatorname{score}(c_{v+2}) 
- 
\operatorname{score}(c_v). 
\] 
After projecting the compressed state to the statistic space, 
\[ 
b_v=P_b\,\operatorname{vec}(Z_v), 
\] 
the predictive and improvement statistics are 
\[ 
J_{\mathrm{LPSE}} 
= 
\frac{1}{4}\sum_{r=1}^{4} 
J_{\mathrm{CCA}}(B,P^{(r)}), 
\qquad 
J_{\mathrm{task}} 
= 
J_{\mathrm{CCA}}(B,R). 
\] 
Their sensitivities instantiate the same predictive-preservation training 
principle used elsewhere in the paper: the task objective favors histories 
associated with subsequent recursive improvement, while the predictive-preservation 
term discourages discarding information that remains predictive across the 
recovered interfaces. 
 
Training of $\Gamma_\theta$ is performed entirely offline on the frozen 
archive records. The language model and item encoder remain fixed, and no 
additional $\Omega$ calls are made during this optimization. 
 
\paragraph{Evaluation-time integration.} 
After offline training, $\Gamma_\theta$ is frozen and inserted only at the 
two native context-reduction points preceding \texttt{generate} and 
\texttt{refine}. Formal evaluation uses only the information available at 
the current recursive state; child and grandchild observations are used 
solely for constructing the offline training records and are never accessed 
during evaluation. 
 
The Meta$^n$ archive, parent-selection and acceptance rules, recursive 
candidate-generation procedure, $\Omega$ prompt and output schema, code 
executor, and evaluator remain unchanged. Meta$^n$ and Ours use the same 
context budget and recursive-search budget. Thus, SRI does not introduce a 
different preservation objective; it realizes the same interface-level 
principle in a discrete setting, where compression takes the form of deciding 
which components of recursive history survive a capacity-limited context 
interface so that they remain useful for later improvement.

\FloatBarrier

\subsection{Training and Compute Configuration}
\label{app:training-compute}

Table~\ref{tab:training_compute} summarizes the run-level configurations used
for the formal experiments. Method-specific statistical and feasibility
hyperparameters are reported separately in Appendix~\ref{app:hyperparameters}.
The table records the actual experimental configuration rather than host
repository defaults.

\begin{center}
\footnotesize
\captionof{table}{\textbf{Training and compute configuration used in the formal experiments.}}
\label{tab:training_compute}
\setlength{\tabcolsep}{3.2pt}
\renewcommand{\arraystretch}{1.10}
\begin{tabular}{@{}p{2.45cm}p{3.00cm}p{3.15cm}p{3.15cm}p{3.05cm}@{}}
\toprule
\textbf{Configuration} & \textbf{Temporal Graph} & \textbf{Language Memory} & \textbf{VLA} & \textbf{SRI} \\
\midrule
Host & TGN & CCM-merge & MemoryVLA & Meta$^n$ \\
Formal repetitions & 5 random seeds & fixed CCM-merge host + 3 Ours seeds & 3 independently trained runs & 1 paired search run / backbone \\
Optimization & Adam & \makecell[l]{AdamW; base LM frozen;\\ task gradient on CCM comp. + $\Gamma$;\\ predictive gradient on $\Gamma$} & AdamW, frozen host + $\Gamma$ & API-backed recursive search \\
Batch / accumulation & 200 & adaptive optimization windows & $4\times4$ (effective 16) & -- \\
Primary learning rate & task $10^{-4}$; repr. $3\times10^{-4}$ (UCI formal line) & frozen formal CCM schedule & task-LR argument $10^{-5}$; $\Gamma$ LR $2\times10^{-5}$ & -- \\
Schedule & validation-selected & cosine host schedule & constant; warmup 0 & fixed search budget \\
Maximum budget & setting-specific formal epoch budget & 60 optimization windows & 25k optimizer steps & fixed recursive-search budget \\
Checkpoint cadence & validation evaluated during training & CCM-M fixed once; Ours validation-only & every 1k steps; internal dev selection & frozen strategy after search \\
Evaluation repeats & split-level evaluation & split-level evaluation & 20 held-out T3 rollouts per selected checkpoint & 3 matched held-out evaluations \\
Compute & RTX 5090D & A100 80GB & A100 80GB & API-backed \\
Software & Python $\geq3.9$; PyTorch 2.8 environment & Python $\geq3.9$; PyTorch 2.8 environment & Python 3.10; PyTorch 2.2.0; Transformers 4.40.1; PEFT 0.11.1 & Python $\geq3.11$ \\
\bottomrule
\end{tabular}
\end{center}

For VLA, the 25k budget is an upper bound rather than a requirement to report
the endpoint. Checkpoints are saved every 1k steps and selected using only the
20-demonstration internal development split described in
Appendix~\ref{app:vla}. After selection, the checkpoint is frozen and evaluated
on a separate set of 20 official held-out T3 evaluation initializations. The reported SR/SGC rollouts are used for neither training nor checkpoint selection.

\FloatBarrier
\subsection{Efficiency and Model Overhead}
\label{app:efficiency}

We measure computational overhead on the TGN host, where the training and
inference paths can be timed under matched structural settings. All timings in
Tables~\ref{tab:tgn_training_efficiency} and~\ref{tab:tgn_efficiency} use the
same RTX 5090D environment within each comparison. The predictive-feasibility
machinery is active only during training; inference uses the ordinary host
recursion with the learned compressor and performs no feasibility projection.

\begin{table}[H]
\centering
\caption{\textbf{Training-time overhead on TGN.} Mean epoch time under matched settings.}
\label{tab:tgn_training_efficiency}
\small
\setlength{\tabcolsep}{6pt}
\begin{tabular}{lccc}
\toprule
\textbf{Setting} & \textbf{Vanilla} & \textbf{Ours} & \textbf{Ratio} \\
\midrule
3L10N & 87.1 s & 165.7 s & 1.9$\times$ \\
4L5N  & 88.6 s & 190.6 s & 2.2$\times$ \\
\bottomrule
\end{tabular}
\end{table}

\begin{table}[H]
\centering
\caption{\textbf{Inference-time comparison on TGN.} The projection is disabled at inference.}
\label{tab:tgn_efficiency}
\small
\setlength{\tabcolsep}{5.2pt}
\begin{tabular}{lrrrr}
\toprule
\textbf{Setting} & \textbf{Vanilla eval} & \textbf{Ours eval} & \textbf{Vanilla build} & \textbf{Ours build} \\
\midrule
3L10N & 68.1 s & 66.0 s & 1.6 s & 0.6 s \\
4L5N  & 86.7 s & 87.6 s & 0.5 s & 0.7 s \\
\bottomrule
\end{tabular}
\end{table}

The extra computation is therefore concentrated in training: epoch time is
$1.9$--$2.2\times$ the vanilla host in these settings. Test-time evaluation stays within about $\pm3\%$ of vanilla; inference uses no projection or replay.

\paragraph{Parameter overhead.}
The vanilla TGN host contains 3,774,385 unique parameters (3.77M). The
compressor adds 381,788 parameters (0.38M), giving approximately 4.16M unique
parameters in total, or a 10.1\% increase relative to the host. The serialized
checkpoint contains duplicated references to the compressor and therefore has a
larger stored parameter count; this duplication does not increase the unique
model parameterization used at runtime.

\FloatBarrier
\subsection{Ablation Study}
\label{app:ablation}

We further examine which components of the proposed training procedure contribute
to predictive preservation. All ablations are conducted on UCI link prediction
at propagation depth 3 using seed 0. Each configuration is run once under the
same data split, host architecture, training budget, and evaluation protocol.
Accordingly, this experiment is intended as a controlled mechanism-level
diagnosis of the full method rather than an estimate of cross-seed variability.
The corresponding multi-seed results for the main comparison are reported
separately.

\begin{table}[t]
\vspace{-0.4em}
\centering
\caption{
\textbf{Ablation study on UCI at depth 3 (seed 0).}
Each variant changes one component of the full training procedure while keeping
the remaining experimental protocol fixed.
}
\label{tab:ablation_uci_seed0}
\vspace{-0.4em}

\small
\setlength{\tabcolsep}{5.5pt}
\renewcommand{\arraystretch}{1.06}

\begin{tabular}{p{1.45cm}p{5.55cm}cc}
\toprule
\textbf{Component}
& \textbf{Ablation variant}
& \textbf{AP}
& \textbf{AUC}
\\
\midrule

\textbf{Objective}
& Task-Only Training
& 92.62 & 92.13
\\

& History-reconstruction objective
& 90.89 & 90.28
\\

& Direct Future Prediction
& 92.76 & 92.24
\\

\addlinespace[2pt]

\textbf{Supervision}
& Single-Event Predictive Criterion
& 92.41 & 91.93
\\

& Cross-level mispaired supervision
& 92.73 & 92.28
\\

& Predictive supervision without context
& 92.89 & 92.50
\\

\addlinespace[2pt]

\textbf{Statistics}
& Tree-specific statistical estimation
& 91.48 & 90.99
\\

& Unbalanced predictive-test aggregation
& 91.30 & 90.98
\\

& Diagonal-covariance approximation
& 91.51 & 91.04
\\

\addlinespace[2pt]

\textbf{Optimization}
& Additive Predictive-Gradient Update
& 92.64 & 92.18
\\

& Global Predictive Constraint
& 90.32 & 89.88
\\

\midrule

\textbf{Full method}
& \textbf{Our Method}
& \textbf{94.52}
& \textbf{94.25}
\\

\bottomrule
\end{tabular}

\vspace{-0.5em}
\end{table}

\paragraph{Does the choice of preservation objective matter?}
Yes. Task-Only Training reaches 92.62 AP and 92.13 AUC, providing the
reference performance when no explicit predictive-preservation objective is
introduced. Replacing predictive preservation with history reconstruction
reduces performance to 90.89 AP and 90.28 AUC, indicating that retaining
information sufficient to reconstruct the past does not necessarily retain the
information useful for subsequent prediction. Direct Future Prediction performs
better at 92.76 AP and 92.24 AUC, but still remains below the full method at
94.52 AP and 94.25 AUC. This suggests that predictive targets are more suitable
than reconstruction targets, while directly fitting future outcomes does not recover the full predictive criterion's benefit.

\paragraph{Does the structure of predictive supervision matter?}
The results indicate that it does. The Single-Event Predictive Criterion
reaches 92.41 AP and 91.93 AUC. Replacing the adjacent predictive pair
$S_v=(Y_v,Y_{\operatorname{pa}(v)})$ with a single predictive event weakens
the local supervision supplied at each recursive interface. Cross-level
mispaired supervision reaches 92.73 AP and 92.28 AUC. Its degradation relative
to the full method indicates that the correspondence between a compressed state
and its associated predictive event is important, even when comparable
predictive events are available elsewhere in the recursion. Removing the
retained context gives 92.89 AP and 92.50 AUC. The remaining gap to the full
method is consistent with the role of context in defining the conditional
predictive information that should survive each recursive interface.

\paragraph{Does the statistical realization of the predictive criterion matter?}
All three statistical simplifications lead to a larger reduction. Estimating
the predictive statistics separately for each tree obtains 91.48 AP and
90.99 AUC, suggesting that the shared empirical estimate benefits from pooling
aligned observations across recursive instances. Removing balancing gives
91.30 AP and 90.98 AUC, supporting the use of balanced predictive tests when
aggregating discrepancies across outcomes or interfaces. Using a
diagonal-covariance approximation gives 91.51 AP and 91.04 AUC, indicating that
cross-dimensional dependence in the empirical statistic contains useful
information. Together, these results show that implementing the predictive criterion matters beyond the supervision target.

\paragraph{Does constrained optimization contribute beyond the predictive objective?}
Yes. The Additive Predictive-Gradient Update yields 92.64 AP and
92.18 AUC, below the 94.52 AP and 94.25 AUC of the full method. This separates
the predictive objective from its predictive-feasibility realization: using the
same supervision as an additive gradient does not provide the same protection
against task proposals that violate predictive preservation. Replacing the
interface-wise constraints with the Global Predictive Constraint causes the
largest drop, to 90.32 AP and 89.88 AUC. The result is consistent with the
recursive structure of the problem: predictive loss can arise at individual
recursive interfaces, and an aggregate global restriction does not provide the
same local control over these violations.

\paragraph{Overall.}
Across the four groups, the full method achieves the highest AP and AUC in this
fixed seed-0 run. The ablations jointly support four aspects of the design:
predictive rather than reconstructive preservation, correctly aligned
interface-level supervision, the balanced empirical statistical realization,
and interface-wise constrained optimization. Since the experiment contains one
run per configuration, these comparisons are used to diagnose the internal
mechanism of the method; claims about cross-run variability are based on the
separate multi-seed evaluation.

\FloatBarrier

\subsection{Method Hyperparameters and Sensitivity} 
\label{app:hyperparameters} 
\label{app:robustness-efficiency} 
 
\paragraph{Method hyperparameters.} 
Table~\ref{tab:method_hyperparameters} summarizes the principal 
hyperparameters introduced by the predictive-preservation procedure across 
the four experimental domains. We report the frozen configurations used for 
the formal experiments rather than command-line defaults. Parameters that 
belong only to the underlying host models follow the corresponding host 
protocols and are omitted here. 
 
\begin{table*}[t] 
\centering 
\caption{ 
\textbf{Method-specific hyperparameters used in the formal experiments.} 
Only parameters introduced by the predictive-preservation procedure or its 
statistical realization are reported. 
} 
\label{tab:method_hyperparameters} 
 
\footnotesize 
\setlength{\tabcolsep}{3.0pt} 
\renewcommand{\arraystretch}{1.10} 
 
\begin{tabular}{@{}p{2.55cm}p{3.05cm}p{3.05cm}p{2.85cm}p{2.85cm}@{}} 
\toprule 
\textbf{Configuration} 
& \textbf{Temporal Graph} 
& \textbf{Language Memory} 
& \textbf{VLA} 
& \textbf{SRI} \\ 
\midrule 
 
\multicolumn{5}{@{}l}{\textit{Predictive statistic}} \\ 
 
Statistic 
& Full balancing 
& Full balancing + OAS 
& Full dual 
& CCA / OAS \\ 
 
Sketch dimension 
& $64$ 
& $4\times32$ 
& $64$ 
& $4\times8$ \\ 
 
Statistic width 
& $128$ 
& $128$ 
& -- 
& $32$ \\ 
 
Context / future width 
& $32/32$ 
& Fixed sketches 
& $64/64$ 
& $256$ encoder \\ 
 
Covariance ridge 
& $10^{-3}$ (UCI); $10^{-4}$ (Wiki) 
& $10^{-3}$ 
& $10^{-4}$ 
& $10^{-3}$ \\ 
 
Minimum statistical support 
& $896/1024$ trees 
& $128$ cuts 
& $128$ merges 
& $32$ trees/window \\ 
 
Predictive observations 
& $2$ 
& $2$ 
& $2$ 
& Child + grandchild \\ 
 
Fixed-map seed 
& $0$ 
& $0$ 
& $0$ 
& $0$ \\ 
 
\midrule 
\multicolumn{5}{@{}l}{\textit{Compression / actuation}} \\ 
 
$\Gamma$ capacity 
& width $128$ 
& hidden $64$ 
& rank $64$ 
& $d_e=256,\ h=64$ \\ 
 
Additional structure 
& -- 
& time dim.\ $16$ 
& -- 
& $4$ slots; cond.\ rank $32$ \\ 
 
Initialization 
& Host-aligned 
& Zero-output 
& Host-merge aligned 
& Fixed seeded init. \\ 
 
Predictive update scope 
& $\Gamma$ 
& $\Gamma$ 
& $\Gamma$ 
& $\Gamma$ \\ 
 
\midrule 
\multicolumn{5}{@{}l}{\textit{Optimization / feasibility}} \\ 
 
Feasibility tolerance $\kappa$ 
& $0.05$ 
& -- (direct aggregate specialization) 
& $0.05$ 
& $0.02$ \\ 
 
Certificate tolerance 
& $10^{-6}$ 
& -- (direct aggregate specialization) 
& $10^{-3}$ 
& $3\times10^{-4}$ \\ 
 
Predictive coefficient $\lambda$ 
& -- (projection mode) 
& $0.00594$ (LLaMA/Gemma) 
& -- (projection mode) 
& -- \\ 
 
\bottomrule 
\end{tabular} 
\end{table*} 
 
\paragraph{Hyperparameter selection.} 
The fixed measurement dimensions, statistical regularization, support 
thresholds, and compression capacities are selected using only the 
corresponding development or validation data and are frozen before formal 
evaluation. No method-specific hyperparameter is selected using the reported 
test results. The formal TGN and VLA runs use task-only optimizer proposals 
before predictive feasibility projection and therefore do not use a predictive 
coefficient $\lambda$; the UCI Additive Predictive-Gradient Update ablation uses 
$\lambda=0.00668$. The language-memory aggregate specialization uses 
$\lambda=0.00594$. The feasibility tolerance $\kappa$ directly controls the 
admissible predictive degradation of an update and is therefore treated as the 
principal geometric hyperparameter. We additionally examine the amount of statistical support 
through the eligible-interface fraction. 
 
\paragraph{Sensitivity analysis.} 
Figure~\ref{fig:hyperparameter-sensitivity} reports sensitivity on UCI as a 
representative projection-based setting in which both the feasibility 
mechanism and empirical predictive statistic are active. We vary 
\[ 
\kappa\in\{0.01,0.02,0.05,0.10,0.20\}, 
\] 
with all remaining settings fixed, and separately vary the fraction of 
eligible recursive interfaces used to estimate the empirical predictive 
statistic. Performance remains stable around the formal operating region, 
while substantially tighter or looser feasibility and reduced statistical 
support lead to lower performance. 
 
\begin{figure}[H] 
    \centering 
    \includegraphics[width=\textwidth]{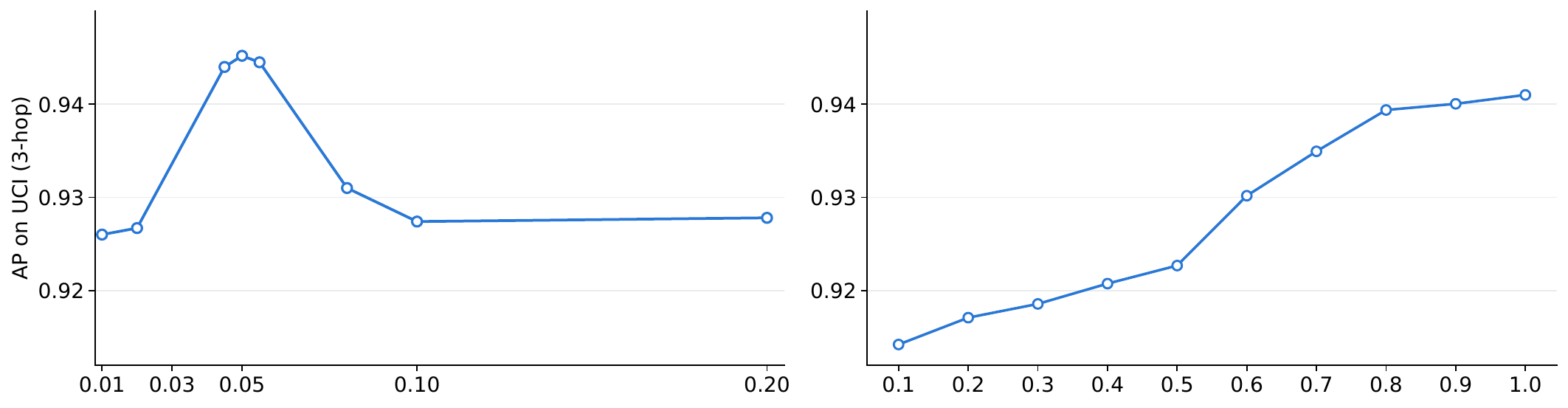} 
     
    \vspace{-0.5em} 
    \begin{minipage}[t]{0.48\linewidth} 
        \centering 
        \small (a) Feasibility tolerance $\kappa$ 
    \end{minipage} 
    \hfill 
    \begin{minipage}[t]{0.48\linewidth} 
        \centering 
        \small (b) Eligible-interface fraction 
    \end{minipage} 
     
    \vspace{-0.3em} 
    \caption{ 
    \textbf{Hyperparameter sensitivity on UCI.} 
    Test AP under variations of 
    (a) the feasibility tolerance $\kappa$ and 
    (b) the eligible-interface fraction used to estimate the empirical predictive 
    statistic. All remaining settings are fixed. 
    } 
    \label{fig:hyperparameter-sensitivity} 
\end{figure} 
 
\FloatBarrier

\clearpage
\section{Theoretical Results and Proofs}
\label{app:theory}

This appendix collects the formal results supporting the main text. Each
subsection is linked to the corresponding proposition, criterion, or guarantee
used in Sections~\ref{sec:problem-formulation} and~\ref{sec:predictive-compression-learning}.

\makeatletter
\@ifundefined{theorem}{\newtheorem{theorem}{Theorem}[section]}{}
\@ifundefined{proof}{%
  \newenvironment{proof}
    {\par\noindent\textit{Proof.}\ }%
    {\hfill\mbox{\rule{0.55em}{0.55em}}\par}%
}{}
\makeatother

\subsection{Proof of Recursive Closure}
\label{app:recursive-closure}

This subsection proves Proposition~\ref{prop:recursive-closure} from Section~\ref{sec:problem-formulation} and states its structural assumptions (M1)--(M3).

This appendix proves Proposition~\ref{prop:recursive-closure}.
The proof is formulated for the \emph{predictive-interface tree}: every
non-root vertex appearing below is associated with a predictive boundary
variable $Y_v$. Additional deterministic host operations between designated predictive
interfaces may be absorbed into the corresponding edge transition. All
population statements condition on the full retained context
$C=(C_v)_{v\in V}$; replacing $C$ by a local context $C_v$ is a separate
localization step and is not used in Proposition~\ref{prop:recursive-closure}.

Let $\mathcal T=(V,E)$ be a finite rooted tree with root $\rho$. Let
$\operatorname{pa}(v)$ denote the parent of each non-root node $v$, and let
$\operatorname{ch}(v)$ denote its children. At interface $v$, let $Y_v$
denote the predictive boundary state and let $U_v$ denote the complete
host pre-compression state. Let
$V_{\mathrm{comp}}\subseteq V\setminus\{\rho\}$ be the set of compressed
interfaces. For a fixed measurable compression map $\Gamma$, define
\[
Z_v^\Gamma:=
\begin{cases}
\Gamma(U_v), & v\in V_{\mathrm{comp}},\\
U_v, & v\notin V_{\mathrm{comp}},
\end{cases}
\]
so that uncompressed nodes are represented by the identity map. Write
\[
U:=(U_v)_{v\neq\rho},
\qquad
Z^\Gamma:=(Z_v^\Gamma)_{v\neq\rho},
\qquad
C:=(C_v)_{v\in V}.
\]
When product state spaces are needed below, write
\[
\mathsf U:=\prod_{v\neq\rho}\mathsf U_v,
\qquad
\mathsf Z:=\prod_{v\neq\rho}\mathsf Z_v.
\]
All random variables take values in standard Borel spaces.

For every non-root node $v$, define the adjacent predictive pair
\[
S_v:=(Y_v,Y_{\operatorname{pa}(v)}),
\]
and define the tuple of incoming compressed child states
\[
R_v^\Gamma
:=
Z_{\operatorname{ch}(v)}^\Gamma
:=
\bigl(Z_u^\Gamma\bigr)_{u\in\operatorname{ch}(v)}.
\]
For a leaf, $R_v^\Gamma$ is the unique element of the singleton empty-product
space. Within the proof we abbreviate $Z_v:=Z_v^\Gamma$ and
$R_v:=R_v^\Gamma$.

For each compressed node $v\in V_{\mathrm{comp}}$, define the local predictive deficiency
\[
\delta_v(\Gamma):=I(S_v;U_v\mid Z_v^\Gamma,C).
\]

\paragraph{Tree-local recursive Markov structure.}
Propositions~\ref{prop:recursive-closure} and
\ref{prop:root-predictive-stability} use the following structural assumptions for the recursive law
induced by the fixed map $\Gamma$.

\emph{(M1) Predictive-boundary Markov tree.}
For $P_C$-almost every $c$, the predictive boundary variables factorize as
\begin{equation}
P(dy_\rho,dy_{-\rho}\mid c)
=
\Pi_\rho(dy_\rho\mid c)
\prod_{v\neq\rho}
Q_v(dy_v\mid y_{\operatorname{pa}(v)},c).
\label{eq:predictive-tree-factorization}
\end{equation}

\emph{(M2) Recursive host aggregation through compressed child states.}
For every non-root node $v$, there is a regular conditional kernel
\[
K_v(du_v\mid s_v,r_v,c)
=
P(du_v\mid S_v=s_v,R_v=r_v,C=c),
\]
and the conditional law of $(U_v,Z_v)$ given $(S_v,R_v,C)$ is
\begin{equation}
\widetilde K_v(du_v,dz_v\mid s_v,r_v,c)
=
K_v(du_v\mid s_v,r_v,c)\,
\delta_{\gamma_v(u_v)}(dz_v),
\label{eq:host-joint-kernel}
\end{equation}
where $\gamma_v=\Gamma$ on $V_{\mathrm{comp}}$ and
$\gamma_v=\operatorname{id}$ otherwise. Conditioned on $(Y_v,C)$, the
child subtrees rooted at distinct children of $v$ are mutually independent.
Moreover, conditional on $(S_v,R_v,C)$, the local pair $(U_v,Z_v)$ is
independent of all remaining variables inside the child subtrees; equivalently,
the child subtrees can affect the host state at $v$ only through their root
compressed tuple $R_v$. After $R_v$ is realized, $(U_v,Z_v)$ is formed
according to Equation~\ref{eq:host-joint-kernel}. At the root, its child
subtrees are mutually independent conditioned on $(Y_\rho,C)$. More precisely,
conditional on $C=c$, the joint law is the finite child-before-parent
composition of $\Pi_\rho$, all $Q_v$, and all $\widetilde K_v$, with $r_v$
equal to the sampled child-root $z$ tuple. This factorization permits
\[
Z_{\operatorname{ch}(v)}^\Gamma\longrightarrow U_v\longrightarrow Z_v^\Gamma.
\]
In particular, no independence between $U_v$ and its incoming compressed
child states is assumed. A deterministic host update of the form
$U_v=F_v(R_v,X_v,C)$ is included as the degenerate-kernel special case,
provided the local host input $X_v$ is part of the state-generation mechanism
represented by $K_v$.

\emph{(M3) Complete-interface screening.}
For every compressed node $v$,
\begin{equation}
I\!\left(
S_v;R_v^\Gamma
\,\middle|\,
U_v,C
\right)=0,
\qquad\text{equivalently}\qquad
S_v\perp R_v^\Gamma\mid(U_v,C).
\label{eq:interface-screening}
\end{equation}
This is the screening-off property of a complete pre-compression interface
state: once $U_v$ and the retained context are known, the incoming compressed
child states carry no additional information about the adjacent predictive
interface $S_v$. Equation~\ref{eq:interface-screening} allows $U_v$ to depend on
$R_v^\Gamma$, including through a deterministic or stochastic aggregation
with local host information, provided the resulting interface law satisfies
the stated screening condition. This condition is additional to (M2). If
$R_v^\Gamma$ is a measurable component of $(U_v,C)$, as when the complete
interface record explicitly retains its incoming child tuple, (M3) holds
automatically.

\begin{proof}[Proof of Proposition~\ref{prop:recursive-closure}]
All regular conditional distributions used below exist because the state
spaces are standard Borel. All conditional-kernel identities are understood
outside the corresponding conditioning null sets. Since the tree is finite,
the finitely many almost-sure identities needed in an induction step can be
chosen simultaneously on a common full-measure set. The recursive kernels
below are versions of the conditional laws under the same induced joint
distribution, so substituting these almost-sure identities into the parent kernel preserves the measure.

\medskip
\noindent\textbf{Step 1: Zero local predictive deficiency and interface screening imply
augmented local closure.}

Fix $v\in V_{\mathrm{comp}}$ and abbreviate
$S_v=(Y_v,Y_{\operatorname{pa}(v)})$, $Z_v=Z_v^\Gamma$, and
$R_v=R_v^\Gamma$. By the hypothesis of Proposition~\ref{prop:recursive-closure},
\begin{equation}
I(S_v;U_v\mid Z_v,C)=0.
\label{eq:local-zero-defect}
\end{equation}
By complete-interface screening,
\begin{equation}
I(S_v;R_v\mid U_v,C)=0.
\label{eq:screening-zero}
\end{equation}
Because $Z_v=\Gamma(U_v)$ is a measurable function of $U_v$ at a compressed
node,
\[
\sigma(U_v,Z_v,C)=\sigma(U_v,C).
\]
Therefore Equation~\ref{eq:screening-zero} is unchanged if $Z_v$ is added to
the conditioning set:
\[
I(S_v;R_v\mid U_v,Z_v,C)=0.
\]
The chain rule for conditional mutual information now gives
\[
\begin{aligned}
I(S_v;U_v,R_v\mid Z_v,C)
&=
I(S_v;U_v\mid Z_v,C)
+I(S_v;R_v\mid U_v,Z_v,C)\\
&=0.
\end{aligned}
\]
Hence
\[
S_v\perp (U_v,R_v)\mid(Z_v,C).
\]
In particular, by decomposition of conditional independence,
\begin{align}
U_v&\perp S_v\mid(Z_v,R_v,C),
\label{eq:local-independence-given-r}\\
R_v&\perp S_v\mid(Z_v,C).
\label{eq:incoming-closed-by-z}
\end{align}
Equation~\ref{eq:local-independence-given-r} is the form needed in the
recursive kernel argument: the local sufficiency condition remains valid
even after the incoming compressed child tuple is exposed. Notice that this
conclusion is derived from the original defect and the structural screening
condition; no additional ``interaction'' or node-coupling defect is assumed.

For completeness, Equation~\ref{eq:local-zero-defect} itself is equivalent
to conditional independence because
\[
\begin{aligned}
I(S_v;U_v\mid Z_v,C)
&=
\mathbb E_{Z_v,C}\!\left[
D_{\mathrm{KL}}\!\left(
P_{S_v,U_v\mid Z_v,C}
\,\middle\|\,
P_{S_v\mid Z_v,C}\otimes P_{U_v\mid Z_v,C}
\right)
\right],
\end{aligned}
\]
and a non-negative Kullback--Leibler divergence has expectation zero only
when it vanishes almost surely.

\medskip
\noindent\textbf{Step 2: Augmented local closure gives a
boundary-independent reconstruction kernel.}

Fix first $v\in V_{\mathrm{comp}}$. Under standard Borel regularity, a
regular conditional distribution of $U_v$ given $(Z_v,R_v,C)$ exists. Define
\[
A_v(B\mid z_v,r_v,c)
:=
P(U_v\in B\mid Z_v=z_v,R_v=r_v,C=c),
\qquad
B\in\mathcal B(\mathsf U_v).
\]
Let $K_v^\Gamma$ be the pushforward of the local host kernel through
$\Gamma$:
\[
K_v^\Gamma(D\mid s_v,r_v,c)
:=
K_v\!\left(\Gamma^{-1}(D)\mid s_v,r_v,c\right),
\qquad
D\in\mathcal B(\mathsf Z_v).
\]
Thus $K_v^\Gamma(\cdot\mid s_v,r_v,c)$ is the conditional law of $Z_v$
given $(S_v,R_v,C)=(s_v,r_v,c)$.

Fix measurable sets $B\subseteq\mathsf U_v$ and
$D\subseteq\mathsf Z_v$. Disintegrating the local joint kernel in
Equation~\ref{eq:host-joint-kernel} with respect to $Z_v$ yields
\[
\begin{aligned}
&\widetilde K_v(B\times D\mid s_v,r_v,c)\\
&\quad=
\int_D
P(U_v\in B\mid S_v=s_v,Z_v=z_v,R_v=r_v,C=c)\,
K_v^\Gamma(dz_v\mid s_v,r_v,c).
\end{aligned}
\]
By Equation~\ref{eq:local-independence-given-r},
\[
P(U_v\in B\mid S_v=s_v,Z_v=z_v,R_v=r_v,C=c)
=
A_v(B\mid z_v,r_v,c)
\]
for $P_{S_v,Z_v,R_v,C}$-almost every $(s_v,z_v,r_v,c)$. Therefore,
for $P_{S_v,R_v,C}$-almost every $(s_v,r_v,c)$,
\[
\widetilde K_v(B\times D\mid s_v,r_v,c)
=
\int_D
A_v(B\mid z_v,r_v,c)\,
K_v^\Gamma(dz_v\mid s_v,r_v,c).
\]
Since standard Borel $\sigma$-algebras are countably generated, the identity
may be chosen on a common full-measure set for a generating class of
rectangles and extended by the monotone class theorem. Hence
\begin{equation}
\boxed{
\widetilde K_v(du_v,dz_v\mid s_v,r_v,c)
=
A_v(du_v\mid z_v,r_v,c)\,
K_v^\Gamma(dz_v\mid s_v,r_v,c)
}
\qquad P_{S_v,R_v,C}\text{-a.s.}
\label{eq:local-kernel-factorization}
\end{equation}
The reconstruction kernel may depend on the incoming compressed child tuple
$r_v$, as it should in a genuine recursive host, but it is independent of
the adjacent predictive pair $s_v$ once $(z_v,r_v,c)$ are fixed.

For an uncompressed non-root node, $Z_v=U_v$. Define
\begin{align}
K_v^\Gamma(dz_v\mid s_v,r_v,c)
&:=K_v(dz_v\mid s_v,r_v,c),\\
A_v(du_v\mid z_v,r_v,c)
&:=\delta_{z_v}(du_v).
\end{align}
Then Equation~\ref{eq:local-kernel-factorization} holds identically, without
requiring either a local predictive deficiency condition or Equation~\ref{eq:interface-screening}.
Consequently, Equation~\ref{eq:local-kernel-factorization} is available for
every non-root node.

\medskip
\noindent\textbf{Step 3: Local closure composes recursively over a subtree.}

For a non-root node $v$, let $\mathcal T_v$ denote the subtree containing
$v$ and all its descendants, and write
\[
U_{\mathcal T_v}:=(U_r)_{r\in V(\mathcal T_v)},
\qquad
Z_{\mathcal T_v}:=(Z_r)_{r\in V(\mathcal T_v)}.
\]
Let $\mathsf M_v^{U,\Gamma}$ denote the conditional joint kernel of
$(U_{\mathcal T_v},Z_{\mathcal T_v})$ given
$(Y_{\operatorname{pa}(v)},C)$, and let $\mathsf M_v^\Gamma$ denote its
$Z_{\mathcal T_v}$-marginal.

The recursive Markov structure in (M1)--(M2) gives the following precise
child-before-parent construction. For every bounded non-negative measurable
function $f$ on
$\mathsf U_{\mathcal T_v}\times\mathsf Z_{\mathcal T_v}$,
\begin{equation}
\begin{aligned}
&\int f(u_{\mathcal T_v},z_{\mathcal T_v})\,
\mathsf M_v^{U,\Gamma}
(du_{\mathcal T_v},dz_{\mathcal T_v}\mid y_{\operatorname{pa}(v)},c)\\
&=\int Q_v(dy_v\mid y_{\operatorname{pa}(v)},c)
\int
\bigotimes_{u\in\operatorname{ch}(v)}
\mathsf M_u^{U,\Gamma}
(du_{\mathcal T_u},dz_{\mathcal T_u}\mid y_v,c)\\
&\qquad\times
\int
\widetilde K_v
(du_v,dz_v\mid s_v,r_v,c)\,
 f\!\left(
 u_v,(u_{\mathcal T_u})_{u\in\operatorname{ch}(v)},
 z_v,(z_{\mathcal T_u})_{u\in\operatorname{ch}(v)}
 \right),
\end{aligned}
\label{eq:recursive-subtree-kernel}
\end{equation}
where
$s_v=(y_v,y_{\operatorname{pa}(v)})$ and
$r_v=(z_u)_{u\in\operatorname{ch}(v)}$ is read from the root compressed
state of each child subtree. At a leaf, the product kernel over children is
the point mass on the empty tuple. Equation~\ref{eq:recursive-subtree-kernel}
is a finite composition of probability kernels and therefore defines a
probability kernel by the usual kernel-composition theorem.

Marginalizing the same construction over all $U$-variables gives
\begin{equation}
\begin{aligned}
&\mathsf M_v^\Gamma
(dz_{\mathcal T_v}\mid y_{\operatorname{pa}(v)},c)\\
&=\int Q_v(dy_v\mid y_{\operatorname{pa}(v)},c)
\left[
\bigotimes_{u\in\operatorname{ch}(v)}
\mathsf M_u^\Gamma
(dz_{\mathcal T_u}\mid y_v,c)
\right]
K_v^\Gamma(dz_v\mid s_v,r_v,c),
\end{aligned}
\label{eq:recursive-z-kernel}
\end{equation}
where the right-hand side is understood as the same child-before-parent
kernel composition as in Equation~\ref{eq:recursive-subtree-kernel}; in
particular $r_v$ is evaluated from the sampled child-root $z$-coordinates
before $K_v^\Gamma$ is applied.

Define reconstruction kernels recursively. At a leaf,
\[
A_{\mathcal T_v}(du_v\mid z_v,c)
:=
A_v(du_v\mid z_v,r_v^{\emptyset},c),
\]
where $r_v^{\emptyset}$ is the empty tuple. At an internal node, define
\begin{equation}
\begin{aligned}
&A_{\mathcal T_v}
(du_{\mathcal T_v}\mid z_{\mathcal T_v},c)\\
&\quad:=
A_v
(du_v\mid z_v,z_{\operatorname{ch}(v)},c)
\bigotimes_{u\in\operatorname{ch}(v)}
A_{\mathcal T_u}
(du_{\mathcal T_u}\mid z_{\mathcal T_u},c).
\end{aligned}
\label{eq:recursive-reconstruction-kernel}
\end{equation}
Because $z_{\operatorname{ch}(v)}$ is a measurable coordinate projection of
$z_{\mathcal T_v}$ and the tree is finite, this is a well-defined probability
kernel from $(z_{\mathcal T_v},c)$ to $u_{\mathcal T_v}$.

We prove by induction on the height of $\mathcal T_v$ that, for every
non-root node $v$,
\begin{equation}
\begin{aligned}
&\mathsf M_v^{U,\Gamma}
(du_{\mathcal T_v},dz_{\mathcal T_v}
\mid y_{\operatorname{pa}(v)},c)\\
&\quad=
A_{\mathcal T_v}
(du_{\mathcal T_v}\mid z_{\mathcal T_v},c)\,
\mathsf M_v^\Gamma
(dz_{\mathcal T_v}\mid y_{\operatorname{pa}(v)},c)
\end{aligned}
\label{eq:subtree-factorization}
\end{equation}
for $P_{Y_{\operatorname{pa}(v)},C}$-almost every
$(y_{\operatorname{pa}(v)},c)$.

Equation~\ref{eq:subtree-factorization} is the subtree closure statement. It
implies that a version of the conditional law satisfies
\[
P(
 du_{\mathcal T_v}
 \mid
 Z_{\mathcal T_v}=z_{\mathcal T_v},
 Y_{\operatorname{pa}(v)}=y_{\operatorname{pa}(v)},
 C=c)
=
A_{\mathcal T_v}
(du_{\mathcal T_v}\mid z_{\mathcal T_v},c),
\]
so reconstruction of the complete pre-compression subtree state depends on
the compressed subtree and retained context but not on the external boundary
state.

\smallskip
\noindent\emph{Base case.}
Suppose $v$ is a leaf. Then $r_v$ is the empty tuple and
Equation~\ref{eq:recursive-subtree-kernel}, applied to a measurable rectangle
$B_v\times D_v$, gives
\[
\begin{aligned}
&\mathsf M_v^{U,\Gamma}
(B_v\times D_v\mid y_{\operatorname{pa}(v)},c)\\
&\quad=
\int Q_v(dy_v\mid y_{\operatorname{pa}(v)},c)\,
\widetilde K_v
(B_v\times D_v\mid s_v,r_v^{\emptyset},c).
\end{aligned}
\]
Substituting Equation~\ref{eq:local-kernel-factorization},
\[
\begin{aligned}
&\mathsf M_v^{U,\Gamma}
(B_v\times D_v\mid y_{\operatorname{pa}(v)},c)\\
&\quad=
\int Q_v(dy_v\mid y_{\operatorname{pa}(v)},c)
\int_{D_v}
A_v(B_v\mid z_v,r_v^{\emptyset},c)\,
K_v^\Gamma(dz_v\mid s_v,r_v^{\emptyset},c).
\end{aligned}
\]
The reconstruction factor is independent of both boundary variables once
$(z_v,c)$ are fixed. Tonelli's theorem therefore gives
\[
\begin{aligned}
&\mathsf M_v^{U,\Gamma}
(B_v\times D_v\mid y_{\operatorname{pa}(v)},c)\\
&\quad=
\int_{D_v}
A_{\mathcal T_v}(B_v\mid z_v,c)\,
\mathsf M_v^\Gamma
(dz_v\mid y_{\operatorname{pa}(v)},c),
\end{aligned}
\]
where the second factor is exactly the leaf case of
Equation~\ref{eq:recursive-z-kernel}. Equality on rectangles extends to the
full product $\sigma$-algebra by the monotone class theorem, proving
Equation~\ref{eq:subtree-factorization} at every leaf.

\smallskip
\noindent\emph{Induction step.}
Let $v$ be an internal node and assume
Equation~\ref{eq:subtree-factorization} holds for every
$u\in\operatorname{ch}(v)$. Fix measurable rectangles
\[
B_{\mathcal T_v}
:=
B_v\times
\prod_{u\in\operatorname{ch}(v)}B_u,
\qquad
D_{\mathcal T_v}
:=
D_v\times
\prod_{u\in\operatorname{ch}(v)}D_u,
\]
with
$B_v\in\mathcal B(\mathsf U_v)$,
$D_v\in\mathcal B(\mathsf Z_v)$,
$B_u\in\mathcal B(\mathsf U_{\mathcal T_u})$, and
$D_u\in\mathcal B(\mathsf Z_{\mathcal T_u})$.
Applying Equation~\ref{eq:recursive-subtree-kernel} and then the induction
hypothesis to each child gives
\[
\begin{aligned}
&\mathsf M_v^{U,\Gamma}
(B_{\mathcal T_v}\times D_{\mathcal T_v}
\mid y_{\operatorname{pa}(v)},c)\\
&=\int Q_v(dy_v\mid y_{\operatorname{pa}(v)},c)
\int_{\prod_{u}D_u}
\left[
\prod_{u\in\operatorname{ch}(v)}
A_{\mathcal T_u}(B_u\mid z_{\mathcal T_u},c)
\right]\\
&\qquad\qquad\times
\widetilde K_v
(B_v\times D_v\mid s_v,r_v,c)
\prod_{u\in\operatorname{ch}(v)}
\mathsf M_u^\Gamma
(dz_{\mathcal T_u}\mid y_v,c).
\end{aligned}
\]
Here and below $r_v=(z_u)_{u\in\operatorname{ch}(v)}$ is the tuple of child
root coordinates extracted from the sampled $z_{\mathcal T_u}$'s. Substitute
the local factorization
Equation~\ref{eq:local-kernel-factorization}:
\[
\begin{aligned}
&\mathsf M_v^{U,\Gamma}
(B_{\mathcal T_v}\times D_{\mathcal T_v}
\mid y_{\operatorname{pa}(v)},c)\\
&=\int Q_v(dy_v\mid y_{\operatorname{pa}(v)},c)
\int_{\prod_{u}D_u}
\int_{D_v}
\left[
A_v(B_v\mid z_v,r_v,c)
\prod_{u\in\operatorname{ch}(v)}
A_{\mathcal T_u}(B_u\mid z_{\mathcal T_u},c)
\right]\\
&\qquad\qquad\times
K_v^\Gamma(dz_v\mid s_v,r_v,c)
\prod_{u\in\operatorname{ch}(v)}
\mathsf M_u^\Gamma
(dz_{\mathcal T_u}\mid y_v,c).
\end{aligned}
\]
All factors are non-negative, so Tonelli's theorem applies. The bracketed
reconstruction factor depends on $(z_{\mathcal T_v},c)$ only: in particular,
although the parent reconstruction is allowed to depend on the incoming child
states $r_v$, it does not depend on either $y_v$ or
$y_{\operatorname{pa}(v)}$. By
Equation~\ref{eq:recursive-reconstruction-kernel}, the bracket is exactly
\[
A_{\mathcal T_v}
(B_{\mathcal T_v}\mid z_{\mathcal T_v},c).
\]
The remaining $y_v$- and $z$-kernel composition is exactly
$\mathsf M_v^\Gamma
(dz_{\mathcal T_v}\mid y_{\operatorname{pa}(v)},c)$ by
Equation~\ref{eq:recursive-z-kernel}. Hence
\[
\begin{aligned}
&\mathsf M_v^{U,\Gamma}
(B_{\mathcal T_v}\times D_{\mathcal T_v}
\mid y_{\operatorname{pa}(v)},c)\\
&\quad=
\int_{D_{\mathcal T_v}}
A_{\mathcal T_v}
(B_{\mathcal T_v}\mid z_{\mathcal T_v},c)\,
\mathsf M_v^\Gamma
(dz_{\mathcal T_v}\mid y_{\operatorname{pa}(v)},c).
\end{aligned}
\]
Equality on measurable rectangles again extends to the full product
$\sigma$-algebra by the monotone class theorem. This completes the induction
and proves Equation~\ref{eq:subtree-factorization} for every non-root subtree.

\medskip
\noindent\textbf{Step 4: Subtree closure gives global conditional
independence at the root.}

By (M2), the child subtrees of the root are conditionally independent given
$(Y_\rho,C)$. Since they form a disjoint partition of all non-root nodes,
\[
U=(U_{\mathcal T_u})_{u\in\operatorname{ch}(\rho)},
\qquad
Z^\Gamma=(Z_{\mathcal T_u})_{u\in\operatorname{ch}(\rho)}.
\]
Therefore, for $P_{Y_\rho,C}$-almost every $(y_\rho,c)$,
\[
P(du,dz\mid y_\rho,c)
=
\bigotimes_{u\in\operatorname{ch}(\rho)}
\mathsf M_u^{U,\Gamma}
(du_{\mathcal T_u},dz_{\mathcal T_u}\mid y_\rho,c).
\]
Applying Equation~\ref{eq:subtree-factorization} to each child factor gives
\[
\begin{aligned}
P(du,dz\mid y_\rho,c)
&=
\left[
\bigotimes_{u\in\operatorname{ch}(\rho)}
A_{\mathcal T_u}
(du_{\mathcal T_u}\mid z_{\mathcal T_u},c)
\right]\\
&\qquad\times
\left[
\bigotimes_{u\in\operatorname{ch}(\rho)}
\mathsf M_u^\Gamma
(dz_{\mathcal T_u}\mid y_\rho,c)
\right].
\end{aligned}
\]
Define
\[
\begin{aligned}
A_{\mathcal T}(du\mid z,c)
&:=
\bigotimes_{u\in\operatorname{ch}(\rho)}
A_{\mathcal T_u}
(du_{\mathcal T_u}\mid z_{\mathcal T_u},c),
\\
\mathsf M_\rho^\Gamma(dz\mid y_\rho,c)
&:=
\bigotimes_{u\in\operatorname{ch}(\rho)}
\mathsf M_u^\Gamma
(dz_{\mathcal T_u}\mid y_\rho,c).
\end{aligned}
\]
Equivalently, because the tree is finite,
\[
A_{\mathcal T}(du\mid z,c)
=
\bigotimes_{v\neq\rho}
A_v\!\left(
 du_v\mid z_v,z_{\operatorname{ch}(v)},c
\right),
\]
where the factors are kernels in the $u_v$ coordinates with the full $z$
held fixed. Thus
\begin{equation}
P(du,dz\mid y_\rho,c)
=
A_{\mathcal T}(du\mid z,c)\,
\mathsf M_\rho^\Gamma(dz\mid y_\rho,c)
\qquad P_{Y_\rho,C}\text{-a.s.}
\label{eq:root-factorization}
\end{equation}
Equation~\ref{eq:root-factorization} is a disintegration of the conditional
joint law of $(U,Z^\Gamma)$ given $(Y_\rho,C)$. Hence
$A_{\mathcal T}$ is a version of the conditional law of $U$ given
$(Y_\rho,Z^\Gamma,C)$:
\begin{equation}
P(du\mid y_\rho,z,c)
=
A_{\mathcal T}(du\mid z,c)
\label{eq:root-reconstruction-given-y}
\end{equation}
for $P_{Y_\rho,Z^\Gamma,C}$-almost every $(y_\rho,z,c)$.

It remains only to identify the same kernel as the conditional law given
$(Z^\Gamma,C)$. For measurable $B\subseteq\mathsf U$, the tower property and
Equation~\ref{eq:root-reconstruction-given-y} give
\[
\begin{aligned}
P(U\in B\mid z,c)
&=
\int
P(U\in B\mid y_\rho,z,c)\,
P(dy_\rho\mid z,c)\\
&=
\int
A_{\mathcal T}(B\mid z,c)\,
P(dy_\rho\mid z,c)\\
&=
A_{\mathcal T}(B\mid z,c).
\end{aligned}
\]
Therefore
\[
P(du\mid y_\rho,z,c)
=
P(du\mid z,c)
\]
for $P_{Y_\rho,Z^\Gamma,C}$-almost every $(y_\rho,z,c)$, i.e.
\begin{equation}
Y_\rho\perp U\mid(Z^\Gamma,C).
\label{eq:global-independence}
\end{equation}

\medskip
\noindent\textbf{Step 5: Global conditional independence preserves the root
law.}

Since $Z^\Gamma$ is a measurable function of $U$,
\[
\sigma(U,Z^\Gamma,C)=\sigma(U,C).
\]
Let $D$ be a measurable subset of the state space of $Y_\rho$. Therefore
\[
P(Y_\rho\in D\mid U,C)
=
P(Y_\rho\in D\mid U,Z^\Gamma,C)
\qquad P\text{-a.s.}
\]
By Equation~\ref{eq:global-independence},
\[
P(Y_\rho\in D\mid U,Z^\Gamma,C)
=
P(Y_\rho\in D\mid Z^\Gamma,C)
\qquad P\text{-a.s.}
\]
Combining the two equalities and using a countable determining class for the
standard Borel state space of $Y_\rho$ yields
\[
\mathcal L(Y_\rho\mid U,C)
=
\mathcal L(Y_\rho\mid Z^\Gamma,C)
\qquad P\text{-a.s.}
\]
Finally, Equation~\ref{eq:global-independence} and the conditional
Kullback--Leibler characterization of mutual information give
\[
I(Y_\rho;U\mid Z^\Gamma,C)=0.
\]
This proves the theorem.
\end{proof}

\makeatletter
\@ifundefined{prop}{\newtheorem{prop}{Proposition}[section]}{}
\@ifundefined{proof}{%
  \newenvironment{proof}
    {\par\noindent\textit{Proof.}\ }%
    {\hfill\mbox{\rule{0.55em}{0.55em}}\par}%
}{}
\makeatother

\subsection{Local Variational Characterization}
\label{app:local-variational-characterization}

This subsection establishes the exact variational identity underlying the predictive-deficiency construction in Section~\ref{sec:predictive-compression-learning}. In particular, Proposition~\ref{prop:local-variational-characterization} gives the population criterion from which the empirical surrogate is derived.

This appendix gives the exact variational form of the local predictive
deficiency used in the main text. For a compressed node
$v\in V_{\mathrm{comp}}$, write
\[
S_v:=(Y_v,Y_{\operatorname{pa}(v)}),
\qquad
Z_v^\Gamma:=\Gamma(U_v),
\]
and
\[
\delta_v(\Gamma)
:=
I(S_v;U_v\mid Z_v^\Gamma,C).
\]
All variables are assumed to take values in standard Borel spaces and
$\Gamma$ is measurable, so the regular conditional distributions below
exist. Kullback--Leibler divergences are understood in the extended-real
sense.

For any measurable conditional distribution
$q_v(\cdot\mid Z_v^\Gamma,C)$ of $S_v$, define
\[
\mathcal V_v(\Gamma,q_v)
:=
\mathbb E\!\left[
D_{\mathrm{KL}}\!\left(
P_{S_v\mid U_v,C}
\,\middle\|\,
q_v(\cdot\mid Z_v^\Gamma,C)
\right)
\right].
\]

\begin{prop}[Local variational characterization]
\label{prop:local-variational-characterization}
For every measurable compressor $\Gamma$ and every compressed node $v$,
\begin{equation}
\boxed{
\mathcal V_v(\Gamma,q_v)
=
\delta_v(\Gamma)
+
\mathbb E\!\left[
D_{\mathrm{KL}}\!\left(
P_{S_v\mid Z_v^\Gamma,C}
\,\middle\|\,
q_v(\cdot\mid Z_v^\Gamma,C)
\right)
\right].
}
\label{eq:local-kl-decomposition}
\end{equation}
Consequently,
\begin{equation}
\boxed{
\delta_v(\Gamma)
=
\inf_{q_v}\mathcal V_v(\Gamma,q_v),
}
\label{eq:local-variational-characterization}
\end{equation}
and the infimum is attained by
\[
q_v^\star(\cdot\mid Z_v^\Gamma,C)
=
P_{S_v\mid Z_v^\Gamma,C}.
\]
If the common value is finite, every minimizer agrees with
$q_v^\star$ for $P_{Z_v^\Gamma,C}$-almost every conditioning value.
\end{prop}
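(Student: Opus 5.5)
The plan is to prove the decomposition \eqref{eq:local-kl-decomposition} pointwise in the conditioning value, via the chain rule for Kullback--Leibler divergence, and then average. The key structural fact is that $Z_v^\Gamma=\Gamma(U_v)$ is a measurable function of $U_v$. Hence $\sigma(U_v,Z_v^\Gamma,C)=\sigma(U_v,C)$, and $P_{S_v\mid U_v,C}$ is also a version of $P_{S_v\mid U_v,Z_v^\Gamma,C}$. With this, both sides of the identity become expectations of divergences over the same probability space, and the conditioning on $(Z_v^\Gamma,C)$ refines consistently.

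\textbf{Step 1: one-line decomposition under density assumptions.} First I would verify the identity assuming the relevant densities exist, which shows the correct bookkeeping. Write $p(s\mid u,c)$ for the law given $(U_v,C)$, $p(s\mid z,c)$ for the law given $(Z_v^\Gamma,C)$, and $q(s\mid z,c)$ for the candidate. Decompose
\[
\log\frac{p(s\mid u,c)}{q(s\mid z,c)}=\log\frac{p(s\mid u,c)}{p(s\mid z,c)}+\log\frac{p(s\mid z,c)}{q(s\mid z,c)}.
\]
Take the expectation of $S_v$ under $P_{S_v\mid U_v,C}$ and then over $(U_v,C)$.
\begin{itemize}
\item The first term averages to $\delta_v(\Gamma)$, using the KL representation of conditional mutual information already used in Step~1 of the proof of Proposition~\ref{prop:recursive-closure}.
\item The second term depends on $u$ only through $z=\Gamma(u)$. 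By the tower property it averages to $\mathbb E[D_{\mathrm{KL}}(P_{S_v\mid Z_v^\Gamma,C}\|q_v)]$.
\end{itemize}

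\textbf{Step 2: rigorous version without densities.} For the general standard Borel statement I would instead invoke the chain rule for KL on the joint laws. Define $\mu$ as the joint law of $(U_v,S_v)$ given $(Z_v^\Gamma,C)$ built from $P_{S_v\mid U_v,C}$. Compare it with the product $P_{U_v\mid Z_v^\Gamma,C}\otimes q_v$, and use the intermediate product $P_{U_v\mid Z_v^\Gamma,C}\otimes P_{S_v\mid Z_v^\Gamma,C}$. The chain rule for the disintegration over $U_v$ gives $\mathcal V_v$ directly. The chain rule for the disintegration over $S_v$, with common $U_v$-marginal, gives the sum of $\delta_v$ and the second KL term. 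This is the step I expect to be the main obstacle.

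\textbf{Obstacle: infinite values.} The log-ratio splitting in Step~1 is illegitimate when terms are infinite or not integrable. The chain-rule route of Step~2 handles this, since KL chain rules hold in $[0,\infty]$ with all terms nonnegative. So no $\infty-\infty$ arises: if $\delta_v=\infty$, both sides are infinite; otherwise the sum is well defined.

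\textbf{Step 3: consequences.} The second term in \eqref{eq:local-kl-decomposition} is nonnegative. It vanishes at $q_v^\star=P_{S_v\mid Z_v^\Gamma,C}$, which is a measurable conditional distribution, so the infimum is attained and equals $\delta_v(\Gamma)$.

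For uniqueness, assume $\delta_v<\infty$. Then any minimizer makes the expected KL term zero. Because that KL is nonnegative, it vanishes $P_{Z_v^\Gamma,C}$-almost surely. Since KL is zero exactly when its two measures are equal, the minimizer agrees with $q_v^\star$ almost everywhere.
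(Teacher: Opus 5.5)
Your proposal is correct and follows essentially the paper's route. You use $Z_v^\Gamma=\Gamma(U_v)$ to identify $P_{S_v\mid U_v,C}$ with $P_{S_v\mid U_v,Z_v^\Gamma,C}$, apply the conditional relative-entropy chain rule pointwise in $(z,c)$, integrate over $P_{Z_v^\Gamma,C}$, and read off the infimum, its attainment, and almost-sure uniqueness from nonnegativity of the residual divergence. Your Step~2 is simply an unpacked version of the one-line chain rule the paper cites; the $S_v$-disintegration term is $I(S_v;U_v\mid Z_v^\Gamma=z,C=c)$ because both products have $U_v$-conditional $P_{U_v\mid z,c}$. Working in $[0,\infty]$ is exactly what avoids the $\infty-\infty$ issue you flag.
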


\begin{proof}
Suppress the node index and write
\[
S:=S_v,\qquad U:=U_v,\qquad Z:=Z_v^\Gamma.
\]
Because $Z=\Gamma(U)$ is measurable with respect to $U$,
\begin{equation}
P_{S\mid U,Z,C}
=
P_{S\mid U,C}
\qquad P\text{-a.s.}
\label{eq:z-measurable-from-u}
\end{equation}
For $P_{Z,C}$-almost every $(z,c)$, the conditional relative-entropy
chain rule gives
\[
\begin{aligned}
&\int
D_{\mathrm{KL}}\!\left(
P_{S\mid U=u,Z=z,C=c}
\,\middle\|\,
q_v(\cdot\mid z,c)
\right)
P_{U\mid Z=z,C=c}(du)\\
&\quad=
I(S;U\mid Z=z,C=c)
+
D_{\mathrm{KL}}\!\left(
P_{S\mid Z=z,C=c}
\,\middle\|\,
q_v(\cdot\mid z,c)
\right).
\end{aligned}
\]
Integrating over $(Z,C)$ and using
Equation~\ref{eq:z-measurable-from-u} yields
Equation~\ref{eq:local-kl-decomposition}.

The second term in Equation~\ref{eq:local-kl-decomposition} is non-negative
and vanishes at
$q_v=q_v^\star=P_{S_v\mid Z_v^\Gamma,C}$. Therefore
\[
\inf_{q_v}\mathcal V_v(\Gamma,q_v)
=
\delta_v(\Gamma).
\]
If the optimum is finite, equality requires the remaining conditional
Kullback--Leibler divergence to vanish almost surely, which gives the stated
uniqueness up to $P_{Z_v^\Gamma,C}$-null sets.
\end{proof}

\paragraph{Zero-defect interpretation.}
Since $Z_v^\Gamma$ is a measurable function of $U_v$,
\begin{equation}
\boxed{
\delta_v(\Gamma)=0
\quad\Longleftrightarrow\quad
\mathcal L(S_v\mid U_v,C)
=
\mathcal L(S_v\mid Z_v^\Gamma,C)
\quad P\text{-a.s.}
}
\label{eq:local-zero-defect-sufficiency}
\end{equation}
Thus this appendix introduces no additional objective: it is an exact
reformulation of the same local predictive deficiency used by the recursive
closure result. The finite-test and empirical restrictions begin only in
Appendix~\ref{app:tractable-local-objective}.

\makeatletter
\@ifundefined{theorem}{\newtheorem{theorem}{Theorem}[section]}{}
\@ifundefined{lemma}{\newtheorem{lemma}{Lemma}[section]}{}
\@ifundefined{corollary}{\newtheorem{corollary}{Corollary}[section]}{}
\@ifundefined{proof}{%
  \newenvironment{proof}
    {\par\noindent\textit{Proof.}\ }%
    {\hfill\mbox{\rule{0.55em}{0.55em}}\par}%
}{}
\makeatother

\subsection{From Local Predictive Deficiency to Constrained Updates}
\label{app:tractable-local-objective}

This subsection supplies the approximation and calibration chain used in Section~\ref{sec:predictive-compression-learning} and the projection derivation used in Section~\ref{sec:predictive-constraints}. In particular, Appendix~\ref{app:predictive-feasibility-proof} proves the dual characterization stated in Proposition~\ref{prop:predictive-feasibility}.

This appendix connects the ideal local predictive deficiency in the main text to
its computable training statistic. The purpose is deliberately limited. We do
not introduce a second root-level theorem, nor do we identify the empirical
statistic with conditional mutual information. Instead, we isolate the
successive restrictions used by the implementation and make their residuals
explicit:
\begin{equation}
\boxed{
\delta_v
\;\longrightarrow\;
\Delta_v^{\rm test}
\;\longrightarrow\;
\Delta_v^{\rm bal}
\;\longrightarrow\;
\widehat{\mathcal D}_{v,\mathcal B}.
}
\label{eq:appendix-surrogate-chain}
\end{equation}
The main full-covariance route has three residuals:
$\varepsilon_{{\rm test},v}$ for finite-test coverage,
$\varepsilon_{{\rm lin},v}$ for the readout class, and
$\varepsilon_{{\rm stat},v}$ for finite-sample estimation. The optional
diagonal-covariance variant introduces $D_v^{\rm diag}$ between the profiled
value and its empirical diagonal estimate, with an additional
$\varepsilon_{{\rm diag},v}$ residual.

Throughout, let
\[
S_v:=(Y_v,Y_{\operatorname{pa}(v)}),
\qquad
Z_v^\Gamma:=\Gamma(U_v),
\qquad
C:=(C_w)_{w\in V},
\]
and use the main-text local predictive deficiency
\begin{equation}
\boxed{
\delta_v(\Gamma)
:=
I(S_v;U_v\mid Z_v^\Gamma,C).
}
\label{eq:appendix-local-defect}
\end{equation}
If an interface is usable only when both events of the adjacent predictive pair are present,
let $M_v$ be its validity indicator and write
\[
\mathbb E_v[F]
:=
\frac{\mathbb E[M_vF]}{\mathbb E[M_v]},
\qquad
\mathbb P(M_v=1)>0.
\]
The calibration statements below are evaluated under this same
eligible-interface law. For Hilbert-valued $f$, write
$\|f\|_{L^2_v}^2:=\mathbb E_v\|f\|^2$. Censored observations are omitted
rather than encoded as zero-valued targets. When every theoretical interface
is observed, this law is the main-text law. With censoring, identifying the
eligible-interface local predictive deficiency with the unconditional main-text local predictive deficiency requires the
corresponding support/selection condition; without that condition, the bounds
below calibrate the local predictive deficiency on the observed eligible interfaces only.

\paragraph{Scope.}
The empirical statistic is not claimed to be an exact finite-sample estimator
of the local predictive deficiency. For a fixed interface law, the
exact local criterion is reduced through finite predictive tests, a restricted
readout class, and full-covariance empirical moments. We bound their residual
terms under
the stated regularity conditions. The root-level information-theoretic
guarantee in the main text applies to the ideal local predictive deficiencies. The calibration
results below quantify the corresponding local approximation; the pooled
training statistic used by Algorithm~\ref{alg:predictive-feasibility} is treated separately when its
interface-wise differential is constructed.

\subsubsection{Exact Local Criterion and Context Localization}
\label{app:global-local-context}

The main-text criterion in Equation~\ref{eq:appendix-local-defect} conditions
on the retained global context $C$, whereas the implemented statistic uses the
context $C_v$ visible at interface $v$. Let
$C_{-v}:=(C_w)_{w\in V\setminus\{v\}}$. We use the following localization
condition only to identify these two conditioning choices:
\begin{equation}
S_v\perp C_{-v}\mid(U_v,C_v),
\qquad
S_v\perp C_{-v}\mid(Z_v^\Gamma,C_v).
\label{eq:context-localization}
\end{equation}
By the chain rule,
\[
\begin{aligned}
I(S_v;U_v\mid Z_v^\Gamma,C_v,C_{-v})
={}&I(S_v;U_v\mid Z_v^\Gamma,C_v)\\
&+I(S_v;C_{-v}\mid U_v,Z_v^\Gamma,C_v)
-I(S_v;C_{-v}\mid Z_v^\Gamma,C_v).
\end{aligned}
\]
The last two terms vanish under
Equation~\ref{eq:context-localization}, using that
$Z_v^\Gamma$ is measurable with respect to $U_v$. Hence
\begin{equation}
\boxed{
\delta_v(\Gamma)
=
I(S_v;U_v\mid Z_v^\Gamma,C_v).
}
\label{eq:global-local-defect-equivalence}
\end{equation}
Without Equation~\ref{eq:context-localization}, the construction below should
be read as a local-context surrogate. The pre-compression reference is $U_v$
throughout this appendix.

For the comparison with the main-text defect below, take $M_v=1$ almost
surely. With censored interfaces, the same derivations apply to the defect
and conditional laws under the eligible-interface law; identifying that
defect with the unconditional one needs an additional selection assumption.
Under full observation, the variational identity remains exact after
localization:
\begin{equation}
\delta_v(\Gamma)
=
\inf_{q_v}
\mathbb E_v\!\left[
D_{\rm KL}\!\left(
P_{S_v\mid U_v,C_v}
\,\Vert\,
q_v(\cdot\mid Z_v^\Gamma,C_v)
\right)
\right].
\label{eq:localized-variational-defect}
\end{equation}
Every step below is a restriction or approximation of this local predictive
problem, not a replacement for Equation~\ref{eq:localized-variational-defect}.

\subsubsection{Finite Predictive Tests}
\label{app:predictive-risk-energy}

Following Section~\ref{sec:predictive-compression-learning}, let
\[
P_v
:=
T_S\!\left(
\chi(C_v)\otimes
\phi(Y_v,Y_{\operatorname{pa}(v)})
\right)
\in\mathbb R^{d_P},
\]
where $\chi$, $\phi$, and $T_S$ are fixed. The test vector supplies
supervision only; it is not an additional input to the recursive host. Define
\[
\overline P_v
:=
P_v-\mathbb E_v[P_v\mid C_v],
\qquad
m_{X,v}
:=
\mathbb E_v[\overline P_v\mid X_v,C_v],
\quad
X_v\in\{U_v,Z_v^\Gamma\}.
\]
Because the $C_v$-measurable baseline can be absorbed into the predictor,
predicting $P_v$ and predicting $\overline P_v$ have the same Bayes-risk
difference. Let
\[
\mathcal R_v^\star(P_v\mid X_v,C_v)
:=
\inf_g
\mathbb E_v
\|P_v-g(X_v,C_v)\|_2^2.
\]
Conditional expectation is the $L^2$ projection onto
$\sigma(X_v,C_v)$. Since $Z_v^\Gamma=\Gamma(U_v)$, the tower property gives
$m_{Z,v}=\mathbb E_v[m_{U,v}\mid Z_v^\Gamma,C_v]$, and therefore
\begin{equation}
\boxed{
\begin{aligned}
\Delta_v^{\rm test}(\Gamma)
&:=
\mathcal R_v^\star(P_v\mid Z_v^\Gamma,C_v)
-
\mathcal R_v^\star(P_v\mid U_v,C_v)\\
&=
\mathbb E_v\|m_{U,v}-m_{Z,v}\|_2^2
\ge 0 .
\end{aligned}}
\label{eq:finite-test-gap}
\end{equation}
This is exactly the finite-test value gap used in the main text. The identity
is exact for the selected tests; the approximation is the use of those tests
in place of the full conditional law of $S_v$.

To make this approximation explicit, let $\psi_v(S_v,C_v)$ take values in a
separable Hilbert test space $\mathcal G_v^{\rm test}$. Let $\Pi_{m,v}$ be
the orthogonal projector onto the finite test span used by $P_v$, and let
$Q_{m,v}$ be an isometric coordinate map on that span. Define the full
conditional predictive change
\[
e_v(\Gamma)
:=
\mathbb E_v[\psi_v(S_v,C_v)\mid U_v,C_v]
-
\mathbb E_v[\psi_v(S_v,C_v)\mid Z_v^\Gamma,C_v].
\]
We use the following quantitative coverage condition on the admissible law and
compressor class: for some finite $c_v^{(0)}>0$,
\begin{equation}
\delta_v(\Gamma)
\le
c_v^{(0)}
\|e_v(\Gamma)\|_{L^2_v}^2.
\label{eq:test-domination-condition}
\end{equation}
Equation~\ref{eq:test-domination-condition} is a regularity condition relating
the predictive test geometry to conditional KL on the admissible law class;
no universal reverse Pinsker inequality is assumed.

\begin{lemma}[Finite-test coverage]
\label{lem:finite-test-coverage}
Suppose Equation~\ref{eq:test-domination-condition} holds and
\[
P_v=Q_{m,v}\Pi_{m,v}\psi_v(S_v,C_v).
\]
Then
\begin{equation}
\boxed{
\delta_v(\Gamma)
\le
c_v^{(0)}\Delta_v^{\rm test}(\Gamma)
+
\varepsilon_{{\rm test},v}(\Gamma),
}
\label{eq:finite-test-control}
\end{equation}
where
\[
\varepsilon_{{\rm test},v}(\Gamma)
:=
c_v^{(0)}
\|(I-\Pi_{m,v})e_v(\Gamma)\|_{L^2_v}^2.
\]
For a nested complete test family,
$\varepsilon_{{\rm test},v}(\Gamma)\to0$ pointwise in $\Gamma$.
\end{lemma}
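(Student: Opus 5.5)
The bound follows from an orthogonal split of $e_v(\Gamma)$ into its component on the finite test span and its complement. Writing $\Pi:=\Pi_{m,v}$ and $e:=e_v(\Gamma)$, we have $e=\Pi e+(I-\Pi)e$. Since $\Pi$ is an orthogonal projector on the separable Hilbert space $\mathcal G_v^{\rm test}$, the two pieces are orthogonal pointwise:
\[
\|e\|^2=\|\Pi e\|^2+\|(I-\Pi)e\|^2 .
\]
Taking $\mathbb E_v$ gives $\|e\|_{L^2_v}^2=\|\Pi e\|_{L^2_v}^2+\|(I-\Pi)e\|_{L^2_v}^2$. Substituting this into the coverage condition \eqref{eq:test-domination-condition}, the complementary term is exactly $\varepsilon_{{\rm test},v}(\Gamma)$. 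So the proof reduces to identifying $\|\Pi e\|_{L^2_v}^2$ with $\Delta_v^{\rm test}(\Gamma)$.

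For that identification, use $P_v=Q_{m,v}\Pi\psi_v(S_v,C_v)$. The maps $\Pi$ and $Q_{m,v}$ are bounded and linear, so they commute with conditional expectation (Bochner conditional expectations, with $\psi_v$ assumed square-integrable). Hence
\[
\mathbb E_v[P_v\mid U_v,C_v]-\mathbb E_v[P_v\mid Z_v^\Gamma,C_v]=Q_{m,v}\Pi e .
\]
Because $Z_v^\Gamma=\Gamma(U_v)$ is a function of $U_v$, subtracting the $C_v$-measurable baseline $\mathbb E_v[P_v\mid C_v]$ from both terms does not change this difference, so it equals $m_{U,v}-m_{Z,v}$. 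Since $Q_{m,v}$ is an isometry on the span, $\|m_{U,v}-m_{Z,v}\|_2=\|\Pi e\|$ pointwise. By \eqref{eq:finite-test-gap}, $\Delta_v^{\rm test}(\Gamma)=\mathbb E_v\|\Pi e\|^2$, which combined with the first paragraph gives \eqref{eq:finite-test-control}. The only technical care is justifying the interchange of linear maps with conditional expectation; this holds because the maps are continuous and $\psi_v\in L^2_v$.

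For the limit statement, take nested finite spans whose union is dense in $\mathcal G_v^{\rm test}$. Then the projectors $\Pi_{m,v}$ increase strongly to $I$, so $\|(I-\Pi_{m,v})e(\omega)\|^2\to0$ for each fixed realization. This quantity is dominated by $\|e(\omega)\|^2$, which is integrable because $e$ is a difference of conditional expectations of an $L^2$ variable and therefore lies in $L^2_v$. Dominated convergence then gives $\varepsilon_{{\rm test},v}(\Gamma)\to0$ for each fixed $\Gamma$. The main point to state carefully is this integrability and domination condition, together with the fact that $c_v^{(0)}$ does not depend on $m$.
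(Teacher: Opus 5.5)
Your proposal is correct and follows the paper's proof. You split $\|e_v\|_{L^2_v}^2$ orthogonally under $\Pi_{m,v}$, identify $m_{U,v}-m_{Z,v}=Q_{m,v}\Pi_{m,v}e_v$ via linearity of conditional expectation and the isometry $Q_{m,v}$ (so $\Delta_v^{\rm test}=\|\Pi_{m,v}e_v\|_{L^2_v}^2$), substitute into the domination condition, and your dominated-convergence argument (using $e_v\in L^2_v$ and $c_v^{(0)}$ independent of $m$) supplies the nested-family limit that the paper only asserts; one small correction is that the $C_v$-measurable baseline cancels simply because both conditioning $\sigma$-algebras contain $\sigma(C_v)$, not because $Z_v^\Gamma$ is a function of $U_v$.
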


\begin{proof}
Orthogonality of $\Pi_{m,v}$ gives
\[
\|e_v\|_{L^2_v}^2
=
\|\Pi_{m,v}e_v\|_{L^2_v}^2
+
\|(I-\Pi_{m,v})e_v\|_{L^2_v}^2.
\]
Linearity of conditional expectation and the coordinate isometry imply
\[
m_{U,v}-m_{Z,v}
=
Q_{m,v}\Pi_{m,v}e_v.
\]
Hence
$\Delta_v^{\rm test}
=\|\Pi_{m,v}e_v\|_{L^2_v}^2$.
Substitution into Equation~\ref{eq:test-domination-condition} yields
Equation~\ref{eq:finite-test-control}.
\end{proof}

The domination condition is non-vacuous in standard discrete predictive
settings. For example, suppose
$S_v\in\{1,\ldots,K\}$, take the complete one-hot test
$\psi_v(S_v)=e_{S_v}$, and write
\[
p_U(\cdot):=
\mathbb P_v(S_v=\cdot\mid U_v,C_v),
\qquad
p_Z(\cdot):=
\mathbb P_v(S_v=\cdot\mid Z_v^\Gamma,C_v).
\]
If
\[
p_Z(s)\ge\alpha_v>0
\quad
\text{a.s. for every }s,
\]
then
\[
\begin{aligned}
\delta_v(\Gamma)
&=
\mathbb E_v D_{\rm KL}(p_U\Vert p_Z)\\
&\le
\mathbb E_v\chi^2(p_U,p_Z)\\
&=
\mathbb E_v\sum_{s=1}^K
\frac{(p_U(s)-p_Z(s))^2}{p_Z(s)}\\
&\le
\frac{1}{\alpha_v}
\mathbb E_v\|p_U-p_Z\|_2^2 .
\end{aligned}
\]
Thus Equation~\ref{eq:test-domination-condition} holds with
$c_v^{(0)}=\alpha_v^{-1}$, and for the complete one-hot family the test
coverage residual is zero. More general finite sketches need not determine the
full conditional law; their uncovered component is what $\varepsilon_{{\rm test},v}$ records.

Thus $\varepsilon_{{\rm test},v}$ is a representation/test approximation
term, not a sampling error.

\subsubsection{Profiled Predictive Value}
\label{app:lifted-profiled-risk}
\label{app:full-balanced-energy}

The finite-test value in Equation~\ref{eq:finite-test-gap} uses the raw test
coordinates, as in the main text. The profiled population criterion
additionally balances the target coordinates. Define
\[
\Sigma_{PP,v}
:=
\mathbb E_v[\overline P_v\overline P_v^\top],
\qquad
W_{P,v}
:=
(\Sigma_{PP,v}+\varepsilon I)^{-1/2},
\qquad
\widetilde P_v:=W_{P,v}\overline P_v,
\]
with $\varepsilon>0$, and let
\[
\widetilde m_{X,v}
:=
\mathbb E_v[\widetilde P_v\mid X_v,C_v]
=
W_{P,v}m_{X,v}.
\]
The corresponding balanced Bayes value gap is
\[
\Delta_v^{{\rm test},{\rm bal}}(\Gamma)
:=
\mathcal R_v^\star(\widetilde P_v\mid Z_v^\Gamma,C_v)
-
\mathcal R_v^\star(\widetilde P_v\mid U_v,C_v)
=
\mathbb E_v
\|W_{P,v}(m_{U,v}-m_{Z,v})\|_2^2 .
\]
Let $s_v:=\sigma_{\min}(W_{P,v})>0$. Then
\begin{equation}
\Delta_v^{\rm test}(\Gamma)
\le
s_v^{-2}
\Delta_v^{{\rm test},{\rm bal}}(\Gamma).
\label{eq:raw-to-balanced-test-gap}
\end{equation}
This step changes only the metric on the fixed finite test coordinates.

We next restrict the Bayes predictor to the fixed readout coordinates used by
the implementation and profile the readout exactly. For notational
simplicity, write $X_v$ for those fixed coordinates; if a fixed lift is used,
$X_v$ below denotes that lift. Define
\[
\overline X_v
:=
X_v-\mathbb E_v[X_v\mid C_v],
\]
and
\[
\Sigma_{XX,v}:=\mathbb E_v[\overline X_v\overline X_v^\top],
\qquad
\Sigma_{PX,v}:=\mathbb E_v[\overline P_v\overline X_v^\top],
\qquad
\Sigma_{XP,v}:=\Sigma_{PX,v}^\top.
\]
For $\lambda>0$, consider
\begin{equation}
\mathcal R_{\lambda,\varepsilon,v}^{\rm lin}(X_v\mid C_v)
:=
\min_A
\left\{
\mathbb E_v\|\widetilde P_v-A\overline X_v\|_2^2
+\lambda\|A\|_F^2
\right\}.
\label{eq:profiled-ridge-risk}
\end{equation}

\begin{prop}[Profiled predictive value]
\label{prop:profiled-predictive-energy}
The minimizer of Equation~\ref{eq:profiled-ridge-risk} is
\[
A_{X,v}^\star
=
W_{P,v}\Sigma_{PX,v}(\Sigma_{XX,v}+\lambda I)^{-1},
\]
and
\begin{equation}
\boxed{
\begin{aligned}
\mathcal R_{\lambda,\varepsilon,v}^{\rm lin}(X_v\mid C_v)
={}&
\operatorname{tr}\!\left[
(\Sigma_{PP,v}+\varepsilon I)^{-1}\Sigma_{PP,v}
\right]
-J_{{\rm bal},v}(X_v\mid C_v),\\
J_{{\rm bal},v}(X_v\mid C_v)
:={}&
\operatorname{tr}\!\left[
(\Sigma_{PP,v}+\varepsilon I)^{-1}
\Sigma_{PX,v}
(\Sigma_{XX,v}+\lambda I)^{-1}
\Sigma_{XP,v}
\right].
\end{aligned}}
\label{eq:profiled-predictive-energy}
\end{equation}
Equivalently,
\[
J_{{\rm bal},v}(X_v\mid C_v)
=
\left\|
(\Sigma_{PP,v}+\varepsilon I)^{-1/2}
\Sigma_{PX,v}
(\Sigma_{XX,v}+\lambda I)^{-1/2}
\right\|_F^2.
\]
\end{prop}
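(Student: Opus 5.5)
The plan is to treat Equation~\ref{eq:profiled-ridge-risk} as a finite-dimensional strictly convex quadratic in the matrix $A$, solve it by its first-order condition, and substitute the solution back. First expand the objective using the definitions of the centered covariances:
\[
\mathbb E_v\|\widetilde P_v-A\overline X_v\|_2^2+\lambda\|A\|_F^2
=\operatorname{tr}\!\left[W_{P,v}\Sigma_{PP,v}W_{P,v}\right]
-2\operatorname{tr}\!\left[A\Sigma_{XP,v}W_{P,v}\right]
+\operatorname{tr}\!\left[A(\Sigma_{XX,v}+\lambda I)A^\top\right].
\]
This uses $\widetilde P_v=W_{P,v}\overline P_v$, the symmetry of $W_{P,v}$, and $\mathbb E_v[\overline P_v\overline X_v^\top]=\Sigma_{PX,v}$. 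Because $\lambda>0$, the Hessian $2(\Sigma_{XX,v}+\lambda I)\otimes I$ is positive definite. The objective is therefore strictly convex and coercive, so a unique minimizer exists and is characterized by a vanishing gradient.

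Setting the gradient in $A$ to zero gives $A(\Sigma_{XX,v}+\lambda I)=W_{P,v}\Sigma_{PX,v}$. Hence $A^\star_{X,v}=W_{P,v}\Sigma_{PX,v}(\Sigma_{XX,v}+\lambda I)^{-1}$, which is the stated minimizer. At any stationary point of a quadratic of this form, the optimal value equals the constant term minus the quadratic term evaluated at the optimum:
\[
\operatorname{tr}\!\left[A^\star(\Sigma_{XX,v}+\lambda I)A^{\star\top}\right]
=\operatorname{tr}\!\left[W_{P,v}\Sigma_{PX,v}(\Sigma_{XX,v}+\lambda I)^{-1}\Sigma_{XP,v}W_{P,v}\right].
\]
Apply cyclicity of the trace together with $W_{P,v}^2=(\Sigma_{PP,v}+\varepsilon I)^{-1}$, which holds because $W_{P,v}$ is the symmetric inverse square root. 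This turns the constant term into $\operatorname{tr}[(\Sigma_{PP,v}+\varepsilon I)^{-1}\Sigma_{PP,v}]$ and the subtracted term into $J_{{\rm bal},v}$, which gives Equation~\ref{eq:profiled-predictive-energy}.

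For the Frobenius form, write $(\Sigma_{PP,v}+\varepsilon I)^{-1}=W_{P,v}W_{P,v}$ and $(\Sigma_{XX,v}+\lambda I)^{-1}=G^{1/2}G^{1/2}$, where $G:=(\Sigma_{XX,v}+\lambda I)^{-1}$ is positive definite. Cycling the trace gives $\operatorname{tr}[MM^\top]=\|M\|_F^2$ with $M=W_{P,v}\Sigma_{PX,v}G^{1/2}$.

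The only point needing care is finiteness of the second moments, so that the covariances exist; this is implicit in the setup. It also helps that the $C_v$-centering enters only through the definition of the covariances, so no additional conditioning argument is needed.
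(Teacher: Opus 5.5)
Your proof is correct and matches the paper's argument in substance. The paper expands the same quadratic and completes the square, writing the objective as $\operatorname{tr}(W_{P,v}\Sigma_{PP,v}W_{P,v})-\operatorname{tr}(W_{P,v}\Sigma_{PX,v}G_{X,v}^{-1}\Sigma_{XP,v}W_{P,v})+\|(A-A^\star_{X,v})G_{X,v}^{1/2}\|_F^2$ with $G_{X,v}=\Sigma_{XX,v}+\lambda I$; this gives the minimizer, its uniqueness, and the optimal value in one step where you use stationarity plus strict convexity, and the trace-cyclicity and $\operatorname{tr}(BB^\top)=\|B\|_F^2$ steps yielding both forms of $J_{{\rm bal},v}$ are the same as yours.
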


\begin{proof}
Let $G_{X,v}:=\Sigma_{XX,v}+\lambda I\succ0$. Expanding
Equation~\ref{eq:profiled-ridge-risk} and completing the square gives
\[
\begin{aligned}
&\mathbb E_v\|\widetilde P_v-A\overline X_v\|_2^2
+\lambda\|A\|_F^2\\
&\quad=
\operatorname{tr}(W_{P,v}\Sigma_{PP,v}W_{P,v})
-
\operatorname{tr}\!\left(
W_{P,v}\Sigma_{PX,v}G_{X,v}^{-1}
\Sigma_{XP,v}W_{P,v}
\right)\\
&\qquad+
\left\|
(A-W_{P,v}\Sigma_{PX,v}G_{X,v}^{-1})
G_{X,v}^{1/2}
\right\|_F^2.
\end{aligned}
\]
The final term is non-negative and vanishes at the stated minimizer.
Cyclicity of trace and $\operatorname{tr}(BB^\top)=\|B\|_F^2$ yield the two
forms of $J_{{\rm bal},v}$.
\end{proof}

The constant term in Equation~\ref{eq:profiled-predictive-energy} is common to
$U_v$ and $Z_v^\Gamma$. Consistently with the main text, define
\[
\boxed{
\Delta_v^{\rm bal}(\Gamma)
:=
J_{{\rm bal},v}(U_v\mid C_v)
-
J_{{\rm bal},v}(Z_v^\Gamma\mid C_v).
}
\]
Profiling itself is exact within the ridge class. The approximation restricts predictors to that class.

For $X_v\in\{U_v,Z_v^\Gamma\}$ define its ridge oracle residual
\[
\begin{aligned}
r_{{\rm lin},v}(X_v)
&:=
\mathcal R_{\lambda,\varepsilon,v}^{\rm lin}(X_v\mid C_v)
-
\mathcal R_v^\star(\widetilde P_v\mid X_v,C_v)\\
&=
\inf_A
\left\{
\mathbb E_v\|\widetilde m_{X,v}-A\overline X_v\|_2^2
+\lambda\|A\|_F^2
\right\}
\ge0 .
\end{aligned}
\]
The second equality follows from the $L^2$ orthogonality of
$\widetilde P_v-\widetilde m_{X,v}$ to every square-integrable function of
$(X_v,C_v)$. Since
\[
\Delta_v^{\rm bal}
=
\Delta_v^{{\rm test},{\rm bal}}
+r_{{\rm lin},v}(Z_v^\Gamma)
-r_{{\rm lin},v}(U_v),
\]
we have
\begin{equation}
\boxed{
\left|
\Delta_v^{{\rm test},{\rm bal}}(\Gamma)
-
\Delta_v^{\rm bal}(\Gamma)
\right|
\le
\varepsilon_{{\rm lin},v}(\Gamma),
}
\label{eq:linear-gap-control}
\end{equation}
where
\[
\varepsilon_{{\rm lin},v}(\Gamma)
:=
r_{{\rm lin},v}(U_v)+r_{{\rm lin},v}(Z_v^\Gamma).
\]
The term vanishes if both balanced conditional means lie in the readout span as regularization vanishes.

Combining Equations~\ref{eq:finite-test-control},
\ref{eq:raw-to-balanced-test-gap}, and \ref{eq:linear-gap-control} gives the
population profiled bridge
\begin{equation}
\boxed{
\delta_v(\Gamma)
\le
c_v\Delta_v^{\rm bal}(\Gamma)
+
\varepsilon_{{\rm test},v}(\Gamma)
+
c_v\varepsilon_{{\rm lin},v}(\Gamma),
\qquad
c_v:=\frac{c_v^{(0)}}{s_v^2}.
}
\label{eq:population-profiled-bridge}
\end{equation}

\paragraph{Width and the spectral oracle.}
\label{app:ky-fan-energy-deficiency}
A width-$r$ transmitted state does not, by dimensionality alone, induce a
rank-$r$ orthogonal projection in the balanced covariance geometry. A Ky--Fan
rank-$r$ construction can still be used as a diagnostic linear oracle when
projection realizability is imposed, but it is not needed for
Equation~\ref{eq:population-profiled-bridge} and is not identified with the
general nonlinear compressor. We therefore do not use a Ky--Fan equality in
the theoretical bridge from $\delta_v$ to the implemented statistic.

\subsubsection{Empirical Predictive Deficiency}
\label{app:empirical-predictive-deficiency}

We treat diagonal covariance as optional before deriving finite-sample bounds for the full-covariance method.

\paragraph{Population diagonal geometry.}
Define the stabilized diagonal marginal covariances
\[
D_{P,v}
:=
\operatorname{diag}(\Sigma_{PP,v})+\varepsilon I,
\qquad
D_{X,v}
:=
\operatorname{diag}(\Sigma_{XX,v})+\lambda I,
\]
and
\[
J_{{\rm diag},v}(X_v\mid C_v)
:=
\operatorname{tr}\!\left[
D_{P,v}^{-1}\Sigma_{PX,v}D_{X,v}^{-1}\Sigma_{XP,v}
\right].
\]
The corresponding population diagonal deficiency is
\[
D_v^{\rm diag}(\Gamma)
:=
J_{{\rm diag},v}(U_v\mid C_v)
-
J_{{\rm diag},v}(Z_v^\Gamma\mid C_v).
\]

Let
\[
E_{P,v}:=\Sigma_{PP,v}-\operatorname{diag}(\Sigma_{PP,v}),
\qquad
E_{X,v}:=\Sigma_{XX,v}-\operatorname{diag}(\Sigma_{XX,v}),
\]
and define the relative off-diagonal departures
\[
\rho_{P,v}
:=
\left\|
D_{P,v}^{-1/2}E_{P,v}D_{P,v}^{-1/2}
\right\|_{\rm op},
\qquad
\rho_{X,v}
:=
\left\|
D_{X,v}^{-1/2}E_{X,v}D_{X,v}^{-1/2}
\right\|_{\rm op}.
\]
Here and below $\|\cdot\|_{\rm op}$ denotes spectral norm.

\begin{lemma}[Diagonal covariance perturbation]
\label{lem:diagonal-perturbation}
For $X_v\in\{U_v,Z_v^\Gamma\}$, suppose
$\|\Sigma_{PX,v}\|_F\le K_{X,v}$. Then
\begin{equation}
\begin{aligned}
&\left|
J_{{\rm bal},v}(X_v\mid C_v)
-
J_{{\rm diag},v}(X_v\mid C_v)
\right|\\
&\quad\le
K_{X,v}^2
\left(
\frac{\|E_{P,v}\|_{\rm op}}{\varepsilon^2\lambda}
+
\frac{\|E_{X,v}\|_{\rm op}}{\varepsilon\lambda^2}
\right)
=:\beta_{X,v}.
\end{aligned}
\label{eq:diagonal-perturbation-bound}
\end{equation}
If additionally $\rho_{P,v}<1$ and $\rho_{X,v}<1$, the first display can be
replaced by the relative bound
\begin{equation}
\beta_{X,v}^{\rm rel}
:=
\frac{K_{X,v}^2}{\varepsilon\lambda}
\left[
\frac{\rho_{P,v}}
{(1-\rho_{P,v})(1-\rho_{X,v})}
+
\frac{\rho_{X,v}}{1-\rho_{X,v}}
\right].
\label{eq:diagonal-relative-bound}
\end{equation}
Consequently,
\begin{equation}
\boxed{
\left|
\Delta_v^{\rm bal}(\Gamma)-D_v^{\rm diag}(\Gamma)
\right|
\le
\varepsilon_{{\rm diag},v}(\Gamma),
}
\label{eq:diagonal-gap-control}
\end{equation}
with
$\varepsilon_{{\rm diag},v}:=\beta_{U,v}+\beta_{Z,v}$, or with the
corresponding relative bounds when their conditions hold.
\end{lemma}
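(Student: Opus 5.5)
The plan is a two-step telescoping of the difference between the two traces, followed by bounding each piece with an operator-norm/Frobenius-norm trace inequality. Abbreviate $M:=\Sigma_{PX,v}$, $A:=\Sigma_{PP,v}+\varepsilon I$ and $B:=\Sigma_{XX,v}+\lambda I$. Then
\[
A=D_{P,v}+E_{P,v},\qquad B=D_{X,v}+E_{X,v}.
\]
We also have $\Sigma_{XP,v}=M^\top$, so $J_{{\rm bal},v}=\operatorname{tr}[A^{-1}MB^{-1}M^\top]$ and $J_{{\rm diag},v}=\operatorname{tr}[D_{P,v}^{-1}MD_{X,v}^{-1}M^\top]$.

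\textbf{Step 1: telescoping.} Insert the intermediate term $\operatorname{tr}[D_{P,v}^{-1}MB^{-1}M^\top]$ and use the resolvent identities
\[
A^{-1}-D_{P,v}^{-1}=-A^{-1}E_{P,v}D_{P,v}^{-1},\qquad B^{-1}-D_{X,v}^{-1}=-B^{-1}E_{X,v}D_{X,v}^{-1}.
\]
This gives
\[
J_{{\rm bal},v}-J_{{\rm diag},v}
=-\operatorname{tr}\!\left[A^{-1}E_{P,v}D_{P,v}^{-1}MB^{-1}M^\top\right]
-\operatorname{tr}\!\left[D_{P,v}^{-1}MB^{-1}E_{X,v}D_{X,v}^{-1}M^\top\right].
\]

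\textbf{Step 2: absolute bound.} Use the elementary inequality $|\operatorname{tr}[N_1MN_2M^\top]|\le\|N_1\|_{\rm op}\|N_2\|_{\rm op}\|M\|_F^2$. The needed norm bounds come from the following facts:
\begin{itemize}
\item $A\succeq\varepsilon I$ and $B\succeq\lambda I$, since covariances are PSD.
\item $D_{P,v}\succeq\varepsilon I$ and $D_{X,v}\succeq\lambda I$, since diagonals of PSD matrices are nonnegative.
\end{itemize}
Hence $\|A^{-1}E_{P,v}D_{P,v}^{-1}\|_{\rm op}\le\|E_{P,v}\|_{\rm op}/\varepsilon^2$ and $\|B^{-1}\|_{\rm op}\le1/\lambda$. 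Similarly $\|D_{P,v}^{-1}\|_{\rm op}\le1/\varepsilon$ and $\|B^{-1}E_{X,v}D_{X,v}^{-1}\|_{\rm op}\le\|E_{X,v}\|_{\rm op}/\lambda^2$. Together with $\|M\|_F\le K_{X,v}$, this yields exactly $\beta_{X,v}$ in Equation~\ref{eq:diagonal-perturbation-bound}.

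\textbf{Step 3: relative bound.} Here I would whiten by the diagonals instead. Set
\[
F_P:=D_{P,v}^{-1/2}E_{P,v}D_{P,v}^{-1/2},\qquad F_X:=D_{X,v}^{-1/2}E_{X,v}D_{X,v}^{-1/2},
\]
so that $\|F_P\|_{\rm op}=\rho_{P,v}$ and $\|F_X\|_{\rm op}=\rho_{X,v}$. Also set $\widetilde M:=D_{P,v}^{-1/2}MD_{X,v}^{-1/2}$, which satisfies $\|\widetilde M\|_F^2\le K_{X,v}^2/(\varepsilon\lambda)$.

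Since $A=D_{P,v}^{1/2}(I+F_P)D_{P,v}^{1/2}$ and $\rho_{P,v}<1$, we get
\[
A^{-1}-D_{P,v}^{-1}=-D_{P,v}^{-1/2}F_P(I+F_P)^{-1}D_{P,v}^{-1/2},
\]
and analogously for $B$. By cyclicity of the trace, the two telescoped terms become
\[
-\operatorname{tr}\!\left[F_P(I+F_P)^{-1}\widetilde M(I+F_X)^{-1}\widetilde M^\top\right]
\quad\text{and}\quad
-\operatorname{tr}\!\left[\widetilde MF_X(I+F_X)^{-1}\widetilde M^\top\right].
\]
A Neumann-series bound gives $\|(I+F)^{-1}\|_{\rm op}\le(1-\rho)^{-1}$ for $\|F\|_{\rm op}=\rho<1$. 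Applying the same trace inequality then reproduces the bracket in Equation~\ref{eq:diagonal-relative-bound}.

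\textbf{Step 4: consequence.} Equation~\ref{eq:diagonal-gap-control} follows from the triangle inequality applied to $X_v=U_v$ and $X_v=Z_v^\Gamma$ separately.

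The only step needing care is the relative bound, because the whitening must be applied symmetrically so that the factor $(\varepsilon\lambda)^{-1}$ appears exactly once through $\|\widetilde M\|_F^2$. The rest is routine.
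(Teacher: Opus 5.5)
Your proposal is correct and follows essentially the same route as the paper. It uses the two-term telescope through the intermediate term $\operatorname{tr}[D_{P,v}^{-1}\Sigma_{PX,v}(\Sigma_{XX,v}+\lambda I)^{-1}\Sigma_{XP,v}]$, resolvent-identity and operator-norm/Frobenius trace bounds for $\beta_{X,v}$, diagonal whitening $\Sigma_{PP,v}+\varepsilon I=D_{P,v}^{1/2}(I+R_P)D_{P,v}^{1/2}$ with $\|(I+R)^{-1}\|_{\rm op}\le(1-\rho)^{-1}$ and $\|(I+R)^{-1}-I\|_{\rm op}\le\rho/(1-\rho)$ for the relative bound, and the triangle inequality over $U_v$ and $Z_v^\Gamma$; your explicit cyclic rewriting with $\widetilde M=D_{P,v}^{-1/2}\Sigma_{PX,v}D_{X,v}^{-1/2}$ simply spells out what the paper compresses into ``substituting these two bounds into the same telescope.''
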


\begin{proof}
Abbreviate
$A:=\Sigma_{PP,v}+\varepsilon I$,
$D_P:=D_{P,v}$,
$B:=\Sigma_{XX,v}+\lambda I$,
$D_X:=D_{X,v}$, and $C:=\Sigma_{PX,v}$. The resolvent identity gives
\[
\|A^{-1}-D_P^{-1}\|_{\rm op}
\le
\frac{\|E_{P,v}\|_{\rm op}}{\varepsilon^2},
\qquad
\|B^{-1}-D_X^{-1}\|_{\rm op}
\le
\frac{\|E_{X,v}\|_{\rm op}}{\lambda^2},
\]
while
$\|A^{-1}\|_{\rm op},\|D_P^{-1}\|_{\rm op}\le\varepsilon^{-1}$ and
$\|B^{-1}\|_{\rm op},\|D_X^{-1}\|_{\rm op}\le\lambda^{-1}$.
A two-term telescope gives
\[
\begin{aligned}
A^{-1}CB^{-1}C^\top-D_P^{-1}CD_X^{-1}C^\top
={}&(A^{-1}-D_P^{-1})CB^{-1}C^\top\\
&+D_P^{-1}C(B^{-1}-D_X^{-1})C^\top.
\end{aligned}
\]
Using
$|\operatorname{tr}(MN)|\le\|M\|_{\rm op}\|N\|_\ast$ and
$\|CBC^\top\|_\ast\le\|C\|_F^2\|B\|_{\rm op}$ proves
Equation~\ref{eq:diagonal-perturbation-bound}.

For the relative form, write
$A=D_P^{1/2}(I+R_P)D_P^{1/2}$ with
$\|R_P\|_{\rm op}=\rho_{P,v}$, and analogously for $B$. If
$\rho<1$, then
$\|(I+R)^{-1}\|_{\rm op}\le(1-\rho)^{-1}$ and
$\|(I+R)^{-1}-I\|_{\rm op}\le\rho/(1-\rho)$. Substituting these two bounds
into the same telescope gives
Equation~\ref{eq:diagonal-relative-bound}. Applying the single-state bound to
$U_v$ and $Z_v^\Gamma$ and using the triangle inequality proves
Equation~\ref{eq:diagonal-gap-control}.
\end{proof}

Equation~\ref{eq:diagonal-gap-control} makes precise the sense in which
diagonalization is a geometric approximation. The error is controlled by the
off-diagonal departure of the population covariance geometry and is distinct
from sampling noise.

\paragraph{Finite-sample moments.}
Let $\mathcal B$ contain $n_{\mathcal B}$ independent sampling units from the
eligible-interface law. In the tree setting, a sampling unit can be taken as
one independent tree/trajectory cluster; for i.i.d. interfaces it is one
interface. Let hats denote the corresponding empirical centered covariance
blocks. The main empirical value uses full regularized covariance blocks:
\begin{equation}
\boxed{
\widehat J_{{\rm bal},v,\mathcal B}(X_v)
:=
\operatorname{tr}\!\left[
(\widehat\Sigma_{PP,v}+\varepsilon I)^{-1}
\widehat\Sigma_{PX,v}
(\widehat\Sigma_{XX,v}+\lambda I)^{-1}
\widehat\Sigma_{XP,v}
\right].
}
\label{eq:empirical-balanced-value}
\end{equation}
The main empirical deficiency is
\begin{equation}
\boxed{
\widehat{\mathcal D}_{v,\mathcal B}(\Gamma)
:=
\widehat J_{{\rm bal},v,\mathcal B}(U_v)
-
\widehat J_{{\rm bal},v,\mathcal B}(Z_v^\Gamma).
}
\label{eq:computable-energy-loss}
\end{equation}
The $U_v$ term is a stop-gradient reference in optimization. The diagonal
variant replaces both regularized covariance inverses in
Equation~\ref{eq:empirical-balanced-value} by their regularized diagonal
inverses. Denote its value and gap by
$\widehat J_{{\rm diag},v,\mathcal B}$ and
$\widehat{\mathcal D}^{\rm diag}_{v,\mathcal B}$.

\begin{lemma}[Finite-sample perturbation]
\label{lem:finite-sample-perturbation}
Fix $\Gamma$ independently of the sampling units in $\mathcal B$. Suppose
$\varepsilon\ge\varepsilon_0>0$, $\lambda\ge\lambda_0>0$, and
\[
\|\Sigma_{PX,v}\|_F
\vee
\|\widehat\Sigma_{PX,v}\|_F
\le K_v,
\qquad X\in\{U,Z\}.
\]
On an event on which, simultaneously for $X\in\{U,Z\}$,
\[
\begin{aligned}
\|\widehat\Sigma_{PP,v}-\Sigma_{PP,v}\|_{\rm op}&\le\omega_{PP,v},\\
\|\widehat\Sigma_{XX,v}-\Sigma_{XX,v}\|_{\rm op}&\le\omega_{XX,v},\\
\|\widehat\Sigma_{PX,v}-\Sigma_{PX,v}\|_F&\le\omega_{PX,v}^F,
\end{aligned}
\]
one has
\begin{equation}
\left|
\widehat J_{{\rm bal},v,\mathcal B}(X_v)
-J_{{\rm bal},v}(X_v\mid C_v)
\right|
\le
\Lambda_{X,v},
\label{eq:sample-value-perturbation}
\end{equation}
where
\[
\Lambda_{X,v}
:=
\frac{K_v^2}{\varepsilon_0^2\lambda_0}\omega_{PP,v}
+
\frac{K_v^2}{\varepsilon_0\lambda_0^2}\omega_{XX,v}
+
\frac{2K_v}{\varepsilon_0\lambda_0}\omega_{PX,v}^F.
\]
Consequently,
\begin{equation}
\boxed{
\left|
\widehat{\mathcal D}_{v,\mathcal B}(\Gamma)
-\Delta_v^{\rm bal}(\Gamma)
\right|
\le
\varepsilon_{{\rm stat},v},
\qquad
\varepsilon_{{\rm stat},v}:=\Lambda_{U,v}+\Lambda_{Z,v}.
}
\label{eq:finite-sample-gap-control}
\end{equation}
If the three displayed covariance-deviation bounds hold jointly with
probability at least $1-\eta$, then
Equation~\ref{eq:finite-sample-gap-control} holds with the same probability.
Under standard bounded or sub-Gaussian sampling assumptions, the deviation
radii $\omega_{PP,v}$, $\omega_{XX,v}$, and $\omega_{PX,v}^F$ admit the usual
dimension-dependent $n_{\mathcal B}^{-1/2}$ concentration, with lower-order
$n_{\mathcal B}^{-1}$ terms as appropriate. For clustered trees,
$n_{\mathcal B}$ denotes the number of independent clusters and the
concentration constants depend on the corresponding cluster-norm bound.
\end{lemma}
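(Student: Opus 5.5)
The plan is a deterministic perturbation argument carried out on the stated event, using the same two-term resolvent telescope as in Lemma~\ref{lem:diagonal-perturbation}, but with three terms because the cross-covariance is also estimated. Fix $X\in\{U,Z\}$ and abbreviate
\[
A:=\Sigma_{PP,v}+\varepsilon I,\qquad \widehat A:=\widehat\Sigma_{PP,v}+\varepsilon I,
\]
\[
B:=\Sigma_{XX,v}+\lambda I,\qquad \widehat B:=\widehat\Sigma_{XX,v}+\lambda I,
\]
\[
C:=\Sigma_{PX,v},\qquad \widehat C:=\widehat\Sigma_{PX,v}.
\]
Empirical centered covariances are positive semidefinite, so $\|A^{-1}\|_{\rm op},\|\widehat A^{-1}\|_{\rm op}\le\varepsilon^{-1}\le\varepsilon_0^{-1}$, and analogously $\|B^{-1}\|_{\rm op},\|\widehat B^{-1}\|_{\rm op}\le\lambda_0^{-1}$. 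The resolvent identity $\widehat A^{-1}-A^{-1}=\widehat A^{-1}(A-\widehat A)A^{-1}$ then gives
\[
\|\widehat A^{-1}-A^{-1}\|_{\rm op}\le\omega_{PP,v}/\varepsilon_0^2,\qquad
\|\widehat B^{-1}-B^{-1}\|_{\rm op}\le\omega_{XX,v}/\lambda_0^2.
\]

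Next, telescope the difference
\begin{align*}
\operatorname{tr}[\widehat A^{-1}\widehat C\widehat B^{-1}\widehat C^\top]-\operatorname{tr}[A^{-1}CB^{-1}C^\top]
={}&\operatorname{tr}[(\widehat A^{-1}-A^{-1})\widehat C\widehat B^{-1}\widehat C^\top]\\
&+\operatorname{tr}[A^{-1}\widehat C(\widehat B^{-1}-B^{-1})\widehat C^\top]\\
&+\bigl(\operatorname{tr}[A^{-1}\widehat CB^{-1}\widehat C^\top]-\operatorname{tr}[A^{-1}CB^{-1}C^\top]\bigr).
\end{align*}
The first two terms are handled as in Lemma~\ref{lem:diagonal-perturbation}. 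Apply $|\operatorname{tr}(MN)|\le\|M\|_{\rm op}\|N\|_\ast$ together with $\|\widehat C M\widehat C^\top\|_\ast\le\|\widehat C\|_F^2\|M\|_{\rm op}$, and use $\|\widehat C\|_F\le K_v$. This gives the contributions $K_v^2\omega_{PP,v}/(\varepsilon_0^2\lambda_0)$ and $K_v^2\omega_{XX,v}/(\varepsilon_0\lambda_0^2)$.

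The third term is a quadratic form difference. Write $\widehat C M\widehat C^\top-CMC^\top=(\widehat C-C)M\widehat C^\top+CM(\widehat C-C)^\top$ with $M=B^{-1}$. Bound each piece via $|\operatorname{tr}(A^{-1}XMY^\top)|\le\|A^{-1}\|_{\rm op}\|M\|_{\rm op}\|X\|_F\|Y\|_F$, which follows from Cauchy--Schwarz in the Frobenius inner product. This yields $2K_v\omega^F_{PX,v}/(\varepsilon_0\lambda_0)$, using $\|C\|_F,\|\widehat C\|_F\le K_v$.

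Summing the three contributions gives Equation~\ref{eq:sample-value-perturbation} with exactly $\Lambda_{X,v}$. Equation~\ref{eq:finite-sample-gap-control} then follows from the triangle inequality applied to the $U$ and $Z$ terms of $\widehat{\mathcal D}_{v,\mathcal B}-\Delta^{\rm bal}_v$. The probability statement is immediate, because the bound is deterministic on the event.

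The only point that needs care is the independence and concentration remark. Fixing $\Gamma$ independently of $\mathcal B$ makes $Z_v^\Gamma$ a fixed function of i.i.d.\ (or cluster-i.i.d.) units, so standard matrix Bernstein or sub-Gaussian covariance concentration applies to $\omega_{PP,v}$, $\omega_{XX,v}$ and $\omega^F_{PX,v}$. For clusters, concentration is applied to the cluster-summed moment contributions. I would cite these rates rather than derive them. I expect no real obstacle, beyond keeping the nuclear-norm versus Frobenius bookkeeping consistent so that the constants match $\Lambda_{X,v}$.
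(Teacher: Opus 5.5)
Your proposal is correct and follows the paper's own argument: the paper uses a four-term telescope, with two inverse factors handled by the resolvent identity and two cross-covariance factors, followed by the triangle inequality over $U$ and $Z$. That is exactly your three-term decomposition once the quadratic-form difference is split in two, and your operator-norm, nuclear-norm and Frobenius bookkeeping reproduces the constants in $\Lambda_{X,v}$.
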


\begin{proof}
The covariance bounds apply to the full regularized blocks. A four-term
telescope for the two inverse factors and the two cross-covariance factors,
together with the resolvent identity
$\widehat A^{-1}-A^{-1}=\widehat A^{-1}(A-\widehat A)A^{-1}$, yields
Equation~\ref{eq:sample-value-perturbation}. Applying it to $U_v$ and
$Z_v^\Gamma$ gives Equation~\ref{eq:finite-sample-gap-control}. The probability statement follows whenever the covariance-deviation event is
supplied by an appropriate concentration bound, for example standard
bounded/sub-Gaussian covariance concentration or matrix Bernstein for
independent bounded cluster summaries. For the diagonal variant, diagonal
extraction is nonexpansive in spectral norm; the same perturbation radius
bounds its empirical value relative to $J_{{\rm diag},v}$.
\end{proof}

When several fixed sketches are used, the statistic in
Equation~\ref{eq:computable-energy-loss} is computed for each sketch and
averaged before differentiation. The same bounds apply sketchwise and combine
by averaging (or by a union bound when a simultaneous high-probability event
is required).

\begin{corollary}[Controlled surrogate for the local predictive deficiency]
\label{cor:controlled-surrogate}
Under context localization, the test-domination condition, and the regularity
conditions of Lemmas~\ref{lem:finite-test-coverage} and
\ref{lem:finite-sample-perturbation}, suppose additionally that the covariance
deviation event in Lemma~\ref{lem:finite-sample-perturbation} holds with
probability at least $1-\eta$. Then, with probability at least $1-\eta$,
\begin{equation}
\boxed{
\begin{aligned}
\delta_v(\Gamma)
\le{}&
 c_v\widehat{\mathcal D}_{v,\mathcal B}(\Gamma)
+\varepsilon_{{\rm test},v}(\Gamma)\\
&+c_v\Big(
\varepsilon_{{\rm lin},v}(\Gamma)
+\varepsilon_{{\rm stat},v}
\Big).
\end{aligned}}
\label{eq:controlled-surrogate-final}
\end{equation}
For the diagonal variant, if Lemma~\ref{lem:diagonal-perturbation} also
holds, replace $\widehat{\mathcal D}_{v,\mathcal B}$ by
$\widehat{\mathcal D}^{\rm diag}_{v,\mathcal B}$ and add
$c_v\varepsilon_{{\rm diag},v}$ to the right-hand side. Absorbing $c_v$ into
the last two residual definitions for the main method yields
the compact form
\[
\delta_v(\Gamma)
\le
c_v\widehat{\mathcal D}_{v,\mathcal B}(\Gamma)
+\varepsilon_{{\rm test},v}
+\widetilde\varepsilon_{{\rm lin},v}
+\widetilde\varepsilon_{{\rm stat},v}.
\]
\end{corollary}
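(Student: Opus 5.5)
The corollary follows by chaining three bounds already established in this appendix. First, the population profiled bridge in Equation~\ref{eq:population-profiled-bridge} gives $\delta_v(\Gamma)\le c_v\Delta_v^{\rm bal}(\Gamma)+\varepsilon_{{\rm test},v}(\Gamma)+c_v\varepsilon_{{\rm lin},v}(\Gamma)$. This bridge is itself assembled from three ingredients:
\begin{itemize}
\item Lemma~\ref{lem:finite-test-coverage}, which requires the test-domination condition and $P_v=Q_{m,v}\Pi_{m,v}\psi_v$;
\item the balancing comparison in Equation~\ref{eq:raw-to-balanced-test-gap};
\item the ridge-readout residual bound in Equation~\ref{eq:linear-gap-control}.
\end{itemize}
Context localization, Equation~\ref{eq:context-localization}, is invoked at the outset to identify the main-text $\delta_v$, which conditions on the full $C$, with the $C_v$-conditioned quantity of Equation~\ref{eq:global-local-defect-equivalence}. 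This ensures that all subsequent population quantities are stated in terms of the visible context.

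Second, on the covariance-deviation event of Lemma~\ref{lem:finite-sample-perturbation}, Equation~\ref{eq:finite-sample-gap-control} gives $\Delta_v^{\rm bal}(\Gamma)\le\widehat{\mathcal D}_{v,\mathcal B}(\Gamma)+\varepsilon_{{\rm stat},v}$. Here we use only the one-sided consequence of the absolute-value bound. Since $c_v=c_v^{(0)}/s_v^2>0$, multiplying by $c_v$ and substituting into the bridge yields Equation~\ref{eq:controlled-surrogate-final} on that event. The event has probability at least $1-\eta$ by hypothesis, and every other step is deterministic, so the bound holds with the same probability. One point needs checking: Lemma~\ref{lem:finite-sample-perturbation} assumes $\Gamma$ is fixed independently of the sampling units in $\mathcal B$. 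The corollary inherits this assumption, which is consistent with the paper's remark that independent-sample conditions are required.

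For the diagonal variant, the triangle inequality gives
\[
\Delta_v^{\rm bal}\le D_v^{\rm diag}+\varepsilon_{{\rm diag},v}\le\widehat{\mathcal D}^{\rm diag}_{v,\mathcal B}+\varepsilon_{{\rm diag},v}+\varepsilon_{{\rm stat},v}.
\]
The first inequality is Equation~\ref{eq:diagonal-gap-control} from Lemma~\ref{lem:diagonal-perturbation}. The second is the diagonal remark at the end of the proof of Lemma~\ref{lem:finite-sample-perturbation}: diagonal extraction is nonexpansive in spectral norm, so the same deviation radii bound $|\widehat J_{{\rm diag},v,\mathcal B}-J_{{\rm diag},v}|$. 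This remark should be stated explicitly with the diagonal constants, because in the diagonal case the covariance blocks are replaced by diagonal blocks in the $\Lambda$ formula. Multiplying by $c_v$ adds $c_v\varepsilon_{{\rm diag},v}$ to the right-hand side, as claimed. The compact form then follows by setting $\widetilde\varepsilon_{{\rm lin},v}:=c_v\varepsilon_{{\rm lin},v}$ and $\widetilde\varepsilon_{{\rm stat},v}:=c_v\varepsilon_{{\rm stat},v}$.

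The main obstacle is bookkeeping rather than analysis. We must confirm that the quantities $\varepsilon_{{\rm stat},v}$ and $\varepsilon_{{\rm lin},v}$ appearing in the two lemmas are the same objects, taken under the same eligible-interface law $\mathbb E_v$ with $M_v=1$. We must also confirm that the uniform bound $K_v$ on $\|\Sigma_{PX,v}\|_F$ and $\|\widehat\Sigma_{PX,v}\|_F$ holds for both $X=U$ and $X=Z$ on the event. If several sketches are averaged, the event should be taken as the intersection over sketches, with $\eta$ adjusted by a union bound.
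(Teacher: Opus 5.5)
Your proposal is correct and follows the paper's proof, which chains the finite-test coverage bound, the balancing and ridge-readout steps (i.e., the population profiled bridge with $c_v=c_v^{(0)}/s_v^2$), and the one-sided consequence of the finite-sample perturbation bound on the covariance-deviation event; for the diagonal variant it inserts the diagonal-gap bound before the finite-sample step, exactly as you do. Your caveat about restating the diagonal constants is unnecessary: $\Lambda_{X,v}$ depends only on $K_v$, $\varepsilon_0$, $\lambda_0$ and the deviation radii, diagonal extraction does not enlarge those radii, and the regularized diagonal blocks keep the same resolvent bounds, so the same $\varepsilon_{{\rm stat},v}$ applies verbatim.
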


\begin{proof}
Successively apply Equations~\ref{eq:finite-test-control},
\ref{eq:linear-gap-control}, and \ref{eq:finite-sample-gap-control} along the
chain in Equation~\ref{eq:appendix-surrogate-chain}. For the diagonal variant,
insert Equation~\ref{eq:diagonal-gap-control} before applying the same
finite-sample perturbation estimate to the diagonal empirical value.
\end{proof}

Corollary~\ref{cor:controlled-surrogate} is the only bridge needed here from
the ideal local criterion to the implemented statistic. It does not assert
that $\widehat{\mathcal D}_{v,\mathcal B}=\delta_v$, nor that every stochastic
optimizer step monotonically decreases the Shannon defect. Instead, it
separates the discrepancy into identifiable terms that shrink under richer
tests, a better readout class, and more statistical support, respectively.
The optional diagonal variant also incurs the covariance-geometry residual.

The preceding calibration statement is stratum-wise: it applies to any fixed
interface law for which the displayed moments and residuals are defined. The
implementation does not estimate a separate covariance problem for every
interface. Instead, aligned eligible interfaces are pooled and the same moment
functional is evaluated under the resulting pooled empirical law. Because
covariance inversion is nonlinear, we do not identify this pooled statistic
with a weighted average of the stratum-wise statistics, nor with an
independent estimator of every $\delta_v$. Pooling is used to obtain a
higher-support shared statistic. The projection guarantee below applies
directly to this actual pooled statistic through its interface-wise
differential.

\subsubsection{Interface Constraints and Projection}
\label{app:interface-constraints}
\label{app:predictive-feasibility-proof}

Although $\widehat{\mathcal D}_{\mathcal B}$ is estimated jointly from all
eligible interfaces in the current window, its differential decomposes
through the individual compressed states. Enumerate them as
$z_1^\Gamma,\ldots,z_{n_{\mathcal B}}^\Gamma$ and define, at the current
parameters $\Gamma_t$,
\[
a_i
:=
\nabla_{z_i^\Gamma}
\widehat{\mathcal D}_{\mathcal B}(\Gamma_t),
\qquad
q_i
:=
\left(\nabla_\Gamma z_i^\Gamma\right)^\top a_i
\in\mathbb R^p.
\]
The adjoint $a_i$ is obtained from the jointly estimated statistic; it is not
the gradient of a separately estimated single-interface covariance problem.
The chain rule gives
\[
\nabla_\Gamma\widehat{\mathcal D}_{\mathcal B}(\Gamma_t)
=
\sum_{i=1}^{n_{\mathcal B}}q_i.
\]
Holding the other interface routes fixed, the first-order change transmitted
through interface $i$ under a parameter displacement $d$ is
\begin{equation}
q_i^\top d+o(\|d\|_2).
\label{eq:interface-first-order-change}
\end{equation}

Let $t_t\in\mathbb R^p$ be the task proposal produced by the host optimizer.
For each group $\mathcal G_g$ used in the main-text implementation, put
$\bar q_g:=\sum_{i\in\mathcal G_g}q_i$. For $\bar q_g\ne0$, require
\begin{equation}
\bar q_g^\top d
\le
\kappa\|\bar q_g\|_2\|t_t\|_2.
\label{eq:interface-halfspace}
\end{equation}
Defining $h_g:=\bar q_g/\|\bar q_g\|_2$ gives
$h_g^\top d\le\kappa\|t_t\|_2$. Groups with $\bar q_g=0$ impose no
first-order restriction. Stack the remaining $h_g^\top$ as the rows of
$H_t$, let
\[
b_t:=\kappa\|t_t\|_2\mathbf 1_m,
\qquad
\mathcal F_t:=\{d\in\mathbb R^p:H_td\le b_t\},
\]
and define
\begin{equation}
d_t^\star
:=
\operatorname*{arg\,min}_{d\in\mathcal F_t}
\frac12\|d-t_t\|_2^2.
\label{eq:projection-primal}
\end{equation}
This is the predictive feasibility projection used in Proposition~\ref{prop:predictive-feasibility}
and Algorithm~\ref{alg:predictive-feasibility}.

Because $\kappa\ge0$, $0\in\mathcal F_t$. The feasible set is therefore a
nonempty closed convex polyhedron, while the objective in
Equation~\ref{eq:projection-primal} is strictly convex and coercive. Hence
$d_t^\star$ exists and is unique. It is the constrained update closest to
the task proposal, and if $t_t\in\mathcal F_t$ then
$d_t^\star=t_t$.

For completeness, introduce multipliers $\alpha\ge0$. Stationarity of the
Lagrangian gives
\[
d=t_t-H_t^\top\alpha.
\]
The equivalent dual problem is
\begin{equation}
\boxed{
\alpha_t^\star
\in
\operatorname*{arg\,min}_{\alpha\ge0}
\left\{
\frac12\|H_t^\top\alpha\|_2^2
+\alpha^\top(b_t-H_tt_t)
\right\},
\qquad
 d_t^\star=t_t-H_t^\top\alpha_t^\star.
}
\label{eq:projection-dual}
\end{equation}
The primal solution is unique even if the dual multiplier is not. The KKT
conditions show that the correction $t_t-d_t^\star$ is a non-negative
combination of active predictive directions.

\paragraph{Computational complexity.}
Let $p$ be the number of trainable compressor parameters, $m$ the number of
grouped predictive constraints, and $K$ the number of dual-solver iterations.
Given $H_t\in\mathbb R^{m\times p}$, a matrix-free dual solve uses products
with $H_t$ and $H_t^\top$ and costs $O(Kmp)$ time and $O(mp)$ storage. If the
Gram matrix $G_t=H_tH_t^\top$ is cached, forming it costs $O(m^2p)$, the dual
iterations cost $O(Km^2)$, and the final correction $H_t^\top\alpha_t^\star$
costs $O(mp)$, with $O(mp+m^2)$ storage. Thus the optimization is carried out
in the $m$-dimensional constraint space rather than as a $p$-dimensional QP;
here $m$ counts active constraint groups and is much smaller than $p$.

The Euclidean projection theorem gives
\[
(t_t-d_t^\star)^\top(d-d_t^\star)\le0,
\qquad
\forall d\in\mathcal F_t.
\]
Taking $d=0$ yields
\[
t_t^\top d_t^\star
\ge
\|d_t^\star\|_2^2,
\qquad
\|d_t^\star\|_2\le\|t_t\|_2.
\]
Thus the projection does not enlarge the proposed step. Moreover,
$h_g^\top t_t\le\|t_t\|_2$ by Cauchy--Schwarz, so if $\kappa\ge1$ the task
proposal is always feasible, whereas $\kappa=0$ makes every retained
group contribution non-increasing to first order.

Finally, summing Equation~\ref{eq:interface-first-order-change} within
each group and applying primal feasibility gives
\[
\boxed{
\bar q_g^\top d_t^\star
\le
\kappa\|\bar q_g\|_2\|t_t\|_2,
\qquad g=1,\ldots,m,
}
\]
and hence
\[
\Delta_g(d_t^\star)
\le
\kappa\|\bar q_g\|_2\|t_t\|_2
+o(\|d_t^\star\|_2),
\]
where $\Delta_g(d)$ denotes the combined change through group $g$.
Group feasibility does not imply the same bound for each $q_i^\top d_t^\star$.
This is intentionally a first-order guarantee on the implemented predictive
statistic. It is not a claim that each finite stochastic update monotonically
decreases the ideal mutual-information defect.

\paragraph{Summary of the theoretical boundary.}
The main text provides the ideal information-theoretic implication
\[
\text{local predictive deficiency}
\;\Longrightarrow\;
\text{root predictive stability}.
\]
This appendix provides the separate full-covariance local calibration bridge
\[
\begin{aligned}
\text{local predictive deficiency}
&\;\longrightarrow\;\text{finite tests}
\;\longrightarrow\;\text{profiled balanced value}\\
&&\;\longrightarrow\;\text{empirical statistic},
\end{aligned}
\]
with explicit residuals
$\varepsilon_{\rm test}$,
$\varepsilon_{\rm lin}$, and
$\varepsilon_{\rm stat}$. The optional diagonal variant adds
$\varepsilon_{\rm diag}$. The calibration result is stated for a fixed
interface law. Algorithm~\ref{alg:predictive-feasibility} uses the corresponding pooled moment functional
and projects the task proposal to control its interface-wise first-order
variation. No step requires the identification
$\widehat{\mathcal D}_{\mathcal B}=\delta_v$, and the pooled statistic is not
claimed to be an independent estimator of every local predictive deficiency.

\subsection{Finite-Horizon Control along Predictive Feasibility Optimization}
\label{app:training-dynamics}

This subsection gives the finite-horizon training statement associated with the projected updates in Section~\ref{sec:predictive-optimization} and connects that training-time control to Proposition~\ref{prop:root-predictive-stability}.

Proposition~\ref{prop:root-predictive-stability} is pointwise in the compressor
$\Gamma$: it converts small population local predictive deficiencies into a
small discrepancy at the root. Proposition~\ref{prop:predictive-feasibility},
by contrast, controls the linearized change of the pooled empirical statistic
used by the optimizer at one training iterate. This appendix connects the two
statements over a finite optimization horizon. The argument accounts for window refresh and pooled-to-local calibration.

At iteration $t$, write the optimization window as $\mathcal B_t$ and make the
iteration dependence of the grouped sensitivities explicit by writing
$\bar q_{g,t}$. These are the same grouped directions as $\bar q_g$ in
Appendix~\ref{app:interface-constraints}, evaluated at $(\Gamma_t,\mathcal B_t)$.
The host optimizer proposes $t_t$, predictive feasibility projection returns
$d_t^\star$, and Algorithm~\ref{alg:predictive-feasibility} sets
$\Gamma_{t+1}=\Gamma_t+d_t^\star$.

\begin{prop}[Finite-horizon control of the pooled predictive statistic]
\label{prop:finite-horizon-predictive-control}
Suppose that, at every iterate $t$, the retained groups partition the eligible
interfaces entering $\widehat{\mathcal D}_{\mathcal B_t}$, and that
$\Gamma\mapsto\widehat{\mathcal D}_{\mathcal B_t}(\Gamma)$ has an
$L_t$-Lipschitz gradient on the line segment joining $\Gamma_t$ and
$\Gamma_{t+1}$. Then the projected update satisfies
\begin{equation}
\begin{aligned}
\widehat{\mathcal D}_{\mathcal B_t}(\Gamma_{t+1})
\le{}&
\widehat{\mathcal D}_{\mathcal B_t}(\Gamma_t)
+\kappa\|t_t\|_2\sum_{g=1}^{m}\|\bar q_{g,t}\|_2
+\frac{L_t}{2}\|t_t\|_2^2.
\end{aligned}
\label{eq:one-step-pooled-drift}
\end{equation}
Consequently, for any $T\ge1$,
\begin{equation}
\begin{aligned}
\widehat{\mathcal D}_{\mathcal B_T}(\Gamma_T)
\le{}&
\widehat{\mathcal D}_{\mathcal B_0}(\Gamma_0)\\
&+\sum_{t=0}^{T-1}
\left[
\kappa\|t_t\|_2\sum_{g=1}^{m}\|\bar q_{g,t}\|_2
+\frac{L_t}{2}\|t_t\|_2^2
\right]\\
&+\sum_{t=0}^{T-1}
\left|
\widehat{\mathcal D}_{\mathcal B_{t+1}}(\Gamma_{t+1})
-\widehat{\mathcal D}_{\mathcal B_t}(\Gamma_{t+1})
\right|.
\end{aligned}
\label{eq:finite-horizon-pooled-drift}
\end{equation}
\end{prop}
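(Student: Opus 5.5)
The plan has three steps: apply the standard descent lemma along the segment from $\Gamma_t$ to $\Gamma_{t+1}$, bound the resulting linear term with the grouped feasibility constraints, and telescope across iterates while paying explicitly for window refresh.

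\textbf{Descent-lemma expansion.} Because $\nabla\widehat{\mathcal D}_{\mathcal B_t}$ is $L_t$-Lipschitz on the segment, the integral form of Taylor's theorem gives
\[
\widehat{\mathcal D}_{\mathcal B_t}(\Gamma_{t+1})
\le
\widehat{\mathcal D}_{\mathcal B_t}(\Gamma_t)
+\nabla_\Gamma\widehat{\mathcal D}_{\mathcal B_t}(\Gamma_t)^\top d_t^\star
+\frac{L_t}{2}\|d_t^\star\|_2^2 .
\]

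\textbf{Bounding the linear term.} The chain-rule identity of Appendix~\ref{app:interface-constraints} writes the gradient as $\sum_i q_i$. The retained groups partition the eligible interfaces, and groups with $\bar q_{g,t}=0$ contribute nothing. Hence the gradient equals $\sum_{g=1}^m \bar q_{g,t}$. Primal feasibility of $d_t^\star$ in Proposition~\ref{prop:predictive-feasibility} gives $\bar q_{g,t}^\top d_t^\star\le\kappa\|\bar q_{g,t}\|_2\|t_t\|_2$ for each retained group. Summing these inequalities bounds the linear term by $\kappa\|t_t\|_2\sum_g\|\bar q_{g,t}\|_2$.

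\textbf{Bounding the quadratic term.} The non-expansiveness established in the same appendix gives $\|d_t^\star\|_2\le\|t_t\|_2$. This comes from the projection inequality tested at $d=0$, which is feasible because $\kappa\ge0$. Substituting it into the quadratic term yields \eqref{eq:one-step-pooled-drift}.

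\textbf{Finite-horizon bound.} Decompose each increment as
\[
\widehat{\mathcal D}_{\mathcal B_{t+1}}(\Gamma_{t+1})-\widehat{\mathcal D}_{\mathcal B_t}(\Gamma_t)
=\bigl[\widehat{\mathcal D}_{\mathcal B_t}(\Gamma_{t+1})-\widehat{\mathcal D}_{\mathcal B_t}(\Gamma_t)\bigr]
+\bigl[\widehat{\mathcal D}_{\mathcal B_{t+1}}(\Gamma_{t+1})-\widehat{\mathcal D}_{\mathcal B_t}(\Gamma_{t+1})\bigr].
\]
Bound the first bracket by the one-step result and the second by its absolute value. Summing over $t=0,\dots,T-1$ telescopes the left side to $\widehat{\mathcal D}_{\mathcal B_T}(\Gamma_T)-\widehat{\mathcal D}_{\mathcal B_0}(\Gamma_0)$, which gives \eqref{eq:finite-horizon-pooled-drift}.

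\textbf{Main obstacle.} The delicate step is identifying the full gradient with the sum of the grouped $\bar q_{g,t}$. This needs three things. First, the stop-gradient reference term $\widehat J_{\mathrm{bal},\mathcal B}(U)$ must contribute no $\Gamma$-dependence, so that the derivative passes only through the compressed states $z_i^\Gamma$. Second, every $z_i^\Gamma$ entering the pooled statistic must lie in some group, which is exactly the partition hypothesis. Third, ``retained'' groups must include all nonzero ones, so no interface is silently dropped. Once these are checked, the rest is routine.
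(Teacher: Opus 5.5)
Your proposal is correct and follows essentially the same route as the paper's proof: the descent lemma on the segment, the chain-rule identity $\nabla_\Gamma\widehat{\mathcal D}_{\mathcal B_t}(\Gamma_t)=\sum_{g=1}^m\bar q_{g,t}$ combined with primal feasibility for the linear term, $\|d_t^\star\|_2\le\|t_t\|_2$ for the quadratic term, and adding and subtracting $\widehat{\mathcal D}_{\mathcal B_t}(\Gamma_{t+1})$ before summing over $t$. Your caveat about the stop-gradient reference term is reasonable, since the paper leaves that point implicit in its chain-rule identity, but it does not change the argument.
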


\begin{proof}
For a fixed window $\mathcal B_t$, the chain-rule decomposition in
Appendix~\ref{app:interface-constraints} gives
\[
\nabla_\Gamma\widehat{\mathcal D}_{\mathcal B_t}(\Gamma_t)
=\sum_{g=1}^{m}\bar q_{g,t}.
\]
By $L_t$-smoothness,
\[
\widehat{\mathcal D}_{\mathcal B_t}(\Gamma_{t+1})
\le
\widehat{\mathcal D}_{\mathcal B_t}(\Gamma_t)
+\sum_{g=1}^{m}\bar q_{g,t}^{\top}d_t^\star
+\frac{L_t}{2}\|d_t^\star\|_2^2.
\]
Primal feasibility gives
$\bar q_{g,t}^{\top}d_t^\star
\le\kappa\|\bar q_{g,t}\|_2\|t_t\|_2$
for every retained group, while the projection property established in
Appendix~\ref{app:interface-constraints} gives
$\|d_t^\star\|_2\le\|t_t\|_2$.
Substitution proves Equation~\ref{eq:one-step-pooled-drift}.

When the optimization window is refreshed, add and subtract
$\widehat{\mathcal D}_{\mathcal B_t}(\Gamma_{t+1})$ to obtain
\[
\begin{aligned}
\widehat{\mathcal D}_{\mathcal B_{t+1}}(\Gamma_{t+1})
\le{}&
\widehat{\mathcal D}_{\mathcal B_t}(\Gamma_t)
+\kappa\|t_t\|_2\sum_{g=1}^{m}\|\bar q_{g,t}\|_2
+\frac{L_t}{2}\|t_t\|_2^2\\
&+\left|
\widehat{\mathcal D}_{\mathcal B_{t+1}}(\Gamma_{t+1})
-\widehat{\mathcal D}_{\mathcal B_t}(\Gamma_{t+1})
\right|.
\end{aligned}
\]
Summing this inequality from $t=0$ to $T-1$ gives
Equation~\ref{eq:finite-horizon-pooled-drift}.
\end{proof}

Equation~\ref{eq:finite-horizon-pooled-drift} is a finite-horizon drift bound,
not a monotonicity statement. The first term in each increment is the allowed
first-order margin, the quadratic term is the fixed-window linearization
remainder, and the final sum records the change caused by refreshing the
optimization window. In particular, the projection prevents an unconstrained
accumulation of first-order predictive drift only to the extent quantified by
these terms.

\begin{corollary}[Connection to root predictive stability]
\label{cor:training-to-root-stability}
Let $\mathcal B_T^{\rm eval}$ be an evaluation sample independent of the
training trajectory. Suppose the conditions of
Corollary~\ref{cor:controlled-surrogate} hold at $\Gamma_T$ for every compressed
interface jointly on an event of probability at least $1-\eta$. Then, on that
event, for every $v\in V_{\mathrm{comp}}$,
\begin{equation}
\begin{aligned}
\delta_v(\Gamma_T)
\le{}&
 c_v\widehat{\mathcal D}_{\mathcal B_T}(\Gamma_T)
+c_v\left|
\widehat{\mathcal D}_{v,\mathcal B_T^{\rm eval}}(\Gamma_T)
-\widehat{\mathcal D}_{\mathcal B_T}(\Gamma_T)
\right|\\
&+\varepsilon_{{\rm test},v}(\Gamma_T)
+c_v\Big(
\varepsilon_{{\rm lin},v}(\Gamma_T)
+\varepsilon_{{\rm stat},v}
\Big).
\end{aligned}
\label{eq:dynamic-local-deficiency-control}
\end{equation}
Consequently, under (M1)--(M3),
\begin{equation}
\begin{aligned}
&\mathbb E\!\left[
\operatorname{TV}\!\left(
\mathcal L(Y_\rho\mid U,C),
\mathcal L(Y_\rho\mid Z^{\Gamma_T},C)
\right)
\right]\\
&\quad\le
\Bigg\{\frac12
\sum_{v\in V_{\mathrm{comp}}}
\Bigg[
 c_v\widehat{\mathcal D}_{\mathcal B_T}(\Gamma_T)
+c_v\left|
\widehat{\mathcal D}_{v,\mathcal B_T^{\rm eval}}(\Gamma_T)
-\widehat{\mathcal D}_{\mathcal B_T}(\Gamma_T)
\right|\\
&\hspace{8.8em}
+\varepsilon_{{\rm test},v}(\Gamma_T)
+c_v\Big(
\varepsilon_{{\rm lin},v}(\Gamma_T)
+\varepsilon_{{\rm stat},v}
\Big)
\Bigg]\Bigg\}^{1/2}.
\end{aligned}
\label{eq:finite-horizon-root-control}
\end{equation}
Combining Equation~\ref{eq:finite-horizon-root-control} with
Equation~\ref{eq:finite-horizon-pooled-drift} gives an explicit finite-horizon
connection from the projected training updates to the population root
stability bound.
\end{corollary}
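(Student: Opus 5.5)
The bound splits into two parts. First, a per-interface estimate \eqref{eq:dynamic-local-deficiency-control}. Second, substitution into Proposition~\ref{prop:root-predictive-stability}.

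For the first part, fix $v\in V_{\mathrm{comp}}$ and work on the event of probability at least $1-\eta$. On that event Corollary~\ref{cor:controlled-surrogate} holds at $\Gamma_T$ for every compressed interface. Apply it with the evaluation window $\mathcal B_T^{\rm eval}$ rather than the training window. This choice is essential. Lemma~\ref{lem:finite-sample-perturbation} needs $\Gamma$ to be fixed independently of the sampling units. $\Gamma_T$ is a function of the training windows $\mathcal B_0,\dots,\mathcal B_{T-1}$ and possibly of $\mathcal B_T$, but it is independent of $\mathcal B_T^{\rm eval}$ by assumption. Conditioning on the training trajectory therefore makes $\Gamma_T$ a fixed compressor, and the corollary gives
\[
\delta_v(\Gamma_T)\le c_v\widehat{\mathcal D}_{v,\mathcal B_T^{\rm eval}}(\Gamma_T)+\varepsilon_{{\rm test},v}(\Gamma_T)+c_v\big(\varepsilon_{{\rm lin},v}(\Gamma_T)+\varepsilon_{{\rm stat},v}\big).
\]
Next, add and subtract the pooled training statistic:
\[
\widehat{\mathcal D}_{v,\mathcal B_T^{\rm eval}}(\Gamma_T)\le \widehat{\mathcal D}_{\mathcal B_T}(\Gamma_T)+\big|\widehat{\mathcal D}_{v,\mathcal B_T^{\rm eval}}(\Gamma_T)-\widehat{\mathcal D}_{\mathcal B_T}(\Gamma_T)\big|.
\]
Multiplying by $c_v>0$ gives \eqref{eq:dynamic-local-deficiency-control}. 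The absolute-value term is what absorbs the mismatch from pooling across interfaces and from using the training window instead of a stratum-wise estimate. No claim that the pooled statistic estimates $\delta_v$ is required.

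For the second part, invoke Proposition~\ref{prop:root-predictive-stability} at $\Gamma=\Gamma_T$ under (M1)--(M3). It bounds the expected TV by $\{\tfrac12\sum_{v\neq\rho}\delta_v(\Gamma_T)\}^{1/2}$. At uncompressed interfaces $Z_v=U_v$, so $\delta_v=I(S_v;U_v\mid U_v,C)=0$, and the sum runs over $V_{\mathrm{comp}}$ only. The map $x\mapsto\sqrt{x/2}$ is monotone, and every term of the sum is nonnegative, with the right-hand side of \eqref{eq:dynamic-local-deficiency-control} dominating a nonnegative quantity. Inserting the per-interface bounds termwise therefore yields \eqref{eq:finite-horizon-root-control}.

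The root bound is a deterministic population statement about $\Gamma_T$. To be careful, it should be read conditionally on the training trajectory, since $\Gamma_T$ is random while the expectation in the TV bound is over the population law. The bound then holds on the stated event.

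The final remark combines the result with Proposition~\ref{prop:finite-horizon-predictive-control}. Bound $\widehat{\mathcal D}_{\mathcal B_T}(\Gamma_T)$ by the right-hand side of \eqref{eq:finite-horizon-pooled-drift} and substitute, again using monotonicity of the square root.

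The main obstacle is justifying that the corollary's hypotheses hold at the data-dependent $\Gamma_T$. This is why the statement assumes them jointly on an event and introduces an independent evaluation sample. I would handle it by the conditioning argument above, and I would not attempt a uniform-in-$\Gamma$ concentration bound.
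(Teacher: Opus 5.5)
Your proposal is correct and matches the paper's proof step for step. You apply Corollary~\ref{cor:controlled-surrogate} on the independent sample $\mathcal B_T^{\rm eval}$, add and subtract $c_v\widehat{\mathcal D}_{\mathcal B_T}(\Gamma_T)$ and bound the difference by its absolute value, note that uncompressed interfaces contribute zero, and insert the summed local bounds into Proposition~\ref{prop:root-predictive-stability}. Your extra points are sound elaborations of details the paper leaves implicit: conditioning on the training trajectory so that $\Gamma_T$ is a fixed compressor independent of $\mathcal B_T^{\rm eval}$, using $c_v>0$, and using monotonicity of the square root.
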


\begin{proof}
Apply Corollary~\ref{cor:controlled-surrogate} on the independent evaluation
sample $\mathcal B_T^{\rm eval}$ and add and subtract
$c_v\widehat{\mathcal D}_{\mathcal B_T}(\Gamma_T)$. Taking the absolute value
of the resulting pooled-to-local difference gives
Equation~\ref{eq:dynamic-local-deficiency-control}. At uncompressed interfaces,
$Z_v^{\Gamma_T}=U_v$ and the local predictive deficiency is zero. Summing the local bounds and applying Proposition~\ref{prop:root-predictive-stability} yields Equation~\ref{eq:finite-horizon-root-control}.
\end{proof}

The additional pooled-to-local term in
Equation~\ref{eq:dynamic-local-deficiency-control} is essential: the pooled
training statistic is not identified with an independent estimator of every
$\delta_v$. Thus the dynamic bridge requires no stronger identification than
the one already used in Appendix~\ref{app:tractable-local-objective}. It states
precisely when the optimization control is informative for the population
root guarantee: the cumulative projected drift, window-refresh discrepancy,
local calibration residuals, and pooled-to-local discrepancy must remain
controlled.

\subsection{Proof of Root Predictive Stability}
\label{app:root-predictive-stability}

This subsection proves Proposition~\ref{prop:root-predictive-stability} from Section~\ref{sec:predictive-compression-learning}.

This appendix proves Proposition~\ref{prop:root-predictive-stability}
under (M1)--(M3) of Appendix~\ref{app:recursive-closure}. In particular,
the host state $U_v$ may depend on the incoming compressed child tuple
$R_v^\Gamma=Z_{\operatorname{ch}(v)}^\Gamma$. We use the induced joint law
for a fixed measurable $\Gamma$ and condition throughout on the full retained
context $C$. All conditional kernels exist on standard Borel state spaces.

\begin{proof}[Proof of Proposition~\ref{prop:root-predictive-stability}]
Abbreviate $Z_v^\Gamma$ by $Z_v$, let $Y_V=(Y_v)_{v\in V}$, and set
\[
\Delta_{\mathrm{loc}}(\Gamma)
:=\sum_{v\ne\rho}I(S_v;U_v\mid Z_v,C).
\]
An uncompressed interface has $Z_v=U_v$ and contributes zero. If the sum is
infinite, the claimed bound is immediate, so assume it is finite.

\medskip\noindent\textbf{1. Factorization conditional on the predictive
tree and all compressed states.}
The child-before-parent construction in (M1)--(M2) has conditional joint
kernel
\[
\Pi_\rho(dy_\rho\mid c)
\prod_{v\ne\rho}Q_v(dy_v\mid y_{\operatorname{pa}(v)},c)
\prod_{v\ne\rho}
\widetilde K_v(du_v,dz_v\mid s_v,r_v,c),
\]
where the products denote the finite child-before-parent kernel
composition and $r_v=z_{\operatorname{ch}(v)}$. For each local kernel,
write its disintegration with respect to the compressed coordinate as
\[
\widetilde K_v(du_v,dz_v\mid s_v,r_v,c)
=K_v^\Gamma(dz_v\mid s_v,r_v,c)\,
L_v(du_v\mid s_v,z_v,r_v,c),
\]
where $K_v^\Gamma$ is the $Z_v$ marginal and $L_v$ is a version of
$P_{U_v\mid S_v,Z_v,R_v,C}$. In the recursive composition, every
$K_v^\Gamma$ factor depends only on $(y,z,c)$: its incoming state $r_v$
is a coordinate of the full $z$ tuple. The remaining $u$ factors are the
$L_v$ kernels and have disjoint output coordinates. Disintegrating the joint
law with respect to $(Y_V,Z,C)$ therefore gives
\begin{equation}
P_{U\mid Y_V,Z,C}(du\mid y,z,c)
=\bigotimes_{v\ne\rho}
P_{U_v\mid S_v,Z_v,R_v,C}(du_v\mid s_v,z_v,r_v,c)
\quad P_{Y_V,Z,C}\text{-a.s.}
\label{eq:recursive-posterior-factorization}
\end{equation}
This conditions on the \emph{full} compressed state tuple. It allows
$U_v$ to depend on its incoming child states and makes no independence claim
for the $U_v$'s given only $(Y_V,C)$.

\medskip\noindent\textbf{2. Compare with a reference that retains the
incoming child states.}
For fixed $(z,c)$, define
\[
R_{U\mid Z,C}(du\mid z,c)
:=\bigotimes_{v\ne\rho}
P_{U_v\mid Z_v,R_v,C}(du_v\mid z_v,r_v,c).
\]
By Equation~\ref{eq:recursive-posterior-factorization} and additivity of
relative entropy for finite products,
\begin{equation}
\begin{aligned}
\mathbb E\!\left[
D_{\mathrm{KL}}\!\left(P_{U\mid Y_V,Z,C}\,
\middle\|\,R_{U\mid Z,C}\right)\right]
&=\sum_{v\ne\rho}I(S_v;U_v\mid Z_v,R_v,C)\\
&=I(Y_V;U\mid Z,C)
+\mathbb E\!\left[
D_{\mathrm{KL}}\!\left(P_{U\mid Z,C}\,
\middle\|\,R_{U\mid Z,C}\right)\right].
\end{aligned}
\label{eq:recursive-information-decomposition}
\end{equation}
The second equality follows by inserting $P_{U\mid Z,C}$ into the
log-density ratio and averaging over $Y_V$ conditional on $(Z,C)$. The
finite-defect assumption below guarantees that this calculation has no
indeterminate $\infty-\infty$ term. In particular,
$I(Y_V;U\mid Z,C)$ is at most the sum in the first line.

For each compressed interface, (M3) and the fact that $Z_v$ is a function
of $U_v$ give $I(S_v;R_v\mid U_v,Z_v,C)=0$. The chain rule in its two orders
therefore gives
\[
\begin{aligned}
I(S_v;U_v\mid Z_v,R_v,C)
&=I(S_v;U_v\mid Z_v,C)
-I(S_v;R_v\mid Z_v,C)\\
&\le I(S_v;U_v\mid Z_v,C).
\end{aligned}
\]
At an uncompressed node, the left and right sides are both zero without
using (M3). Combining these facts with
Equation~\ref{eq:recursive-information-decomposition} and the chain rule
from $Y_V$ to its coordinate $Y_\rho$ yields
\begin{equation}
I(Y_\rho;U\mid Z^\Gamma,C)
\le I(Y_V;U\mid Z^\Gamma,C)
\le\sum_{v\ne\rho}I(S_v;U_v\mid Z_v^\Gamma,C).
\label{eq:root-cmi-local-defects}
\end{equation}

\medskip\noindent\textbf{3. Pass from conditional information to total
variation.}
Because $Z^\Gamma$ is a measurable function of $U$,
\[
I(Y_\rho;U\mid Z^\Gamma,C)
=\mathbb E D_{\mathrm{KL}}\!\left(
P_{Y_\rho\mid U,C}\,\middle\|\,
P_{Y_\rho\mid Z^\Gamma,C}\right).
\]
Conditional Pinsker and Jensen's inequality imply
\begin{equation}
\begin{aligned}
&\mathbb E\!\left[\operatorname{TV}\!\left(
\mathcal L(Y_\rho\mid U,C),
\mathcal L(Y_\rho\mid Z^\Gamma,C)\right)\right]\\
&\quad\le
\left\{\frac12 I(Y_\rho;U\mid Z^\Gamma,C)\right\}^{1/2}\\
&\quad\le
\left\{\frac12\sum_{v\ne\rho}
I(S_v;U_v\mid Z_v^\Gamma,C)\right\}^{1/2}.
\end{aligned}
\label{eq:root-tv-final-bound}
\end{equation}
This is precisely Proposition~\ref{prop:root-predictive-stability}.
\end{proof}

\paragraph{Scope of the optimization statement.}
The bound holds for each fixed $\Gamma$ under (M1)--(M3). The empirical
calibration in Corollary~\ref{cor:controlled-surrogate} additionally assumes
test coverage, readout and covariance conditions, and a suitable independent
evaluation sample. Proposition~\ref{prop:predictive-feasibility} controls the
linearized change of the training statistic; by itself it gives no bound on
the value of that statistic at a later iterate, particularly when the margin
is positive or projection is not used. 

\makeatletter
\@ifundefined{theorem}{\newtheorem{theorem}{Theorem}[section]}{}
\@ifundefined{corollary}{\newtheorem{corollary}[theorem]{Corollary}}{}
\@ifundefined{proof}{%
  \newenvironment{proof}
    {\par\noindent\textit{Proof.}\ }%
    {\hfill\mbox{\rule{0.55em}{0.55em}}\par}%
}{}
\makeatother

\subsection{Root-Target Supervision Does Not Imply Local Predictive Sufficiency}
\label{app:endpoint-task-quotient}

This subsection formalizes the limitation of endpoint-only supervision discussed in Section~\ref{sec:predictive-compression-learning}: matching the root-target posterior does not, in general, imply local predictive sufficiency at recursive interfaces.

This appendix isolates the limited claim used in the main text: root-target sufficiency does not, in general, imply local predictive sufficiency
for the adjacent predictive pair.  We use the same pre-compression state
$U_v$ as in the main text and the preceding appendices; no additional
source state is introduced.

Fix a compressed node $v$ and suppress the subscript $v$ when no confusion
can arise.  Let
\[
S:=S_v=(Y_v,Y_{\operatorname{pa}(v)}),
\qquad
U:=U_v,
\qquad
Z^\Gamma:=\Gamma(U),
\]
and let $T:=Y_\rho$ denote the root target.  Define the probability-valued
posterior states
\[
\mathsf R:=P_{S\mid U,C},
\qquad
\mathsf Q:=P_{T\mid U,C}.
\]
We assume that the root-target posterior is a measurable readout of the
local recursive posterior,
\begin{equation}
\mathsf Q=\Lambda(\mathsf R,C)
\qquad P\text{-a.s.}
\label{eq:endpoint-posterior-readout}
\end{equation}
for some measurable $\Lambda$.  A sufficient condition is the downstream
Markov relation
\[
U\longrightarrow S\longrightarrow T
\qquad\text{conditional on }C,
\]
because the root-target posterior is then obtained by integrating the downstream
task kernel against $P_{S\mid U,C}$.

For any representation $Z$ generated from $(U,C)$ without access to $(S,T)$,
define
\[
\delta_S(Z):=I(S;U\mid Z,C),
\qquad
\delta_T(Z):=I(T;U\mid Z,C).
\]
In particular,
$\delta_S(Z^\Gamma)=I(S_v;U_v\mid Z_v^\Gamma,C)=\delta_v(\Gamma)$.

\begin{theorem}[Task-posterior quotient]
\label{thm:task-posterior-quotient}
Under Equation~\ref{eq:endpoint-posterior-readout}, define
\[
\Delta
:=
I(S;U\mid \mathsf Q,C).
\]
Then
\begin{equation}
\boxed{
\Delta
=
I(S;\mathsf R\mid\mathsf Q,C)
=
\mathbb E\!\left[
D_{\mathrm{KL}}\!\left(
\mathsf R
\,\middle\|\,
P_{S\mid\mathsf Q,C}
\right)
\right],
\qquad
P_{S\mid\mathsf Q,C}
=
\mathbb E[\mathsf R\mid\mathsf Q,C].
}
\label{eq:endpoint-quotient-gap}
\end{equation}
Moreover,
\[
\Delta=0
\quad\Longleftrightarrow\quad
\mathsf R
\text{ is measurable with respect to }
\sigma(\mathsf Q,C)
\quad P\text{-a.s.}
\]

The root-target posterior itself is root-target sufficient but can fail to be
locally predictively sufficient:
\begin{equation}
\boxed{
\delta_T(\mathsf Q)=0,
\qquad
\delta_S(\mathsf Q)=\Delta.
}
\label{eq:endpoint-counterexample}
\end{equation}
Hence, whenever $\Delta>0$, root-target sufficiency does not imply recursive
predictive sufficiency.  In particular, under such a law there is no
distribution-free function $g$ with $g(0)=0$ satisfying
\begin{equation}
\delta_S(Z)\le g(\delta_T(Z))
\label{eq:no-endpoint-reverse-control}
\end{equation}
for every admissible representation $Z$.
\end{theorem}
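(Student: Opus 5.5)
The plan is to work entirely with the posterior-valued statistics $\mathsf R=P_{S\mid U,C}$ and $\mathsf Q=P_{T\mid U,C}$. I would first establish that $\mathsf R$ is a sufficient statistic of $U$ for $S$ given $C$, meaning $S\perp U\mid(\mathsf R,C)$. This follows because the conditional law of $S$ given $(U,C)$ is $\mathsf R$, which is $\sigma(\mathsf R,C)$-measurable, so $P_{S\mid U,C}=P_{S\mid\mathsf R,C}$. Next, by Equation~\ref{eq:endpoint-posterior-readout}, $\mathsf Q$ is a measurable function of $(\mathsf R,C)$, hence of $(U,C)$.

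Apply the chain rule twice. First, $I(S;U\mid\mathsf Q,C)=I(S;U,\mathsf R\mid\mathsf Q,C)$, since $\mathsf R$ is a function of $(U,C)$. Second, expand this as $I(S;\mathsf R\mid\mathsf Q,C)+I(S;U\mid\mathsf R,\mathsf Q,C)$. The second term equals $I(S;U\mid\mathsf R,C)=0$, because $\sigma(\mathsf R,\mathsf Q,C)=\sigma(\mathsf R,C)$ and $\mathsf R$ is sufficient. This gives $\Delta=I(S;\mathsf R\mid\mathsf Q,C)$.

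For the KL form, write $I(S;U\mid\mathsf Q,C)=\mathbb E\,D_{\mathrm{KL}}(P_{S\mid U,\mathsf Q,C}\,\|\,P_{S\mid\mathsf Q,C})$. Here $P_{S\mid U,\mathsf Q,C}=\mathsf R$. The tower property gives $P_{S\mid\mathsf Q,C}=\mathbb E[\mathsf R\mid\mathsf Q,C]$, understood as a mixture of measures on standard Borel spaces.

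For the characterization of $\Delta=0$, argue in both directions. If $\Delta=0$, the KL divergence vanishes almost surely, so $\mathsf R=P_{S\mid\mathsf Q,C}$, which is $\sigma(\mathsf Q,C)$-measurable. Conversely, if $\mathsf R$ is $\sigma(\mathsf Q,C)$-measurable, then $\mathbb E[\mathsf R\mid\mathsf Q,C]=\mathsf R$ and the KL divergence is zero. For the measure-valued conditional expectation, I would check measurability on a countable determining class of sets.

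For Equation~\ref{eq:endpoint-counterexample}, take $Z=\mathsf Q$. This is generated from $(U,C)$ without access to $(S,T)$, so it is admissible. The quantity $\delta_S(\mathsf Q)=I(S;U\mid\mathsf Q,C)=\Delta$ holds by definition. For $\delta_T(\mathsf Q)=0$, repeat the sufficiency argument with $T$ in place of $S$: $P_{T\mid U,C}=\mathsf Q$ is $\sigma(\mathsf Q,C)$-measurable, so $T\perp U\mid(\mathsf Q,C)$.

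Finally, if $\Delta>0$ and some $g$ with $g(0)=0$ satisfied Equation~\ref{eq:no-endpoint-reverse-control}, then taking $Z=\mathsf Q$ would give $\Delta\le g(0)=0$, a contradiction.

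The main obstacle I expect is making the sufficiency step rigorous when $\mathsf R$ takes values in a space of probability measures. I would equip the space of probability measures on the (standard Borel) state space of $S$ with its Borel $\sigma$-algebra under the weak topology, so that it is itself standard Borel. This makes the mutual-information chain rule and the regular conditional laws available. I would also avoid $\infty-\infty$ issues by noting that all terms in the chain-rule expansion are nonnegative and that the vanishing term is exactly zero.
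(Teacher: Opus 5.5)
Your proposal is correct and follows essentially the same route as the paper: posterior sufficiency of $\mathsf R$ for $S$ (and of $\mathsf Q$ for $T$), reduction of $\Delta$ to $I(S;\mathsf R\mid\mathsf Q,C)$, the expected-KL representation with $P_{S\mid\mathsf Q,C}=\mathbb E[\mathsf R\mid\mathsf Q,C]$ via the tower property, and the choice $Z=\mathsf Q$ to rule out any $g$ with $g(0)=0$. Your explicit two-step chain-rule derivation of the first identity only spells out what the paper asserts directly from the measurability relations.
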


\begin{proof}
Because $\mathsf R=P_{S\mid U,C}$, the posterior state $\mathsf R$ is
sufficient for $S$ relative to $U$:
\[
S\perp U\mid(\mathsf R,C).
\]
Likewise, because $\mathsf Q=P_{T\mid U,C}$,
\[
T\perp U\mid(\mathsf Q,C).
\]
Since $\mathsf R$ and $\mathsf Q$ are measurable functions of $(U,C)$ and
$\mathsf Q$ is measurable from $(\mathsf R,C)$ under
Equation~\ref{eq:endpoint-posterior-readout},
\[
I(S;U\mid\mathsf Q,C)
=
I(S;\mathsf R\mid\mathsf Q,C).
\]
The conditional mutual-information representation then gives
\[
I(S;\mathsf R\mid\mathsf Q,C)
=
\mathbb E\!\left[
D_{\mathrm{KL}}\!\left(
P_{S\mid\mathsf R,\mathsf Q,C}
\,\middle\|\,
P_{S\mid\mathsf Q,C}
\right)
\right].
\]
Because $\mathsf R$ is the posterior of $S$ given $(U,C)$ and
$\mathsf Q$ is a function of $(\mathsf R,C)$,
$P_{S\mid\mathsf R,\mathsf Q,C}=\mathsf R$ almost surely.  The tower
property gives
$P_{S\mid\mathsf Q,C}=\mathbb E[\mathsf R\mid\mathsf Q,C]$, proving
Equation~\ref{eq:endpoint-quotient-gap}.  The same identity shows that
$\Delta=0$ exactly when
$\mathsf R=P_{S\mid\mathsf Q,C}$ almost surely, equivalently when
$\mathsf R$ is $\sigma(\mathsf Q,C)$-measurable.

Taking $Z=\mathsf Q$, posterior sufficiency gives
\[
\delta_T(\mathsf Q)
=
I(T;U\mid\mathsf Q,C)
=
0,
\]
while by definition
\[
\delta_S(\mathsf Q)
=
I(S;U\mid\mathsf Q,C)
=
\Delta.
\]
Thus, if $\Delta>0$, $Z=\mathsf Q$ is an explicit representation that is
sufficient for the root target but not for the adjacent predictive pair.  Applying any proposed bound
$\delta_S(Z)\le g(\delta_T(Z))$ with $g(0)=0$ to this representation yields
$\Delta\le0$, a contradiction.  This proves
Equation~\ref{eq:no-endpoint-reverse-control}.
\end{proof}

\paragraph{An explicit compressor of the main-text form.}
Let $C$ be constant, let $A$ and $B$ be independent fair bits, and take
$U=(A,B)$, $S=(A,B)$, and $T=A$. The one-bit map $\Gamma(U)=A$ gives
$Z^\Gamma=\mathsf Q=A$. Hence $I(T;U\mid Z^\Gamma)=0$ while
$I(S;U\mid Z^\Gamma)=H(B)=\log 2$ (in nats). This example establishes the
claim within the class $Z^\Gamma=\Gamma(U)$.

\begin{corollary}[One-way control under a downstream task channel]
\label{cor:one-way-task-control}
If
$U\to S\to T$ conditional on $C$, then for every admissible representation
$Z$,
\begin{equation}
\boxed{
\delta_S(Z)
=
\delta_T(Z)
+
I(S;U\mid T,Z,C)
\ge
\delta_T(Z).
}
\label{eq:one-way-defect-decomposition}
\end{equation}
Local predictive sufficiency implies root-target sufficiency, but Theorem~\ref{thm:task-posterior-quotient} gives a counterexample with $\Delta>0$.
\end{corollary}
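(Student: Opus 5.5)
The plan is a single application of the chain rule for conditional mutual information, expanding $I(S,T;U\mid Z,C)$ in two orders. Fix an admissible representation $Z$ generated from $(U,C)$ without access to $(S,T)$. This gives the Markov structure $(S,T)\to(U,C)\to Z$, so conditioning on $Z$ in addition to $(U,C)$ changes nothing. First I will expand
\[
I(S,T;U\mid Z,C)=I(S;U\mid Z,C)+I(T;U\mid S,Z,C),
\]
and then in the other order,
\[
I(S,T;U\mid Z,C)=I(T;U\mid Z,C)+I(S;U\mid T,Z,C).
\]
The target identity in Equation~\ref{eq:one-way-defect-decomposition} follows by equating the two expansions, once I show $I(T;U\mid S,Z,C)=0$.

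\textbf{Key step: the vanishing term.} I need $T\perp U\mid(S,Z,C)$. The hypothesis $U\to S\to T$ given $C$ says $T\perp U\mid(S,C)$. Since $Z$ is generated from $(U,C)$ without access to $(S,T)$, the pair $(U,Z)$ relates to $T$ only through $U$. So the hypothesis extends to $T\perp(U,Z)\mid(S,C)$.

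This extension is the main subtle point. If $Z=\Gamma(U)$ is a measurable function, it is immediate, because $\sigma(U,Z,C)=\sigma(U,C)$. If $Z$ uses auxiliary randomness, I will assume that randomness is independent of $(S,T)$ given $(U,C)$, which is what ``without access'' means. Then I will argue at the level of kernels: the joint law factorizes as $P(dU,dS\mid C)\,P(dT\mid S,C)\,P(dZ\mid U,C)$, and this factorization gives $T\perp(U,Z)\mid(S,C)$.

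Given that, the weak union property yields $T\perp U\mid(S,Z,C)$, and hence $I(T;U\mid S,Z,C)=0$.

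\textbf{Conclusion.} Equating the two chain-rule expansions gives $\delta_S(Z)=\delta_T(Z)+I(S;U\mid T,Z,C)$, and nonnegativity of conditional mutual information gives $\delta_S(Z)\ge\delta_T(Z)$. The final sentence of the corollary then reads off directly: $\delta_S(Z)=0$ forces $\delta_T(Z)=0$, so local predictive sufficiency implies root-target sufficiency. The converse fails by Theorem~\ref{thm:task-posterior-quotient}: the Markov relation $U\to S\to T$ is exactly the sufficient condition for Equation~\ref{eq:endpoint-posterior-readout}, so the representation $Z=\mathsf Q$ has $\delta_T=0$ and $\delta_S=\Delta>0$.

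One technical caveat: if some term is infinite, the chain-rule identities should be read in the extended reals. The inequality $\delta_S\ge\delta_T$ holds in all cases, and the equality is meaningful whenever $\delta_S<\infty$.
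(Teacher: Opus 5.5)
Your proposal is correct and matches the paper's proof. The paper also derives $T\perp(U,Z)\mid(S,C)$ from the downstream Markov relation and the admissibility of $Z$, concludes $I(T;U\mid S,Z,C)=0$, and equates the two chain-rule expansions of $I(S,T;U\mid Z,C)$. Your kernel-level justification for a randomized $Z$ and your extended-real caveat simply spell out steps the paper states without detail.
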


\begin{proof}
The downstream Markov relation and admissibility of $Z$ imply
\[
T\perp(U,Z)\mid(S,C),
\]
and hence $I(T;U\mid S,Z,C)=0$.  Applying the chain rule in the two orders
gives
\[
\begin{aligned}
I(S,T;U\mid Z,C)
&=
I(S;U\mid Z,C)
+
I(T;U\mid S,Z,C)\\
&=
\delta_S(Z),
\end{aligned}
\]
and
\[
I(S,T;U\mid Z,C)
=
I(T;U\mid Z,C)
+
I(S;U\mid T,Z,C).
\]
Equating the two expressions proves
Equation~\ref{eq:one-way-defect-decomposition}.
\end{proof}

\paragraph{Relation to the main-text claim.}
At node $v$, take $S=S_v$, $U=U_v$, and $T=Y_\rho$.  When the downstream task posterior collapses
distinct local recursive posteriors, $\Delta_v>0$.  Then the representation
$Z_v=\mathsf Q_v$ is perfectly sufficient for the root target while
\[
I(S_v;U_v\mid Z_v,C)=\Delta_v>0.
\]
Therefore root-target supervision alone provides no distribution-free reverse
control of the local predictive deficiency.  This is the only conclusion
from this appendix used by the main text; no minimum-rate information
bottleneck comparison and no additive joint-training guarantee are required.

\end{document}